\documentclass[aoas]{imsart}
\usepackage{bm}

\RequirePackage{amsthm,amsmath,amsfonts,amssymb}
\RequirePackage[authoryear]{natbib}
\usepackage{url}
\usepackage{makecell}
\usepackage{enumitem}

\usepackage{amsmath}
\usepackage{newfloat}
\usepackage{mathtools}
\usepackage{placeins}
\usepackage{amsmath}
\usepackage{amssymb}
\usepackage{amsfonts}
\usepackage{multirow}
\usepackage{bm}
\usepackage{amsthm}
\usepackage{array}
\usepackage{graphicx}
\usepackage{graphics}
\usepackage{epsfig,latexsym,verbatim}
\usepackage{color}
\usepackage{multicol}
\usepackage{threeparttable}
\usepackage{float}
\usepackage{cancel}
\usepackage{ulem}
\usepackage{rotating}
\usepackage{graphicx}  
\usepackage{subcaption}  
\usepackage{algorithm}
\usepackage{algorithmic}
\usepackage{amsthm}
\def\bs{\boldsymbol}
\theoremstyle{plain}
\newtheorem{theorem}{Theorem}[section]

\newtheorem{lemma}[theorem]{Lemma}

\theoremstyle{definition}

\newtheorem{remark}[theorem]{Remark}
\usepackage{xcolor}  
\def\bs{\boldsymbol}
\DeclareMathOperator*{\argmax}{arg\,max}

\definecolor{shancolor}{HTML}{0072B2}

\newcommand{\Kappa}{\mathrm{K}}
\newcommand{\secref}[1]{\hyperref[#1]{Section~\ref*{#1}}}
\usepackage{hyperref}

\startlocaldefs

\endlocaldefs

\begin{document}

\begin{frontmatter}
\title{Structured Nonparametric Variational Inference for Dependent Latent Modeling}
\runtitle{Structured Nonparametric Variational Inference}

\begin{aug}
\author[A]{\fnms{Yuda}~\snm{Shao}\ead[label=e1]{ys7zg@virginia.edu}},
\author[B]{\fnms{Zhiling}~\snm{Gu}\ead[label=e2]{zhiling.gu@yale.edu}}
\and
\author[C]{\fnms{Shan}~\snm{Yu}\ead[label=e3]{sy5jx@virginia.edu}}
\address[A]{Department of Statistics, University of Virginia\printead[presep={,\ }]{e1}}

\address[B]{Department of Biostatistics, Yale University\printead[presep={,\ }]{e2}}

\address[C]{Department of Statistics, University of Virginia\printead[presep={,\ }]{e3}}
\end{aug}

\begin{abstract}
Variational inference (VI) is a core engine of modern AI, enabling scalable approximate Bayesian learning and uncertainty-aware training of large probabilistic and generative models. In this paper, we propose Structured Nonparametric Variational Inference (SN-VI), a novel framework for modeling complex dependencies among latent variables in posterior approximation, leveraging multivariate spline techniques. Unlike traditional methods that rely on the mean-field assumption, SN-VI preserves intricate latent variable dependencies, providing a flexible and accurate approximation of posteriors with arbitrary shapes. We establish rigorous theoretical guarantees, including the derivation of the lower bound for the variational objective and proof of asymptotic consistency in posterior estimation. To facilitate practical implementation, we develop an algorithm that automatically identifies dependent latent variables and their underlying dependence structure, without requiring manual specification. Simulation studies validate the effectiveness of SN-VI in approximating posterior distributions with bounded support and complex dependencies. The proposed method has been successfully applied to high-dimensional structured data, including computer vision datasets and spatial transcriptomics. In these applications, SN-VI demonstrates improved generative model performance and effectively uncovers coupled biological signals through the learned dependency structure. 
\end{abstract}

\begin{keyword}
\kwd{Generative Models}
\kwd{Posterior Approximation}
\kwd{Representation Learning}
\kwd{Structure Identification}
\end{keyword}

\end{frontmatter}

\section{Introduction}
Variational inference (VI) has become a cornerstone of modern AI by enabling scalable approximate Bayesian learning for complex probabilistic and deep generative models. It provides a practical route to posterior approximation when exact inference is computationally infeasible, and has therefore played a pivotal role in the development of widely used frameworks such as variational autoencoders and other latent-variable models \citep{jordan1999introduction, wainwright2008graphical,mostafa2023review}. Specifically, by selecting an approximating distribution from a predefined family $Q$, VI seeks a variational distribution $q_\phi \in Q$ that minimizes the Kullback--Leibler (KL) divergence between $q_\phi$ and the true posterior $p$, where $\phi$ denotes tractable variational parameters. As a computationally efficient alternative to Markov Chain Monte Carlo (MCMC), VI reformulates posterior inference as an optimization problem, making it highly adaptable to neural network architectures. In the era of AI, this adaptability has led to the widespread use of VI in text processing, probabilistic clustering, and genomics analysis, particularly for high-dimensional data \citep{kingma_auto-encoding_2014,lee2020stochastic,sordoni2024joint, ruiz_shopper_2020}.


Classic VI approximates intractable posteriors using a fixed parametric family, such as the Gaussian. Furthermore, many approximation families $Q$ rely on mean-field variational inference (MFVI) methods \citep{hoffman2013stochastic, kingma_auto-encoding_2014, rezende2014stochastic, kucukelbir2017automatic}, which assume that members of $Q$ can be fully factorized over latent variables. While effective in many scenarios, this approach limits the flexibility of the variational family to capture more complex posterior shapes and intricate latent variable dependencies. A common issue in VI is the underestimation of posterior variance, and this problem is particularly pronounced under the MFVI assumption, which assumes full independence among latent variables \citep{blei2017variational}. 
{In addition, the MFVI assumption becomes particularly limiting in contexts where cross-group relationships carry important information and improves interpretability, for example, correlation between topics in text processing, shared factors between latent brain subnetworks, and coupling effects between two disease subtypes.}

In order to alleviate the limitations of MFVI assumption, various Structured Variational Inference methods are proposed, which preserve dependencies between latent variables in the posterior approximation. Among these, structured mean-field methods \citep{saul1995exploiting, hoffman2015structured} provide a partial relaxation of the MFVI assumption by maintaining correlations within predefined substructures. Copula Variational Inference  \citep{tran2015copula, pmlr-v51-han16} first models the copula density to capture dependencies among latent variables, then infers the univariate marginal distributions of these latent variables, which is usually restricted to the copula family .

Richer and more accurate posterior approximations have been shown to enhance the performance of variational inference methods. A unified framework of Variational Inference with Normalizing Flows (NFVI) is proposed to preserve the dependent structure and construct richer posterior approximations \citep{rezende2015variational}. NFVI sequentially transforms a simple initial distribution into a more expressive one through a series of invertible transformations. 
{Common problems of NFVIs lie in the requirement of invertible transformation, the computational cost that is linear to the input dimensionality \citep{kingma2016improved,papamakarios2017masked} and the lack of capacity of approximating complex distribution \citep{dinh2014nice, dinh2016density}.
To overcome these limitations, a specific NFVI, Neural Spline Flows, is proposed to implement monotonic rational-quadratic splines as a building block and provide more flexible density approximations with better computational efficiency \citep{durkan2019neural}. Despite these efforts, NFVIs are unable to capture the explicit latent structures that are essential to interpretability in real applications. In addition, the assumed invertible transformation does not necessarily exist in practice \citep{kulikov_flowedit_nodate}.}


In this paper, we propose a novel framework for VI based on multivariate spline approximation, named Structured Nonparametric Variational Inference (SN-VI), designed to capture complex posterior shapes and intricate latent variable dependencies. By leveraging multivariate spline approximation, SN-VI provides the flexibility to model posteriors with complex features, such as multimodality, bounded support, and skewness. Additionally, SN-VI excels at preserving dependencies among latent variables by approximating the joint density function of dependent latent variables with multivariate splines. This approach bridges the gap between traditional variational inference methods that rely on the mean-field assumption and more flexible posterior approximations. Multivariate spline approximation effectively approximates the unknown density function \citep{gu1993smoothing}.
The theory of multivariate spline approximation has been well-developed, indicating that smooth density functions can be well approximated by linear combinations of predefined basis functions.

Unlike other structured variational inference methods, SN-VI eliminates the need for pre-specified model probability distributions. Also, compared to Normalizing flow-based variational inference (NF-VI), SN-VI provides more flexible approximations by removing the restriction on invertible transformations. Moreover, in NF-VI, increasing the number of flows enhances flexibility but also introduces significant computational overhead. In contrast, SN-VI does not rely on a sequence of invertible transformations; instead, the flexibility of the approximated posterior is governed by the number of interior knots, which typically remains small. To enhance efficiency, a Hierarchical Structure Identification Process is introduced to effectively identify the dependency structure among latent variables. Therefore, SN-VI offers a user-friendly and adaptable implementation pipeline that ensures broad applicability across diverse data structures.

\vskip 0.1in

In summary, our major contributions are as follows:
\begin{itemize}
\item[(i)] We establish a novel nonparametric structured variational inference framework, SN-VI, based on multivariate spline approximation for estimating complex posterior distributions with nonlinear dependencies. 

\item[(ii)] We establish theoretical properties for the proposed framework, including a lower bound for the importance-weighted autoencoder (IWAE) objective in Lemma \ref{THE:ELBO} and variational approximation error bounds in Theorem \ref{THE:Approximation-Regression}. 

\item[(iii)] We develop a streamlined procedure for identifying the dependence structure among latent variables, and demonstrate its effectiveness through numerical experiments. 

\item[(iv)] We conduct extensive real-data applications to spatial transcriptomics, where SN-VI reveals more comprehensive latent dependence structures that facilitate the interpretation of gene expression pathology. We further apply SN-VI to image reconstruction tasks, where it achieves superior performance compared with competing methods.
\end{itemize}

More broadly, this work highlights how statistical methodology can directly enhance modern AI tools. In contrast to mean-field variational families, which often break important latent dependencies, and flow-based approaches, which rely on invertible transformations and may not clearly reveal the underlying structure, spline-based approximation provides a transparent and theoretically grounded way to model joint latent behavior and quantify approximation error. From this perspective, SN-VI is not only a new variational inference algorithm, but also an example of how ideas from nonparametric statistics can improve the reliability, interpretability, and scientific usefulness of AI models in applications involving structured high-dimensional data.

The remainder of this paper is organized as follows. Section \ref{gen_inst} reviews the background of variational inference and introduces the key concepts of spline approximation and latent variable dependence that motivate our framework. Section \ref{headings} presents the proposed Structured Nonparametric Variational Inference (SN-VI) model in detail, including the multivariate spline-based posterior approximation and the associated estimation algorithm. Section \ref{sec:properties} establishes the theoretical properties of SN-VI, providing lower-bound guarantees and asymptotic convergence results. Section \ref{sec:str} describes the hierarchical structure identification procedure for learning latent-variable dependencies. Section \ref{sec:simulation} reports extensive simulation studies to evaluate the performance of SN-VI, while Section \ref{sec:application} demonstrates its practical utility through real-world applications to spatial transcriptomics and image data. Concluding remarks and potential future research directions are discussed at the end.

\section{Background}
\label{gen_inst}

\subsection{Variational Inference}
In VI, given the observed variables $\bm x$ and the latent variables $\bm z$ with model likelihood $p(\bm x| \bm z)$ and prior distribution $p(\bm z)$, the posterior distribution is $p(\bm z| \bm x) = \frac{p(\bm x |\bm z)p(\bm z)}{\int p(\bm x| \bm z)p(\bm z)d\bm z}$. However, the integral marginal likelihood function is intractable for most flexible generative models. VI aims to minimize KL$(q_{\phi}(\bm z| \bm x)||p(\bm z|\bm x))$ w.r.t. $\phi$, where $q_\phi(\bm z|\bm x)$ is the variational distribution with $\phi$ being a set of unknown parameters depending on the observed data $\bm x$. However, model optimization requires computing $\log p(\bm x)$, which is usually unavailable in closed form.

Since $\log{p(\bm x)}=\mathcal{L}_{\text{ELBO}}\left\{\phi(\bm x)\right\} + \text{KL}\left(q_{\phi}(\bm z| \bm x)||p(\bm z|\bm x)\right)$, with $\mathcal{L_{\text{ELBO}}}\left\{\phi(\bm x)\right\}$ defined as the evidence lower bound (ELBO): 
$
\mathcal{L_{\text{ELBO}}}\left\{\phi(\bm x)\right\} = \mathbb{E}_{q_{\phi}(\bm z|\bm x ) }\left[\log{p(\bm x| \bm z)}\right] - \text{KL}(q_{\phi}(\bm z|\bm x )|| p(\bm z)),
$
$\log{p(\bm x)}$ is independent with $q_{\phi}(\bm z | \bm x)$ and KL$(\cdot)\geq 0 $, 
minimizing KL$(q_{\phi}(\bm z| \bm x)||p(\bm z|\bm x))$ is equivalent to maximizing ELBO \citep{blei2017variational}.
A tighter bound, named importance-weighted autoencoder (IWAE) \citep{DBLP}, is proposed with a strictly tighter log-likelihood lower bound than ELBO to achieve more accurate posterior distribution.
By drawing $T$ independent samples from the posterior $\left\{\bm z^{(t)}\right\}_{t=1}^T \sim q_\phi(\bm z| \bm x)$, IWAE is formulated as the log of the average of the ratio of the joint distribution and posterior for each sample:
\begin{align}
  \label{EQU:IWAE} 
   & \mathcal{L}_{\text{IWAE}}\{\phi(\bm x)\}
  \triangleq \mathbb{E}_{\left\{\bm z^{(t)} \sim q_\phi(\bm z| \bm x)\right\}_{t=1}^T}\left[\log \frac{1}{T} \sum_{t=1}^T \frac{p\left(\bm x|\bm z^{(t)}\right) p\left(\bm z^{(t)}\right)}{q_\phi\left(\bm z^{(t)}| \bm x\right)}\right].
\end{align}
While a tighter bound of ELBO or IWAE is not always optimal \citep{rainforth2018tighter}, prior works suggest that multiple posterior samples improve IWAE’s adaptation to multimodal and complex posteriors \citep{DBLP, morningstar2021automatic}. Accordingly, we adopt IWAE as the objective for its overall advantages, noting that the approximation capability of SN-VI is not constrained by the specific choice of loss function.

\subsection{Nonlinear Dependence among Latent Variables}
\label{sec:latent_dep}
For latent variables $\bm z = \{z_1, \ldots, z_J\}$, let $\mathbb{G}=\{\mathcal{G}_{1}, \ldots, \mathcal{G}_{M}\}$ be a partition of $\{1, \ldots, J\}$, which represents the underlying group structure. The latent variables from different groups are independent of each other. There exists nonlinear dependence for the latent variables from the same group. Let denote $|\mathcal{G}_m|$ as the number of latent variables in each group $\mathcal{G}_m$ and $\widetilde{\bm z}_m = \{z_{j} \mid  j \in \mathcal{G}_m\}$ as the vector of latent variables belonging to the group $\mathcal{G}_{m}$. Then, we re-define $\bm z = \{\widetilde{\bm z}_1, \ldots, \widetilde{\bm z}_M\}.$ Figure \ref{fig:ill} illustrates the latent variables group structures. With the above assumption, the underlying true posterior distribution can be factorized as $p(\bm{z}|\bm{x}) = \prod_{m=1}^{M}p(\widetilde{\bm{z}}_{m}|\bm{x})$.

\begin{figure}
    \centering
    \includegraphics[width=.7\linewidth]{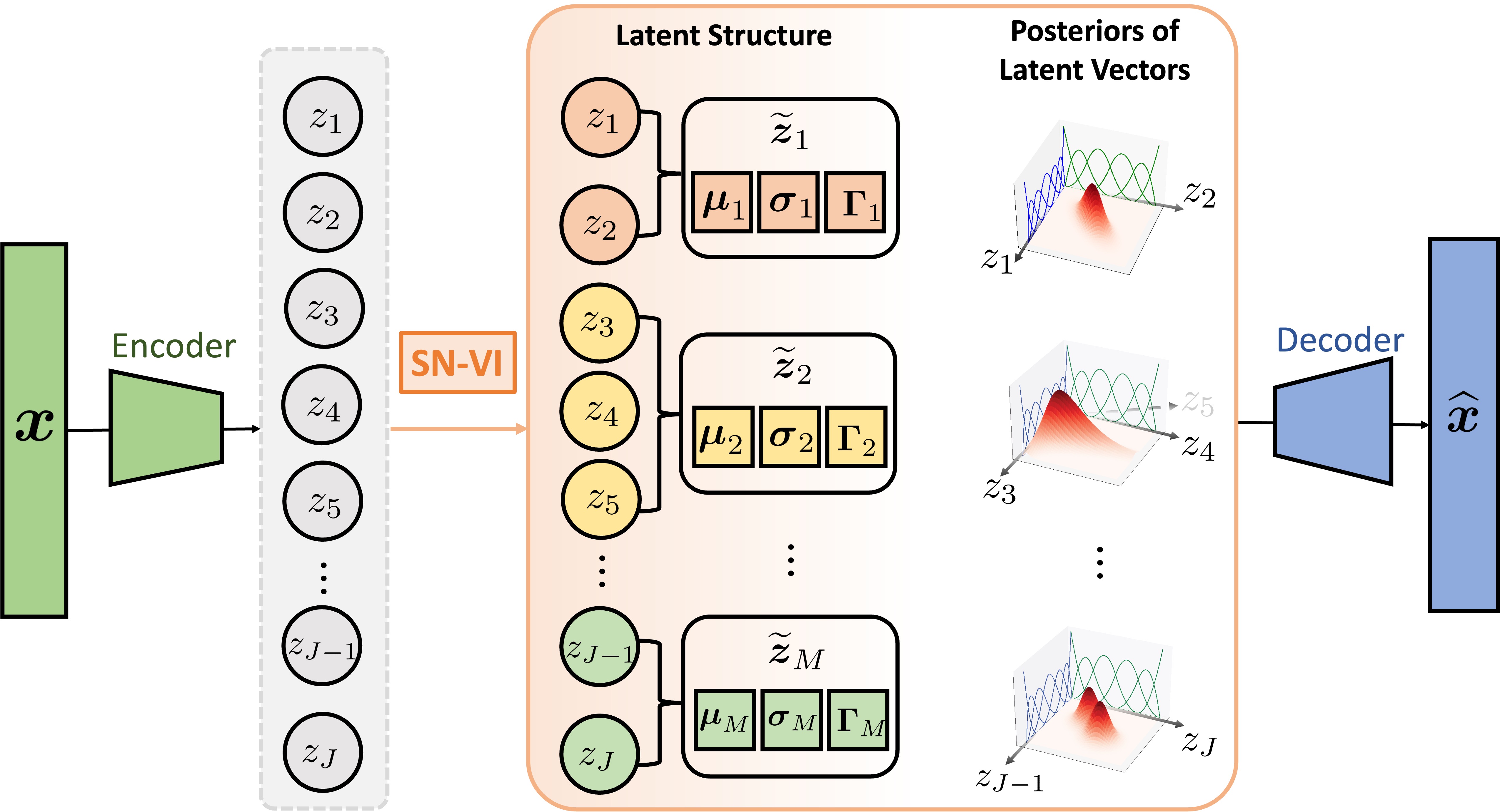}
    \caption{{Architecture of the proposed SN-VI framework. SN-VI identifies the structures of latent variables and produces joint posterior distribution of latent vectors. Among all latent variables $z_j, j=1,\ldots, J$, some share dependency structures that can be statistically described using location $\bs\mu_m (\bs x)$, scale $\bs\sigma_m (\bs x)$ and shape $\bs\Gamma_m(\bs x)$ for latent group $m = 1, \ldots, M$. The joint and marginal posterior density of two latent variables within latent groups are also illustrated.}}
    \label{fig:ill}
\end{figure}


\subsection{Spline Approximation}
\label{SEC:Spline-App}

Spline approximation efficiently models complex curves with a compact set of parameters and is widely used in machine learning and statistics, including spatial-temporal analysis \citep{Yu:etal:20,yu2021multivariate}, neural networks \citep{pmlr-v80-balestriero18b,fakhoury2022exsplinet}, and Bayesian inference \citep{wood2002bayesian}.

\paragraph*{Univariate Spline} In Spline Automatic Differentiation Variational Inference (S-ADVI) \citep{shao2024nonparametric}, univariate spline has been applied in approximating complex posterior distributions for mean-field variational distributions. Let us review the univariate spline. Denote $\bs{\upsilon}$ as a partition of the interval $\mathcal{T} = [\upsilon_0, \upsilon_{H+1}]$ with $H$ interior knots, where $\bs{\upsilon}=\left\{\upsilon_{0}<\upsilon_{1}<\cdots <\upsilon_{H}<\upsilon _{H+1}\right\}$. 
Denote $\{B_{1,\mathcal{T}}(z),\ldots ,B_{K,\mathcal{T}}(z)\}^{\top}$ as a vector of the spline basis functions with degree $\varrho$ and partition $\bs{\upsilon}$, where $K = H + \varrho+1$. For the sake of notation simplicity,  for the rest of the paper, we define the normalized spline basis by $b_{k,\mathcal{T}}(z) \triangleq B_{k,\mathcal{T}}(z)/a_{k,\mathcal{T}}$, where $a_{k,\mathcal{T}} = \int_{\mathcal{T}}B_{k,\mathcal{T}}(z) dz$. It implies that $\int_{\mathcal{T}} b_{k,\mathcal{T}}(z) dz = 1$. Let $\bm b_\mathcal{T}(z)=\{b_{1,\mathcal{T}}(z),\ldots ,b_{K,\mathcal{T}}(z)\}^{\top}$. 
For any polynomial spline $s(z)$, it can be uniquely represented via a linear combination of spline basis functions, that is, $s(z) = \sum_{k=1}^K \gamma_kb_{k,\mathcal{T}}(z)$.

\paragraph*{Multivariate Spline Approximation} Tensor Product Spline, which was first proposed in \citep{grosse1980tensor}, have been widely adopted by multivariate function approximation \citep{wheeler2019bayesian,rugamer2024scalable}. For the proposed method, under the assumed latent variables group structure, we apply the tensor product spline to approximate the $p(\widetilde{\bm{z}}_{m}|\bm{x})$, $m =1,\dots, M$. Let us first construct $D$-dimensional spline surfaces from families of spline functions. For $d = 1,\dots, D$, suppose that for some integer $\varrho_d$ and knot vector $\bs{v}_d$ with $H_d$ interior knots, we have the univariate spline space
$\mathbb{S}_d=\mathbb{S}_{\varrho_d, \boldsymbol{v}_d}=\operatorname{span}\left\{b_{k_d, \mathcal{T}_d}\right\}_{k_d=1}^{K_d}.$ Define the space of $D$-dimensional tensor product splines
\begin{equation}
    \mathcal{S}^{(D)} = \bigotimes_{d=1}^{D} \mathbb{S}_{d}  = \text{span}\left\{  b_{\bm \kappa} \right\}_{\bm \kappa \in \bm \Kappa_{D}},
\end{equation}
where $\bm \kappa = (k_1, \ldots, k_D)$ as the D-dimensional index set and $b_{\bm \kappa}(\bm z) = b_{k_{1},\mathcal{T}_1}(z_1) \times \cdots \times  b_{k_{D},\mathcal{T}_D}(z_D)$ are the spline basis functions and $\bm \Kappa_D = \left\{(k_1,\dots,k_D)\right\}_{k_1 =1,\dots,k_D = 1}^{K_{1},\dots,K_D}$.
Then, any $D$-dimensional function can be approximated by
\begin{align*}
    &s(\bm z)  = \sum_{k_1 =1}^{K_1} \cdots \sum_{k_D =1}^{K_D} \gamma_{k_1,\dots,k_D}b_{k_1,\mathcal{T}_1}(z_1) \cdots b_{k_D,\mathcal{T}_D}(z_D) = \sum_{\bm \kappa \in \bm \Kappa_D} \gamma_{\bm \kappa} b_{\bm \kappa}(\bm z),
 \end{align*}
where $\bm z = (z_1,\dots,z_D)$.
The number of basis functions for $D$-dimensional tensor product spline is $\prod_{d=1}^{D} K_d$. Figure \ref{fig:ill} illustrates examples of tensor product spline basis functions when $D=2$. 

Then, the joint density of $\widetilde{\bm z}_m$ can be represented by $\sum_{\bm \kappa \in \bm K_{|\mathcal{G}_{m}|}} \gamma_{\bm \kappa} b_{\bm \kappa}(\widetilde{\bm z}_m)$.
Without loss of generality, in the following, for each group $\mathcal{G}_{m}$,  we consider that $\mathbb{S}_1 = \mathbb{S}_2 = \cdots = \mathbb{S}_{|\mathcal{G}_{m}|} = \mathbb{S}$, and $\mathbb{S} = \text{span}\left \{b_{k,\mathcal{T}}\right\}_{k=1}^{K=1}$ with the same $H$ interior knots and different degree $\varrho_m$. 

We study the relationship between the smoothness of a function and its approximation accuracy using tensor product splines. Let \(\mathcal{H}^{(\varrho)}(\mathcal{T}^{D})\) denote the space of \(D\)-dimensional functions \(\psi\) defined on the tensor product domain \(\mathcal{T}^{D} = \mathcal{T}_{1} \times \cdots \times \mathcal{T}_{D}\). For any multi-index \(\bm{\nu} = (\nu_1, \dots, \nu_D)^\top\), the mixed partial derivative $
\mathcal{D}^{(\bm \nu)}\psi(\bm z) = \frac{\partial^{[\bm \nu]}\psi(\bm z) }{\partial z_1^{\nu_1} \cdots \partial z_D^{\nu_D}}
$
is assumed to be continuous and satisfies a Lipschitz condition of order \(\delta\), i.e.,
$
\left|\mathcal{D}^{(\bm \nu)}\psi(\bm z)-\mathcal{D}^{(\bm \nu)}\psi\left(\bm z^{\prime}\right)\right| \leq C_{\bm \nu}\left\|\bm z-\bm z^{\prime}\right\|_{2}^{\delta}, 
$
for all \(\bm{z}, \bm{z}^{\prime} \in \mathcal{T}^{D}\). The smoothness parameter \(\varrho\) is defined by the relation \(\varrho+1 = [\bm \nu ] + \delta\), where \(0 < \delta \leq 1\). Here, \(\|\bm{z}\|_{2} = \left(\sum_{d=1}^{D}z_{d}^{2}\right)^{1/2}\) denotes the Euclidean norm of \(\bm{z}\). The parameter \(\varrho\) characterizes the level of smoothness of the function \(\psi\).

The following lemma provides the approximation power of tensor product spline. The proof readily follows from Theorem 12.7 in \citep{schumaker2007spline}. 
\begin{lemma}
\label{LEM:approxi_tensor}
For any function  $\psi \in \mathcal{H}^{(\varrho)}(\mathcal{T}^{D})$, there exists a spline $\psi^{\ast} \in \mathcal{S}^{(D)}$, such that
$\sup_{\bm z \in \mathcal{T}^{D}}|\psi^{\ast}(\bm z)-\psi(\bm z)|\leq C H^{-(\varrho+1)}$ for some constant $C$ depending on $D$, $C_{\bm \nu}$.
\end{lemma}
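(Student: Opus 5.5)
The plan is to build $\psi^{\ast}$ with a tensor-product quasi-interpolant and to bound the error by a local Taylor argument, in the spirit of Theorem 12.7 of \citep{schumaker2007spline}. Each univariate space $\mathbb{S}_d$ of degree $\varrho_d\ge\varrho$ on a quasi-uniform knot sequence with $H$ interior knots, so mesh size $h\asymp H^{-1}$, carries a local linear quasi-interpolant $Q_d:C(\mathcal{T}_d)\to\mathbb{S}_d$. It has three properties: (i) $Q_d$ reproduces polynomials of degree $\le\varrho_d$; (ii) $\|Q_d f\|_{\infty}\le C\|f\|_\infty$ uniformly in $H$, using de Boor--Fix dual functionals and the partition of unity of the unnormalized B-splines $B_{k,\mathcal{T}}$; (iii) $Q_d f$ on a knot interval depends only on $f$ over a neighbourhood of width $O(h)$.

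Set $Q=Q_1\otimes\cdots\otimes Q_D$, so that $Q\psi\in\mathcal{S}^{(D)}$. The normalization $b_{k,\mathcal{T}}=B_{k,\mathcal{T}}/a_{k,\mathcal{T}}$ only rescales coefficients and does not change the span. By construction $Q$ inherits stability with constant $C^D$, locality on boxes of side $O(h)$, and reproduction of every tensor polynomial of coordinate degree $\le\varrho$. In particular it reproduces all polynomials of total degree $\le[\bm\nu]$.

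Fix a grid cell $R$ and its $O(h)$ neighbourhood $\widetilde R$. Let $p_R$ be the Taylor polynomial of $\psi$ of total degree $[\bm\nu]$ at a point of $R$. Then $Q p_R=p_R$, and locality and stability give
\[
\sup_{R}|\psi-Q\psi|\le \sup_{R}|\psi-p_R|+\sup_R|Q(\psi-p_R)|\le (1+C^D)\sup_{\widetilde R}|\psi-p_R|.
\]
Writing the Taylor remainder in integral form and using the $\delta$-Lipschitz condition on the mixed derivatives $\mathcal{D}^{(\bm\nu)}\psi$ of order $[\bm\nu]$, we get $\sup_{\widetilde R}|\psi-p_R|\le C'\max_{\bm\nu}C_{\bm\nu}\,h^{[\bm\nu]+\delta}=C'' H^{-(\varrho+1)}$. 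Taking the supremum over cells yields the claim with $\psi^\ast=Q\psi$.

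The main obstacle is the uniform stability of $Q$, that is, (ii). It needs quasi-uniform knots and degree at least $\varrho$. We would cite this from Schumaker rather than reprove it.

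A second subtlety is interpreting the smoothness class. The Lipschitz condition must hold for all mixed derivatives of total order $[\bm\nu]$, so that the Taylor remainder is controlled. We will state that reading explicitly. The constant then depends on $D$, the $C_{\bm\nu}$, the degrees, and the knot mesh ratio.
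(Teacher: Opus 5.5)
Your proposal is correct and is essentially the argument the paper relies on: the paper gives no proof beyond citing Theorem 12.7 of Schumaker's monograph, and that result is the tensor-product quasi-interpolant estimate you unpack (polynomial reproduction, local $L_\infty$-stability, and local polynomial approximation on the enlarged cell $\widetilde R$), here with a total-degree Taylor polynomial, which the H\"older condition on all derivatives of order $[\bm\nu]$ in $\mathcal{H}^{(\varrho)}$ supports. One small correction: the stable quasi-interpolant composes the de Boor--Fix functionals with local polynomial interpolation on a longest knot interval in each B-spline support, because raw de Boor--Fix functionals involve derivatives and are not bounded on $C$; its sup-norm stability does not need quasi-uniform knots, and what (A3) actually supplies (only its upper bound is needed) is that the diameter of $\widetilde R$ is of order $H^{-1}$.
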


\begin{remark}
\label{Remark2}
According to Lemma \ref{LEM:approxi_tensor}, for density function $p(\widetilde{\bm z}_{m}) \in \mathcal{H}^{(\varrho_{m})}\left(\mathcal{T}^{|\mathcal{G}_{m}|}\right)$, where $\mathcal{T}^{|\mathcal{G}_{m}|} \subseteq \mathbb{R}^{|\mathcal{G}_{m}|}$ could be either finite or infinite support, there exists $q( \widetilde{\bm z}_{m})  \in \mathcal{S}^{(|\mathcal{G}_{m}|)} $  such that $q( \widetilde{\bm z}_{m})$ is a valid density function and $\sup_{\widetilde{\bm z}_{m}\in \mathcal{T}^{|\mathcal{G}_{m}|}}|q(\widetilde{\bm z}_{m})-p(\widetilde{\bm z}_{m})|\leq C_{1} H^{-(\varrho_{m}+1)}$.
\end{remark}

\section{Nonparametric Posterior Estimation with Multivariate Spline Approximation}
\label{headings}
With the given group structure $\mathbb{G}$, we assume each group $\mathcal{G}_{m}$'s true posterior $p({ \widetilde{\bm{z}}_{m}|\bm x})\in \mathcal{H}^{(\varrho_m)}\left(\mathcal{T}^{|\mathcal{G}_{m}|}\right)$. Therefore, according to Lemma \ref{LEM:approxi_tensor} and Remark \ref{Remark2}, the posterior distribution of latent variables $\widetilde{\bm{z}}_{m}$, $p(\widetilde{\bm{z}}_{m}| \bm x)$ can be well approximated by a spline function: $q(\widetilde{\bm{z}}_{m}|\bm x) = \sum_{\bm \kappa \in \bm \Kappa_{|\mathcal{G}_{m}|}} \gamma_{\bm \kappa} b_{\bm \kappa}(\widetilde{\bm{z}}_{m}).$ By the definition of normalized spline basis in Section \ref{SEC:Spline-App}, 
$\int_{\mathcal{T}^{|\mathcal{G}_{m}|}} b_{\bm \kappa}(\widetilde{\bm{z}}_{m})d\widetilde{\bs{z}}_m  =1$, for $\bm \kappa \in \bm \Kappa_{|\mathcal{G}_{m}|}$ and $ m=1, \ldots, M$. Therefore, to guarantee that $q_\phi(\widetilde{\bm{z}}_{m}|\bm x)$ is a valid density function for $\widetilde{\bm{z}}_{m} \in \mathcal{T}^{|\mathcal{G}_{m}|}$, the spline coefficients must satisfy $\gamma_{\bm \kappa}\geq 0$ and $\sum_{\bm \kappa \in \bm{\Kappa}_{|\mathcal{G}_{m}|}}\gamma_{\bm \kappa} =1 $. For simplicity, we set $\mathcal{T} = [0,1]$ for the rest of paper. For $j$-th latent variable, we apply location-scale transformations, such that $z_j = \mu_j(\bm x) + \sigma_j(\bm x) \epsilon_j$, and $\epsilon_j \in [0,\,1]$ is a random variable. With the location-scale transformation, the flexible support of $z_j$, $[\mu_j(\bm x), \mu_j(\bm x) + \sigma_j(\bm x)]$, can be captured by estimating the unknown parameters $\mu_j(\bm x)$ and $\sigma_j(\bm x)$.   In summary, the proposed posterior of group $\mathcal{G}_{m}$ latent variables is determined by unknown parameters $\bm \mu_{m}(\bm x) =\left\{\mu_{j}(\bm x)\mid j\in \mathcal{G}_{m}\right\}$, $\bm \sigma_{m}(\bm x) = \left\{\sigma_{j}(\bm x)\mid j\in \mathcal{G}_{m}\right\}$, and the spline coefficients vector $\bm \Gamma_{m}(\bm x) = \left\{\gamma_{\bm \kappa}(\bm x) \mid \bm \kappa \in \bm \Kappa_{|\mathcal{G}_{m}|}\right\}$, which capture the location, scale, and shape of each group $\mathcal{G}_m$ density function. Let $J = \sum_{m=1}^{M}|\mathcal{G}_{m}|$ be the total number of latent variables and $\bm \epsilon_{m} = (\widetilde{\bm z}_{m} - \bm \mu_{m}(\bm x))/\bm \sigma_{m}(\bm x)$. Collectively, for the vector of latent variables $\bm z$, the posterior $p\left(\bm z| \bm x\right)$ can be approximated by
\begin{align}
\label{EQU:S-ADVI}
 q_\phi \left(\bm z| \bm x\right) & =  \frac{1}{\prod_{j=1}^J\sigma_j(\bm x)} \cdot \prod_{m=1}^{M} q_{\phi}(\bm \epsilon_{m}|\bm x) \\
 & = \frac{1}{\prod_{j=1}^J\sigma_j(\bm x)} \prod_{m=1}^{M} \sum_{\bm \kappa \in \bm \Kappa_{|\mathcal{G}_{m}|}} \gamma_{\bm \kappa}(\bm x) b_{\bm \kappa}\left(\frac{\widetilde{\bm z}_{m} - \bm \mu_{m}(\bm x)}{\bm \sigma_{m}(\bm x)}\right). \notag
\end{align}
In the SN-VI, the parameters of approximation family are $\phi = \{ \bm \mu_m(\bm x), \bm \sigma_m(\bm x), \bm \Gamma_{m}(\bm x) \mid m =1 , \dots, M\}$.
The objective of the proposed SN-VI is to maximize the IWAE defined in (\ref{EQU:IWAE}), where the term $q_\phi\left(\bm z_t| \bm x\right)$ is given in (\ref{EQU:S-ADVI}). The spline degree $\varrho_m$, and the number and values of interior knots are hyperparameters to be specified. In the numerical studies, we choose the cubic spline ($\varrho_m = 3$) with equal-space knots, which is commonly used in nonparametric estimation \citep{hastie2017generalized,Yu:etal:20}.

\subsection{Model Estimation}
\label{SEC:modelestimation}
\vspace{-2mm}
This section introduces the algorithm based on Automatic Differentiation Variational Inference (ADVI), a scalable and flexible variational inference method, to automatically optimize the IWAE for SN-VI \citep{blei2017variational}. Given the structure in SN-VI, the algorithm follows these steps:

\paragraph*{Generating Random Samples from Mixture Models} One key component of SN-VI is to generate random samples from the distribution $q_\phi(\widetilde{\bm{z}}_{m}|\bm x)$, for $m=1,\dots,M$. The multivariate spline posterior approximation can be interpreted as a mixture of density functions constructed from spline bases. Optimizing ELBO and IWAE with mixture densities often involves Stratified ELBO (SELBO) and Stratified IWAE (SIWAE) \citep{roeder2017sticking, morningstar2021automatic}. In SN-VI, applying stratified techniques is challenging due to the need to sum over all spline coefficient-latent variable products, which becomes computationally expensive for high-dimensional latent spaces


To improve computational efficiency while retaining a differentiable sampling procedure, we use the concrete distribution \citep{maddison2017concrete}, which provides a continuous relaxation of the categorical distribution. Let
\[
\bm{u} \sim \operatorname{Concrete}(\bm{\alpha},\tau),
\]
where $\bm{\alpha}=(\alpha_1,\ldots,\alpha_L)$ denotes the event-probability parameter of the target categorical distribution, and $\tau>0$ is the temperature parameter of the concrete distribution. The random vector $\bm{u}=(u_1,\ldots,u_L)$ lies on the probability simplex, with $u_l\geq 0$ and $\sum_{l=1}^L u_l=1$, and can be viewed as a differentiable approximation to a one-hot categorical indicator. As $\tau\to 0$, $\bm{u}$ becomes increasingly close to a one-hot vector, and the concrete distribution converges to the categorical distribution with event probabilities $\bm{\alpha}$. Additional details on the concrete distribution are provided in Section~B.4.1 of the Supplementary Materials.

We further adopt an annealing strategy \citep{abid2019concrete} for the temperature parameter. Specifically, training begins with a relatively large temperature $\tau_0$, which yields smooth weights across multiple spline basis components and improves numerical stability during the early stages of optimization. The temperature is then gradually decreased after each epoch. As the temperature decreases, the relaxed weights become increasingly concentrated on a small number of basis components, eventually providing a close approximation to discrete component selection. Further details and discussion are provided in Sections B.4.2. -- B.4.5.
The procedure follows these steps: \vspace{-1mm}
\begin{enumerate}
    \item For the group of latent variables $\mathcal{G}_{m}$ and $\bm \kappa \in \bm \Kappa_{|\mathcal{G}_{m}|}$, randomly pick the samples $\bm{\omega}_{\bm \kappa} = \left\{\omega_{j} \mid j \in \mathcal{G}_{m}\right\}$ from a set of random samples based on $b_{\bm \kappa}(\bm \epsilon_{m})$.
   
    \item Generate random sample $ \bm u_{m} = \left\{u_{\bm \kappa}\mid \bm \kappa \in \bm \Kappa_{|\mathcal{G}_{m}|}\right\}$ from $\operatorname{Concrete}(\bm \Gamma_{m}(\bm x),\Lambda(c))$, where $c$ is the current epoch and $\Lambda(c)$ is the annealing function. Define $\bm \epsilon_m = \sum_{\bm \kappa \in \bm \Kappa_{|\mathcal{G}_{m}|}} u_{\bm \kappa} \bm{\omega}_{\bm \kappa}.$
   
\end{enumerate}

\paragraph*{Backpropagation with Reparameterization Trick} To optimize the IWAE in (\ref{EQU:IWAE}),  instead of directly optimizing over $\bm{z}$, we reparameterize in terms of an auxiliary variable $\bm{\epsilon}$ drawn from the mixture of tensor product basis spline functions:
\begin{align}
     & \mathbb{E}_{\left\{\bm \epsilon^{(t)}\right\}_{t=1}^T}\left[\log \frac{1}{T} \sum_{t=1}^T \frac{p\left\{\bm x,\bm \mu(\bm x) + \bm \sigma (\bm x) \cdot \bm \epsilon^{(t)} \right\}}{\prod_{m=1}^M\sum_{\bm \kappa \in \bm \Kappa_{|\mathcal{G}_{m}|}} \gamma_{\bm \kappa}(\bm x) b_{\bm \kappa}\left(\bm \epsilon_{m}^{(t)}\right)}\right] + \sum_{j=1}^J\log \sigma_j(\bm x).
      \label{EQU:backpropagation}   
\end{align}
Derivation of (\ref{EQU:backpropagation}) can be found in Supplement Section B.3.  This ensures differentiability with respect to $\bm{\mu}$ and $\bm{\sigma}$, facilitating gradient-based optimization. The expectations are estimated using Monte Carlo sampling, based on the random samples generated from the mixture of spline basis functions.


%

\section{Properties of SN-VI}
\label{sec:properties}
This section studies theoretical properties of SN-VI.

\subsection{Asymptotic Error Bounds}
\label{assumptions}
 Here are the assumptions for our theoretical studies. 
\begin{itemize}
\item[\textit{(A1)}] The prior $p(\bm z)$ and the likelihood function $p(\bm x| \bm z)$ are bounded over the support regions. 
    \item[\textit{(A2)}] The true posteriors can be partially factorized, that is, $p(\bm z| \bm x) = \prod_{m=1}^M p(\widetilde{\bm z}_m | \bm x)$. For group $m=1, \ldots, M$, the posterior $p(\widetilde{\bm z}_{m}|\bm x) \in \mathcal{H}^{(\varrho_{m})}\left(\mathcal{T}^{|\mathcal{G}_{m}|}\right)$. There exists some region $\mathcal{T}^{\ast|\mathcal{G}_{m}|} \subset \mathcal{T}^{|\mathcal{G}_{m}|}$ such that $\int_{\mathcal{T}^{|\mathcal{G}_{m}|} - \mathcal{T}^{\ast|\mathcal{G}_{m}|}}p(\widetilde{\bm z}_{m} | \bm x) d\widetilde{\bm z}_{m} < \epsilon$ for some $\epsilon >0$. In addition, the posterior $p(\widetilde{\bm z}_{m}| \bm x)$ are bounded by some constant over $\mathcal{T}^{\ast|\mathcal{G}_{m}|}$. 
    \item[\textit{(A3)}] There exist constants $c$ and $C$ such that $c H^{-1} \leq \upsilon_{h} - \upsilon_{h-1} \leq C H^{-1}$ for $1\leq h \leq H$. 
\end{itemize}
\vspace{1ex}

\begin{remark}
    Assumption (A1) is a mild assumption on the prior $p(\bm z)$ and the likelihood function.  Assumptions (A2) -- (A3) are typical assumptions under the framework of spline approximation \citep{Yu:etal:20}. Assumption (A2) assumes that the true posteriors can be factorized given the group structure $\mathbb{G}$. 
\end{remark}

First, the lower bound of IWAE (Lemma \ref{THE:ELBO}) is derived, leading to an upper bound on the KL divergence for the tensor product spline density approximation to the true posterior (Theorem \ref{THE:Approximation}). Then, Theorem \ref{THE:Approximation-Regression} quantifies the posterior approximation error between the SN-VI estimator and the true posterior. See Supplement Section A.1
for the detailed proof.

\setcounter{theorem}{0}

 \begin{lemma}
 \label{THE:ELBO}
  Under Assumptions (A1) -- (A3), the optimal IWAE is bounded by $\log p(\bm x) - C \sum_{m=1}^{M} H^{-(\varrho_{m}+1)}- M\epsilon$ where $C$ is some positive constant.
\end{lemma}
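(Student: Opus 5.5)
The plan is to reduce the IWAE statement to an ELBO statement and then to a KL bound that splits over the latent groups. The IWAE is at least the ELBO for every $T\geq 1$. This follows from Jensen's inequality applied to the concave $\log$ inside the expectation in (\ref{EQU:IWAE}), together with the fact that the $T$ samples are identically distributed. Hence for any admissible $\phi$,
\[
\mathcal{L}_{\text{IWAE}}\{\phi(\bm x)\} \geq \mathcal{L}_{\text{ELBO}}\{\phi(\bm x)\} = \log p(\bm x) - \text{KL}\left(q_\phi(\bm z|\bm x)\,\|\,p(\bm z|\bm x)\right).
\]
Since the optimal IWAE is at least its value at any particular $\phi$, it suffices to exhibit one member $q^\ast$ of the SN-VI family (\ref{EQU:S-ADVI}) with $\text{KL}(q^\ast\|p(\cdot|\bm x)) \leq C\sum_m H^{-(\varrho_m+1)} + M\epsilon$. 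Under (A2) both $q^\ast$ and $p$ factorize over the groups $\mathcal{G}_m$, so the KL is the sum of the groupwise KLs. It is therefore enough to show, for each $m$,
\[
\text{KL}\left(q^\ast(\widetilde{\bm z}_m|\bm x)\,\|\,p(\widetilde{\bm z}_m|\bm x)\right) \leq C H^{-(\varrho_m+1)} + \epsilon .
\]

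To construct $q^\ast$ for group $m$, I fix the location-scale parameters so that the box $[\bm\mu_m,\bm\mu_m+\bm\sigma_m]$ equals the support $\mathcal{T}^{|\mathcal{G}_m|}$. After rescaling I work on $[0,1]^{|\mathcal{G}_m|}$; the Jacobian factor $\prod_j\sigma_j$ cancels between $q$ and the rescaled $p$. By Lemma \ref{LEM:approxi_tensor} and Remark \ref{Remark2}, there is a valid spline density $q^\ast$ with nonnegative coefficients summing to one and $\sup|q^\ast-p|\leq C_1H^{-(\varrho_m+1)}$. Assumption (A3) is what makes the uniform approximation rate of Lemma \ref{LEM:approxi_tensor} available with quasi-uniform knots.

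For the KL itself, I split the integral $\int q^\ast\log(q^\ast/p)$ into the region $\mathcal{T}^{\ast|\mathcal{G}_m|}$ and its complement. On $\mathcal{T}^{\ast|\mathcal{G}_m|}$ I use $\log(1+u)\leq u$ with $u=(q^\ast-p)/p$. This gives a bound by $\int (q^\ast-p)^2/p$, or more crudely by $\int |q^\ast-p|\cdot q^\ast/p$, which is $O(H^{-(\varrho_m+1)})$ provided $p$ is bounded above and $p$ is bounded away from zero there. The upper bound on $p$ comes from (A2). Assumption (A1) keeps $p(\bm x|\bm z)p(\bm z)$ bounded, so I can control the ratio.

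On the complement, the contribution should be of order the posterior mass there, hence at most $\epsilon$. Summing over $m$ gives $C\sum_m H^{-(\varrho_m+1)}+M\epsilon$, and the lemma follows.

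The main obstacle is the complement region together with small values of $p$. There $\log(q^\ast/p)$ can blow up, since a uniform error bound gives no relative control. My first attempt is to modify $q^\ast$: set it proportional to the spline approximant restricted to $\mathcal{T}^{\ast|\mathcal{G}_m|}$, so that $q^\ast$ essentially vanishes outside that region, and renormalize. The renormalization costs a factor $1+O(\epsilon + H^{-(\varrho_m+1)})$, whose logarithm is absorbed into the stated terms. With $q^\ast\approx 0$ off $\mathcal{T}^\ast$, the integrand $q^\ast\log(q^\ast/p)$ there is nonpositive or negligible. Inside $\mathcal{T}^\ast$, I will argue, or if necessary add as a harmless interpretation of (A2), that $p$ is bounded below on $\mathcal{T}^{\ast}$. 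This lower bound is what makes the ratio bound in the previous paragraph valid.
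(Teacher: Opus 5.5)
Your proposal is correct and is essentially the paper's argument. It uses IWAE $\geq$ ELBO by concavity of $\log$, a spline density on a box carrying posterior mass at least $1-\epsilon$ (at a cost of $-\log(1-\epsilon)$), the uniform approximation of Remark~\ref{Remark2}, and additivity of the KL over groups under (A2), with your bound $\log(1+u)\le u$ merely replacing the paper's first-order expansion of the importance weights $p(\bm x)\{1+\Delta/(p-\Delta)\}$ in the compact-support case.

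Two small points. First, the positive lower bound on $p(\widetilde{\bm z}_m|\bm x)$ over $\mathcal{T}^{\ast|\mathcal{G}_m|}$ cannot be derived from (A2), which only bounds $p$ from above, so it must be added as an assumption; the paper's proof uses it tacitly too, since bounding $\Delta/(p-\Delta)$ by $O(H^{-(\varrho_m+1)})$ requires $p-\Delta$ bounded away from zero. Second, rather than restricting a spline built on a larger box, choose $\bm\mu_m,\bm\sigma_m$ so that $[\bm\mu_m,\bm\mu_m+\bm\sigma_m]$ is $\mathcal{T}^{\ast|\mathcal{G}_m|}$ itself and approximate the truncated posterior there, so that $q^\ast$ stays in the family and vanishes exactly off that box.
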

 \begin{theorem}
 \label{THE:Approximation} Under Assumptions (A1) -- (A3), the difference between the true posterior and the spline estimator is bounded by the order of $\sum_{m=1}^{M}H^{-(\varrho_m+1)}$, that is, there exists a constant $C$, such that $D_{KL}\{q_{\phi(\bm x)}(\bm z)||p(\bm z| \bm x)\} \leq C\left(\sum_{m=1}^{M}H^{-(\varrho_m+1)} +M\epsilon\right)$.
\end{theorem}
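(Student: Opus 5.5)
The plan is to reduce Theorem~\ref{THE:Approximation} to Lemma~\ref{THE:ELBO} through the standard decomposition of the log evidence, and then control the IWAE--ELBO gap.

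\textbf{Step 1: identity.} For any $q_\phi$ in the SN-VI family,
\[
\log p(\bm x)=\mathcal{L}_{\text{ELBO}}\{\phi(\bm x)\}+D_{KL}\{q_{\phi}(\bm z|\bm x)\|p(\bm z|\bm x)\}.
\]
So for the optimizer $\phi(\bm x)$,
\[
D_{KL}=\log p(\bm x)-\mathcal{L}_{\text{ELBO}}\{\phi(\bm x)\},
\]
and it suffices to show that the optimal ELBO is at least $\log p(\bm x)-C\left(\sum_m H^{-(\varrho_m+1)}+M\epsilon\right)$.

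\textbf{Step 2: direct construction.} Lemma~\ref{THE:ELBO} bounds the IWAE, and the IWAE dominates the ELBO. I would therefore not invoke the lemma directly, but repeat its construction at the level of the ELBO ($T=1$). For each group, Remark~\ref{Remark2} together with (A2)--(A3) gives a valid spline density $q_m^\ast\in\mathcal{S}^{(|\mathcal{G}_m|)}$ with
\[
\sup|q_m^\ast-p(\widetilde{\bm z}_m|\bm x)|\le C_1H^{-(\varrho_m+1)}
\]
on $\mathcal{T}^{\ast|\mathcal{G}_m|}$. Take $q^\ast=\prod_m q_m^\ast$. By (A2), both $q^\ast$ and $p(\bm z|\bm x)$ factorize, so the KL divergence splits additively:
\[
D_{KL}(q^\ast\|p)=\sum_{m=1}^M D_{KL}(q_m^\ast\|p_m).
\]

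\textbf{Step 3: bound each group term.} Split the integral over $\mathcal{T}^{\ast}$ and its complement.

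On $\mathcal{T}^{\ast}$, write $\log(q_m^\ast/p_m)=\log(1+\Delta/p_m)$ with $\Delta=q_m^\ast-p_m$. Use $\log(1+u)\le u$ and the boundedness of $p_m$ on $\mathcal{T}^\ast$ from (A2). Then
\[
\int q_m^\ast\log(q_m^\ast/p_m)\le\int (\Delta^2/p_m)+\int\Delta,
\]
which gives a term of order $H^{-(\varrho_m+1)}$, or even its square, up to boundary mass.

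The complement contributes mass at most $\epsilon+CH^{-(\varrho_m+1)}$. Together with the boundedness from (A1), this yields an $O(\epsilon)$ contribution per group.

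\textbf{Main obstacle.} The hard part is Step 3. The ratio $q_m^\ast/p_m$ must stay bounded away from degeneracy wherever $p_m$ is small. A uniform sup-norm error does not control a logarithm when $p_m$ approaches $0$.

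I would first try to modify the spline approximant: replace $q_m^\ast$ by $(1-\eta)q_m^\ast+\eta\,u_m$, where $u_m$ is a basis-mixture density that is positive on the support. This remains in the spline family because any convex combination of normalized basis functions is admissible. Choosing $\eta\asymp H^{-(\varrho_m+1)}$ gives a lower bound on the mixture, so the log ratio is controlled at a cost of order $\eta$.

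Since the ELBO optimizer does at least as well as $q^\ast$, summing over the groups gives
\[
D_{KL}\le C\left(\sum_m H^{-(\varrho_m+1)}+M\epsilon\right).
\]
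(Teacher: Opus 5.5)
First, your argument bounds a different estimator than the one the theorem is about. The closing step ``the ELBO optimizer does at least as well as $q^\ast$'' controls $D_{KL}$ only at the maximizer of $\mathcal{L}_{\text{ELBO}}$, i.e.\ it bounds $\min_\phi D_{KL}(q_\phi\|p)$. The spline estimator in the paper's proof is $\widehat\phi=\argmax_\phi\mathcal{L}_{\text{IWAE}}(\phi)$, since SN-VI maximizes the IWAE. For this $\widehat\phi$, optimality only gives $\mathcal{L}_{\text{IWAE}}(\widehat\phi)\ge\mathcal{L}_{\text{IWAE}}(\phi^\ast)\ge\mathcal{L}_{\text{ELBO}}(\phi^\ast)$. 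Because $\mathcal{L}_{\text{IWAE}}\ge\mathcal{L}_{\text{ELBO}}$, nothing in this chain bounds $\mathcal{L}_{\text{ELBO}}(\widehat\phi)$ from below. You saw this when you declined to invoke Lemma~\ref{THE:ELBO} directly, and you announced that you would control the IWAE--ELBO gap, but you never do. That is exactly the step the paper supplies. Using that $q_{\widehat\phi}$ is uniformly within $C\sum_m H^{-(\varrho_m+1)}$ of $p(\bm z|\bm x)$, it linearizes $\log\{1+T^{-1}\sum_t(p-q_{\widehat\phi})/q_{\widehat\phi}\}$ to get $\mathcal{L}_{\text{IWAE}}(\widehat\phi)=\log p(\bm x)-D_{KL}(q_{\widehat\phi}\|p)+o(\sum_m H^{-(\varrho_m+1)})$, and then inserts the lower bound of Lemma~\ref{THE:ELBO}. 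If you only intend the statement for the best approximant in the family, your direct ELBO route is legitimate. It even avoids having to justify the uniform closeness of $q_{\widehat\phi}$ to $p$, which the paper simply asserts. But you should then say that this is what you prove.

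Second, your remedy in Step~3 is aimed at the wrong side of the ratio. For $D_{KL}(q\|p)=\int q\log(q/p)$, the danger is that $q_m^\ast$ is positive where $p_m$ is small. Your bound $q\log(q/p)\le\Delta+\Delta^2/p$ is the right tool, but it needs $p_m$ bounded \emph{below} on the support of $q_m^\ast$.
\begin{itemize}
\item Mixing in $\eta u_m$ bounds $q$ from below. That controls $\log(p/q)$ (forward KL, or the importance weights), not $\log(q/p)$. The offending mass $(1-\eta)q_m^\ast$ is still there, and convexity only gives $D_{KL}(q_\eta\|p_m)\le(1-\eta)D_{KL}(q_m^\ast\|p_m)+\eta D_{KL}(u_m\|p_m)$. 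The new term is controlled only if $p_m$ is controlled from below on the whole box, which is the very thing you were trying to avoid.
\item (A1) gives upper bounds on the densities, and these bound $\log(q/p)$ from below. So the contribution off $\mathcal{T}^\ast$ is mass $\epsilon+CH^{-(\varrho_m+1)}$ times an uncontrolled logarithm, not $O(\epsilon)$.
\item The boundedness in (A2) helps only on $\mathcal{T}^\ast$, and only if read as a lower bound, which is what your $\int\Delta^2/p_m$ requires.
\end{itemize}
The repair is to avoid the complement altogether, as in the proof of Lemma~\ref{THE:ELBO}. Take the spline's support box (via $\bm\mu_m,\bm\sigma_m$) to be $\mathcal{T}^\ast$ and approximate the truncated density $p^{\mathrm T}$ there. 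Then $D_{KL}(q_m^\ast\|p_m)=D_{KL}(q_m^\ast\|p^{\mathrm T})+\log\{1/p_m(\mathcal{T}^\ast)\}\le\int(q_m^\ast-p^{\mathrm T})^2/p^{\mathrm T}+\log\{1/(1-\epsilon)\}$, which is $O(H^{-2(\varrho_m+1)}+\epsilon)$ once $p_m\ge c>0$ on $\mathcal{T}^\ast$. Some such lower bound is also implicit in the paper's linearization, which needs $(p-q_{\widehat\phi})/q_{\widehat\phi}$ to be uniformly small.
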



Theorem \ref{THE:Approximation-Regression} quantifies variational approximation error by applying multivariate spline approximation and theoretical differences between the SN-VI estimator and true posterior. See Supplement Section A.2 for the detailed proof. 
 
 \begin{theorem}
 \label{THE:Approximation-Regression} Under Assumptions (A1) -- (A3), the average KL divergence of the multivariate spline approximation from the true posterior satisfies
\begin{align*}
&\lim_{n \to \infty} \Pr\Bigg[\frac{1}{n} \textstyle \sum_{i=1}^n D_{KL}\{q_{\widehat{\phi}(\bm x_i)}(\bm z)||p(\bm z| \bm x_i)\} \leq C \sum_{m=1}^{M}(H^{-2(\varrho_{m}+1)} + \epsilon^2 + H^{2|\mathcal{G}_{m}|}  \Delta^2)\Bigg] = 1,
\end{align*}
 where $C$ is a positive constant and $\Delta$ is the $L_2$ estimation error of nonparametric regression for $\bm \mu_m(\bm x)$, $\bm \sigma_m(\bm x)$, and $\bm \Gamma_{m}(\bm x), m = 1, \ldots, M.$
\end{theorem}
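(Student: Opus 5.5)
The plan is to split the error into two parts. The first is the approximation error of the best spline member, which Theorem \ref{THE:Approximation} already controls. The second is the estimation error from replacing the oracle parameter map $\phi^\ast(\bm x)=\{\bm\mu_m^\ast(\bm x),\bm\sigma_m^\ast(\bm x),\bm\Gamma_m^\ast(\bm x)\}$ by the fitted map $\widehat\phi(\bm x)$. Write $q^\ast_i=q_{\phi^\ast(\bm x_i)}$ and $\widehat q_i=q_{\widehat\phi(\bm x_i)}$. Since both the true posterior and $q_\phi$ factorize over groups (Assumption (A2) and (\ref{EQU:S-ADVI})), the KL divergence splits into a sum over $m$. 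It therefore suffices to treat one group $\mathcal{G}_m$ and sum at the end.

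\textbf{Comparison step.} Instead of the KL divergence itself, I will work with an $L_2$/$\chi^2$-type comparison, which produces the squared rates in the statement. On $\mathcal{T}^{\ast|\mathcal{G}_m|}$ the true posterior is bounded (A2). The spline density $q^\ast$ is uniformly within $C_1H^{-(\varrho_m+1)}$ of it (Remark \ref{Remark2}). Assuming this uniform closeness, or a small floor on the coefficients, lets both densities be bounded above and below there. The elementary inequality $D_{KL}(q\|p)\le \int (q-p)^2/p$ then gives
\begin{align*}
D_{KL}(\widehat q_i\|p_i)\lesssim \|\widehat q_i-q_i^\ast\|_{L_2}^2+\|q_i^\ast-p_i\|_{L_2}^2+\epsilon^2 .
\end{align*}
The tail region $\mathcal{T}^{|\mathcal{G}_m|}\setminus\mathcal{T}^{\ast|\mathcal{G}_m|}$ is absorbed into the $\epsilon$ term, squared through the same $\chi^2$ bound. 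Lemma \ref{LEM:approxi_tensor} gives $\|q_i^\ast-p_i\|^2\lesssim H^{-2(\varrho_m+1)}$.

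\textbf{Estimation step.} The middle term is handled by a perturbation bound on the parametrization (\ref{EQU:S-ADVI}). For the coefficients, $\sum_{\bm\kappa}(\widehat\gamma_{\bm\kappa}-\gamma^\ast_{\bm\kappa})b_{\bm\kappa}$ has squared $L_2$ norm of order $\|\widehat{\bm\Gamma}_m-\bm\Gamma^\ast_m\|^2\cdot\sup_{\bm\kappa}\|b_{\bm\kappa}\|_\infty^2$. By (A3) each normalized tensor basis $b_{\bm\kappa}=B_{\bm\kappa}/a_{\bm\kappa}$ has $a_{\bm\kappa}\asymp H^{-|\mathcal{G}_m|}$. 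Combining this sup bound with the $L_2$ stability of B-splines leaves a factor of order $H^{|\mathcal{G}_m|}$ on the norm, hence $H^{2|\mathcal{G}_m|}$ once squared. For $\bm\mu_m,\bm\sigma_m$, I will use a first-order expansion of $\sigma^{-1}s((z-\mu)/\sigma)$ in $(\mu,\sigma)$. Its derivative involves $s'$, which by the Markov inequality for splines on a quasi-uniform grid is $O(H)\|s\|_\infty$, again dominated by $H^{|\mathcal{G}_m|}$. The result is $\|\widehat q_i-q_i^\ast\|^2\lesssim H^{2|\mathcal{G}_m|}\|\widehat\phi_m(\bm x_i)-\phi_m^\ast(\bm x_i)\|^2$.

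\textbf{Averaging and probability.} Averaging over $i$, the empirical mean $\frac1n\sum_i\|\widehat\phi(\bm x_i)-\phi^\ast(\bm x_i)\|^2$ is $O_P(\Delta^2)$ by the definition of $\Delta$ as the $L_2$ regression estimation error. Summing over $m$ yields the bound with probability tending to one.

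\textbf{Main obstacle.} The delicate part is the lower-boundedness needed to pass from KL to $L_2$, because $p$ and the spline densities can vanish. I would first try restricting to $\mathcal{T}^\ast$ and mixing in a small uniform component of weight of order $\epsilon$, so that the ratio $\widehat q/p$ stays bounded. The extra cost is then only of order $\epsilon$, which enters squared through the $\chi^2$ bound. Tracking the exact power of $H$ in the estimation step is secondary: if the Markov-inequality derivative factor and the basis normalization give a smaller exponent than $|\mathcal{G}_m|$, they are still dominated by $H^{2|\mathcal{G}_m|}$.
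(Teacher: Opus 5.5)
There is a genuine gap, and it sits in your averaging step. You assert $\frac1n\sum_i\|\widehat\phi(\bm x_i)-\phi^\ast(\bm x_i)\|^2=O_P(\Delta^2)$ ``by the definition of $\Delta$.'' But $\widehat\phi$ is not a regression estimator of $\phi^\ast$. It is the maximizer of the variational objective, and nothing forces it toward the particular oracle $\phi^\ast$ built from Lemma~\ref{LEM:approxi_tensor}. There are three reasons:
\begin{itemize}
\item That oracle is not unique: location, scale and spline coefficients can trade off, and many parameter values give nearly the same density.
\item The KL-optimal member of the spline family is in general not the sup-norm approximant.
\item A small KL divergence does not imply a small parameter error.
\end{itemize}
Tellingly, your argument never uses how $\widehat\phi$ was obtained. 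The $\Delta$-closeness of $\widehat\phi$ to $\phi^\ast$ is being assumed, not proved.

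The missing idea is the oracle-comparison step that the paper's proof is built on. We have $\mathcal{L}_{\text{ELBO}}\{\phi(\bm x_i)\}=\log p(\bm x_i)-D_{KL}\{q_{\phi(\bm x_i)}(\bm z)\|p(\bm z|\bm x_i)\}$, and $\log p(\bm x_i)$ does not depend on $\phi$. So optimality of $\widehat\phi$ over the function class gives
\[
\frac1n\sum_{i=1}^n D_{KL}\{q_{\widehat\phi(\bm x_i)}(\bm z)\|p(\bm z|\bm x_i)\}\le\frac1n\sum_{i=1}^n D_{KL}\{q_{\widetilde\phi(\bm x_i)}(\bm z)\|p(\bm z|\bm x_i)\}
\]
for any competitor $\widetilde\phi$ in the class. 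In particular, take $\widetilde\phi$ to be a nonparametric-regression approximation of $\phi^\ast$ with $L_2$ error $\Delta$; this is the role $\Delta$ plays. Your comparison and estimation steps then go through with $\widetilde q_i$ in place of $\widehat q_i$. Your treatment of the $\bm\mu_m,\bm\sigma_m$ perturbation is in fact more careful than the paper's, which perturbs only the coefficients.

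This route also eases your main obstacle. The $\chi^2$ bound on $D_{KL}(q\|p)$ needs a floor on $p$ over the support of $q$. You now need it only for a competitor you construct, not for $\widehat q_i$, whose support you do not control. Your uniform-mixture fix does not help: adding mass to $q$ only enlarges its support and worsens $q/p$. A lower bound on $p$ is not supplied by (A2), which bounds $p$ only from above, so it must be assumed; the paper absorbs $1/\inf p$ into $C$.

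Finally, the comparison is exact for the ELBO, which is how the paper runs it. If $\widehat\phi$ maximizes the IWAE instead, the same argument only controls the gap $\log p(\bm x_i)-\mathcal{L}_{\text{IWAE}}\{\widehat\phi(\bm x_i)\}$. That gap is no larger than the KL divergence, so bounding it does not bound the KL, and an extra step is needed.
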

\begin{remark}
     $\Delta$ is the $L_2$ estimation error, which is usually decreasing when increasing the sample size \citep{NonparametricRegression}. Theorem \ref{THE:Approximation} suggests increasing the number of interior knots $H$ can reduce the SN-VI approximation errors and increase the model estimation errors ($\sum_{m=1}^MH^{2|\mathcal{G}_{m}|}  \Delta^2$). However, when applied to real data analysis, according to the results in Theorem \ref{THE:Approximation-Regression}, choosing the optimal number of interior knots balances approximation bias and estimation variance.
\end{remark}


\subsection{Computation Complexity Analysis}
The main computational cost of SN-VI comes from evaluating and optimizing the spline-based variational approximation within each dependence group. Suppose the latent variables are partitioned into groups $\{\mathcal{G}_m\}_{m=1}^M$, where $|\mathcal{G}_m|$ denotes the group size. For group $\mathcal{G}_m$, $K$ spline basis functions are used for each dimension and the total number of tensor-product spline basis functions grows as $
K_m = O(K^{|\mathcal{G}_m|}).$
More specifically, if $S$ samples are used to update the model per iteration, the computational cost per iteration is approximately
$
O\left(S\sum_{m=1}^M K_m\right),
$
up to the cost of evaluating the likelihood and prior terms. The memory cost is also of order
$
O\left(\sum_{m=1}^M K_m\right),
$
because the spline coefficients need to be stored and updated during optimization.

This analysis explains why SN-VI is most computationally efficient when the latent dependence structure is relatively low-dimensional or can be decomposed into small groups. In our experiments, we mainly consider settings with $|G_m|\leq 4$, where the number of spline coefficients remains moderate and the optimization can be efficiently carried out on a standard GPU such as an NVIDIA A100 80GB. When $|G_m|$ becomes larger, the number of tensor-product basis functions can grow rapidly, leading to substantially higher computation and memory requirements, see Section C.1 in the supplementary materials for detailed experiment results. Thus, in practice, SN-VI is particularly suitable for models with sparse, local, or low-rank dependence structures, where complex posterior dependence can be captured through several low-dimensional dependence groups rather than one high-dimensional joint approximation.

To further reduce the computational burden, one can adopt a basis-subsampling strategy \citep{consagra2026neuropmd}. Instead of optimizing over the full tensor-product spline basis, we randomly sample a subset of basis functions at each optimization step or in a preliminary screening stage. This reduces the effective number of basis functions from $K_m$ to $\widetilde K_m\ll K_m$, leading to an approximate per-iteration cost $O\left(S\sum_{m=1}^M \widetilde K_m\right).$
Empirically, our additional experiments in Section C.1.1 in the supplementary materials show that this subsampling strategy preserves the main dependence patterns and achieves performance close to the full-basis implementation, while substantially reducing runtime. These results suggest that basis subsampling provides a practical way to scale SN-VI to larger models.

\section{Structure Identification}
\label{sec:str}
In previous sections, we introduced the objective function (\ref{EQU:backpropagation}) based on the group structure \(\mathbb{G}\). However, in real-world applications, the group structure of latent variables is often not provided in advance. To identify the structures of these latent variables, we utilize the predictive log-likelihood as a metric for model selection. This approach has proven effective for model selection in various contexts \citep{nott2012regression}. The predictive log-likelihood, denoted as \(L(\mathbb{G})\), for SN-VI is defined as follows:
\begin{equation}
\label{equ:pred_log_like}
    L(\mathbb{G})
 = \log{\int p(\bm x_{\text{valid}}|
\bm z) q^{\mathbb{G}}_{\phi}(\bm z| \bm x_{\text{train}})d\bm{z}} ,
\end{equation}
where $\bm x_{\text{train}}$, $\bm x_{\text{valid}}$, and $q^{\mathbb{G}}_{\phi}(\bm z| \bm x_{\text{train}})$ indicates training, validation dataset and approximated posterior distributions given structure $\mathbb{G}$, respectively. Due to the intractability of the integral in (\ref{equ:pred_log_like}), we apply the Monte Carlo approximation to approximate the integral. 

The detailed procedure is provided in Algorithm 1 in Supplementary Section B.5. Initially, all latent variables are assumed to be independent, with each variable considered as its own group. Variables within the same group are assumed to be dependent, while variables from different groups are independent. At each step, the algorithm merges groups that result in the largest increase in predictive log-likelihood. This process continues until the predictive log-likelihood no longer increases, at which point the optimal model is identified as the one with the largest predictive log-likelihood.

During hierarchical structure identification, we impose an explicit upper bound on the maximum allowable group size $|\mathcal{G}_m|$. This restriction is necessary because the number of spline basis coefficients also grows quickly as the group dimension increases. Without such a restriction, searching over all possible high-order groups would become computationally prohibitive in large latent spaces.

The maximum group size can be viewed as a tuning parameter that controls the complexity of the structured variational family. A small upper bound favors computational efficiency and captures low-order dependence, while a larger upper bound allows more flexible higher-order dependence at the cost of increased computation. In our experiments, we set this upper bound to a small value, such as $|\mathcal{G}_m|\leq 4$, which provides a practical balance between flexibility and scalability. This choice is also consistent with the goal of SN-VI, which is to capture structured and local dependence among latent variables rather than unrestricted fully dense dependence. For applications with strong higher-order dependence, the upper bound can be increased when computational resources permit.

\section{Experimental Results}
\label{sec:simulation}
In this section, we present simulation studies and real data applications to evaluate the effectiveness of the proposed method. All experiments were conducted on an NVIDIA A100 80GB GPU. Throughout the numerical studies, we distinguish among three likelihood-related quantities. 
First, the training objective is the IWAE lower bound defined in (\ref{EQU:IWAE}). Second, for structure identification, we use 
the predictive log-likelihood defined in (5), which evaluates the validation likelihood averaged 
over samples from the fitted variational posterior. Third, in the real-data applications, when we 
report negative log-likelihood (NLL), it refers to the negative conditional log-likelihood of the 
observation model, $-\log p(x\mid z)$, evaluated using posterior samples or posterior-mean 
embeddings from the fitted model. NLL values are comparable across methods within 
the same experiment, but not intended to be compared across different datasets or likelihood 
models.
\subsection{Posterior Inference with SN-VI}
\label{sec:post_inf}
In this section, we design the experiments for 2D posterior distribution approximation and show the explicit visualization of approximation based on spline basis functions and $\widehat{\phi}(\bm x)$.

We generate data under four distinct scenarios, as summarized in Table~C.2 in the Appendix. For each case, 2,048 samples are simulated, and the models are trained using mini-batches of size 64 over 40 epochs. A two-layer multilayer perceptron (MLP) with 20 hidden units per layer is used to estimate the unknown parameters. For the spline basis functions, we set 
$\varrho_1 = \varrho_2 = 3$ and 
$H = 5$. The computational cost analysis is reported in Figure C.3.

We compare the proposed method with several baseline approaches: Gaussian-ADVI, which approximates the posterior using a Gaussian distribution under the mean-field assumption; S-ADVI, which employs spline-based posterior approximations with the mean-field assumption; and two flow-based methods, Planar Flow (Planar) and Neural Spline Flows (NSF). Both normalizing-flow-based methods use 10 flows.

To assess the performance of the proposed method and to compare it with existing approaches, we employ two evaluation metrics: the root integrated squared error (RISE), defined as
$
\mathrm{RISE} = \left[ \int \{ q(\boldsymbol{z}\mid \boldsymbol{x}) - p(\boldsymbol{z}\mid \boldsymbol{x}) \}^{2} \, d\boldsymbol{z} \right]^{1/2},
$
and the predictive log-likelihood (Pred) described in Section~\ref{sec:str}. Each experiment is repeated 20 times, and the mean and standard deviation of the results are reported in Table~\ref{tab:rise_rotated}. The predictive log-likelihood is computed using a test set of 100 samples generated from the same inference model as the training data, providing a measure of how well the trained model predicts new observations.

Table~\ref{tab:rise_rotated} demonstrates that SN-VI consistently outperforms the competing methods. Figure~\ref{sn-vi_fig:post_inf} further compares the true posterior distributions with the approximations obtained from Gaussian-ADVI, S-ADVI, Planar, NSF, and SN-VI. As seen in Figure~\ref{sn-vi_fig:post_inf}, Gaussian-ADVI and S-ADVI fail to capture the dependence structure among the latent variables due to the restrictive mean-field assumption underlying these approaches. Although Planar and NSF provide more flexible posterior approximations than MFVI-based methods, they still do not recover the true dependency patterns among latent variables. In particular, SN-VI successfully preserves the nonlinear dependence structure in Cases~2 and~3. In Cases~2 and~4, SN-VI also accurately captures posterior skewness and multimodality, whereas competing methods do not. Overall, SN-VI yields substantially more accurate posterior approximations and is especially effective in settings where the latent variables exhibit dependence, reflecting its advantage in relaxing mean-field constraints.

In addition, Figure C.3 in the Appendix shows the computational cost analysis, indicating that SN-VI achieves faster convergence than the compared methods and remains robust to the choice of knot number. Figure C.2 in the Appendix demonstrates how the roughness penalty parameter $\lambda$ in SN-VI will affect the posterior approximation of Case 4. When $\lambda$ is relatively large, it encourages the approximated surface to be smoother with less fluctuation. Conversely, when 
As $\lambda$ decreases, the approximated surface becomes more flexible, allowing it to capture the multimodality of the true posterior distribution. Thus, the roughness parameter $\lambda$ plays a crucial role in balancing bias and variance in the posterior approximation.

\begin{figure}[!h]
\captionsetup{width=1\textwidth}
\noindent
\begin{minipage}[t]{0.5\textwidth}
  \includegraphics[width=\linewidth, height=0.2cm]{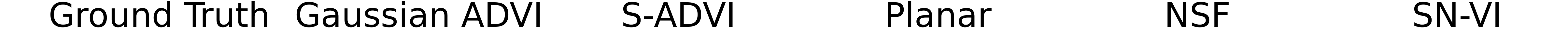}
\end{minipage}
\par\vspace{0.1em} 
\noindent
\begin{minipage}[t]{0.5\textwidth}
  \includegraphics[width=\linewidth, height=1.5cm]{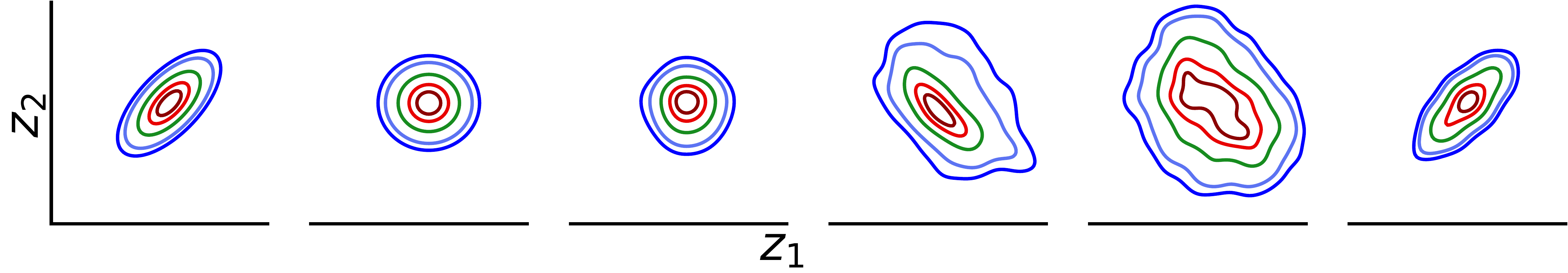}
\end{minipage}
\hfill
\begin{minipage}[t]{0.5\textwidth}
  \includegraphics[width=\linewidth, height=1.5cm]{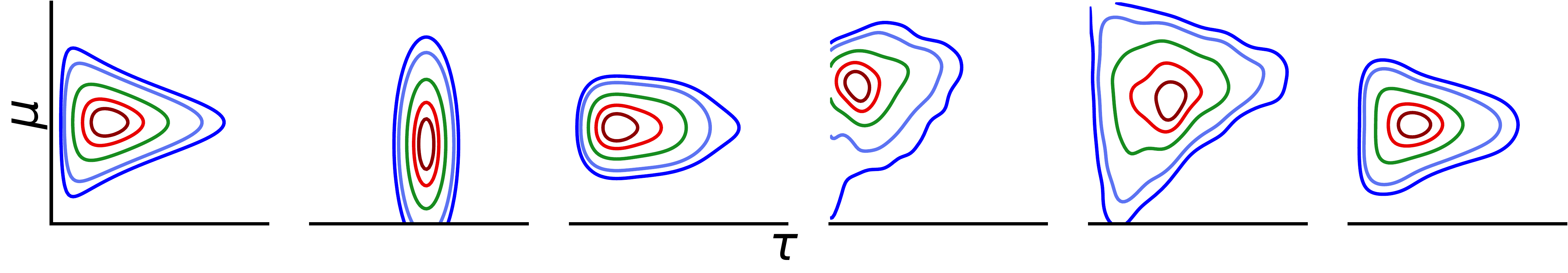}
\end{minipage}
\par\vspace{0.1em}
\noindent
\begin{minipage}[t]{0.5\textwidth}
  \includegraphics[width=\linewidth, height=1.5cm]{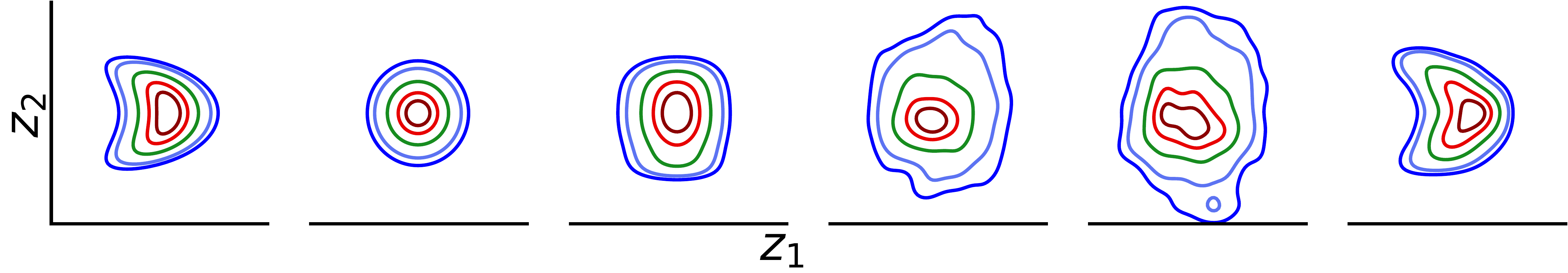}
\end{minipage}
\hfill
\begin{minipage}[t]{0.5\textwidth}
  \includegraphics[width=\linewidth, height=1.5cm]{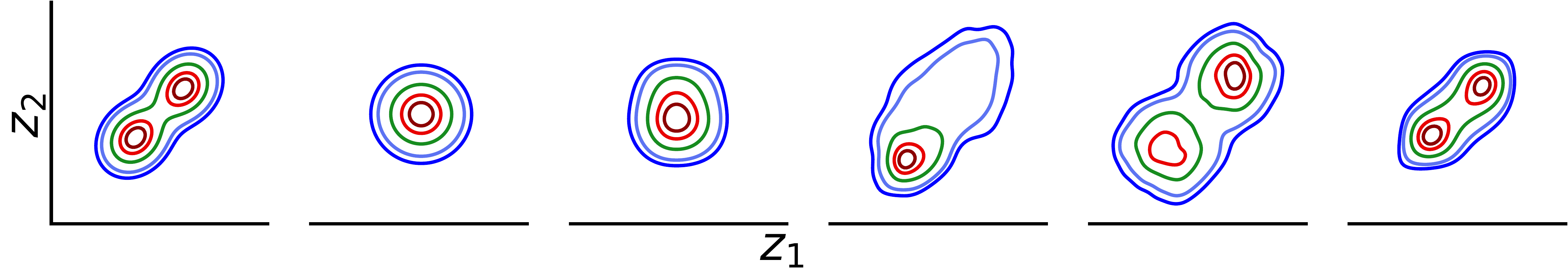}
\end{minipage}
\caption{Posterior approximation results for the four cases, shown row by row, comparing the proposed method with Gaussian-ADVI, S-ADVI, Planar flows, and NSF.}
\label{sn-vi_fig:post_inf}
\end{figure}

\begin{table}[h!]
\caption{Mean (standard deviation)  of RISE and Predictive log-likelihood (Pred) for Cases 1-4.}\label{tab:rise_rotated}
    \small
    \centering
    { 
    \addtolength{\tabcolsep}{-0.1em}
      \begin{tabular}{lcrrrrr}
    \hline\hline
        Metrics & Case  & Gaussian-ADVI  & S-ADVI & Planar & NSF & SN-VI \\ 
        \hline
    \multirow{4}{*}{RISE} 
    & 1 & 0.25 (0.01) & 0.26 (0.01)& 0.46 (0.01)& 0.49 (0.02) & \textbf{0.09} (0.02) \\
    & 2 & 0.42 (0.22) & 0.12 (0.01) &0.26 (0.03) & 0.28 (0.02) & \textbf{0.09} (0.01) \\
    & 3 & 0.11 (0.01) & 0.12 (0.02) &0.21 (0.01) & 0.21 (0.01) & \textbf{0.09} (0.02) \\
    & 4 & 0.26 (0.01) & 0.24 (0.01)& 0.17 (0.01)& 0.19 (0.01) & \textbf{0.13} (0.03)\\
    \hline
    \multirow{4}{*}{Pred} 
    & 1 & -194.79 (3.02)  & -195.75 (1.26)  & -310.50 (1.59) & -310.31 (2.06) &\textbf{-168.03} (1.09)  \\
    & 2 & -321.30 (0.20)  & -132.59 (1.15)  &-205.02 (3.17) &-204.52 (2.80) &\textbf{-130.44} (1.49)  \\
    & 3 & -151.22 (1.75)  & -147.09 (3.71)  & -206.37 (2.91) &-208.17 (3.19) &\textbf{-137.09} (2.45)  \\
    & 4 & -230.26 (2.14) & -234.15 (1.70) & -348.71 (1.48) &-349.46 (2.61) &\textbf{-224.09} (0.91) \\
    \hline
    \end{tabular}
    }
\end{table}

\subsection{Structure Identification}\label{sec:str_exp}
In Section \ref{sec:post_inf}, we assume that the structure of the latent variables is known during training. However, in practical applications, this structure is typically not available. In this section, we design simulation studies to demonstrate that the predictive log-likelihood can serve as an effective metric for SN-VI model selection in more complex Bayesian model settings. The proposed Algorithm 1 provides an effective approach for identifying the dependence structure. Our method is compared with MCMC samplers implemented in JAGS \citep{rjags}, which are considered to represent the true posterior distributions.  We apply the same neural network structures discussed in Section \ref{sec:post_inf}.



Let $\bm{\Sigma} = \big(\begin{smallmatrix}
1 & 0.9&0\\
0.9 & 1&0\\
0&0&1
\end{smallmatrix}\big).$ and $\bm{\mu} = (\mu_{0},\mu_{1},\mu_{2})^{\top}$. We consider the following two settings: 
\begin{itemize}
\item Experiment 1: $\mu_{0},\mu_{1},\mu_{2}$ are independently generated from $\mathcal{N}(0,0.5)$ and $\bm{y} \sim \mathcal{N}(\bm{\mu},\bm{\Sigma})$.
\item Experiment 2: $\mu_{0}, \mu_{1}, \mu_{2}$  are independently generated from LogNormal \((0.1,0.25)\) and $\bm{y}\sim\mathcal{N}(\bm{\mu},\bm{\Sigma})$. 
\end{itemize}
In both experiments, the posterior of \(\mu_0\) and \(\mu_1\) are dependent, while the posterior of \(\mu_2\) is independent of \(\mu_0\) and \(\mu_1\).

 The predictive log-likelihoods were compared across three possible latent-variable structures: fully independent, fully dependent, and mixed with two dependent latent variables and one independent latent variable. For Experiment 1, the corresponding predictive log-likelihoods were $-313.52$, $-303.42$, and $-300.43$, respectively. For Experiment 2, they were $-319.07$, $-310.41$, and $-309.24$, respectively. In both experiments, the mixed structure achieved the highest predictive log-likelihood, indicating that a partially dependent latent structure provides the best fit among the three candidates.

Figure \ref{fig:struc_2} presents the contour plots of the posteriors obtained from JAGS and from the SN-VI model selected for Experiment 2. The posterior estimated by SN-VI closely aligns with the posterior generated by JAGS. These findings indicate that the predictive log-likelihood serves as an effective criterion for model selection. Moreover, in both experiments, the fully dependent model produces predictive log-likelihoods only slightly lower than those of the true structure, suggesting that adopting a more general and complex model results in a modest deterioration in performance. However, the resulting performance decline is minor, indicating that the additional complexity does not severely compromise model quality.

\begin{figure}[!htbp]
\begin{tabular}{c c}
   \includegraphics[width=0.45\columnwidth,height = 4.3cm]{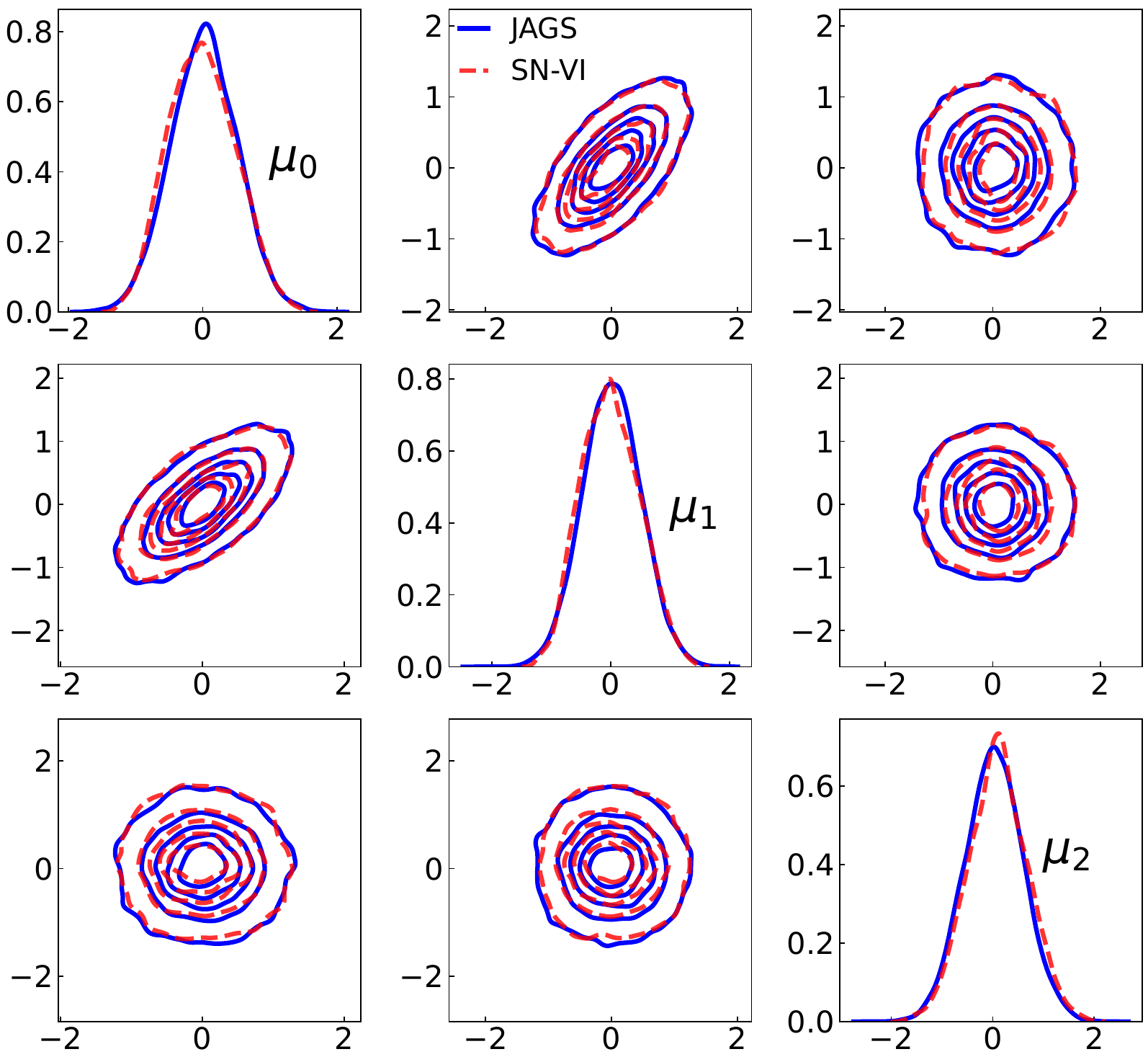}  & \includegraphics[width=0.45\columnwidth,height = 4.3cm]{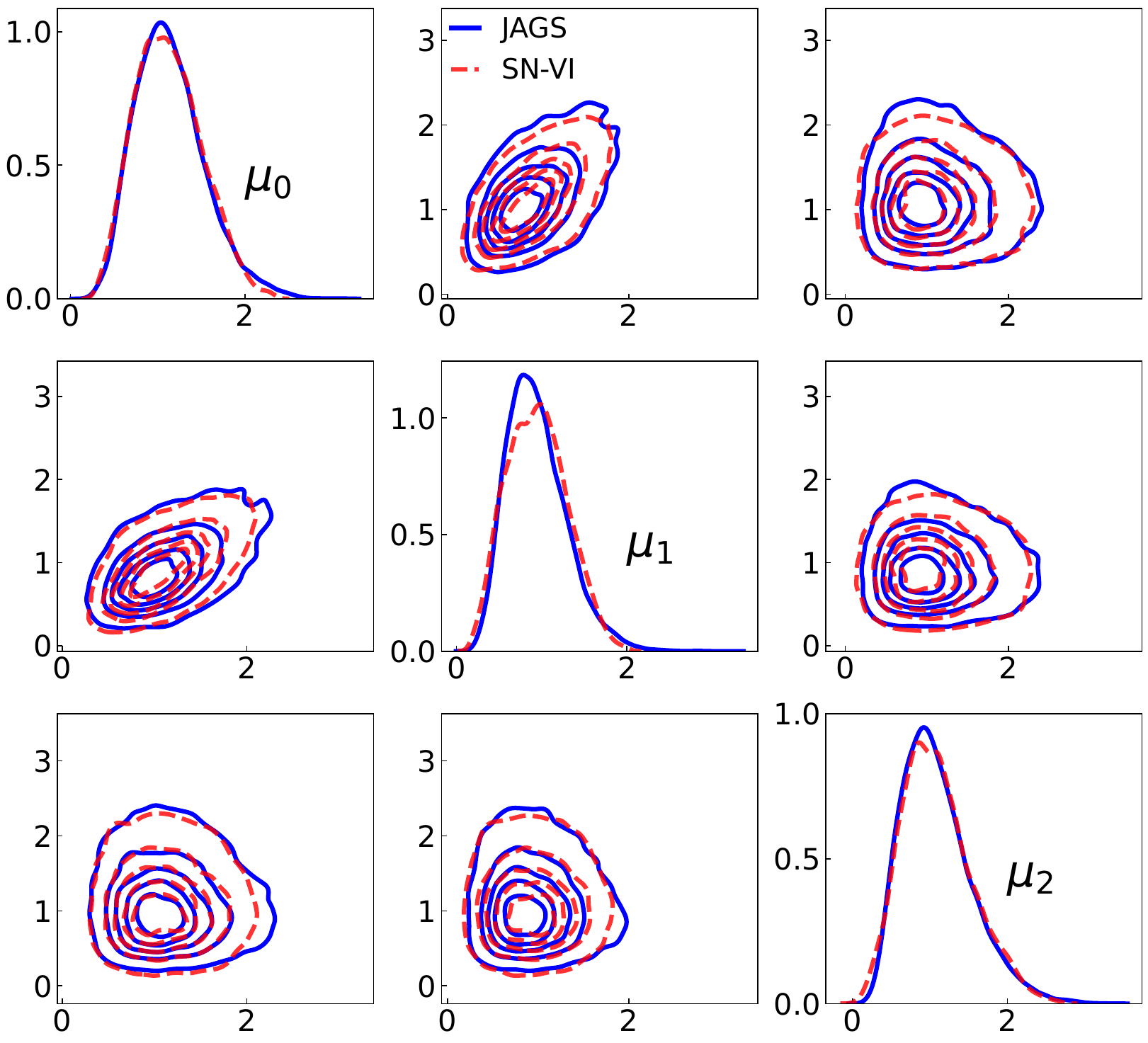} \\
   (a)  & (b)
\end{tabular}
    \caption{(a) Contour plots of MCMC samples (JAGS, ground truth) and the SN-VI estimated posterior for 
$p(\boldsymbol{\mu} \mid \boldsymbol{y} = (0.1, 0.1, 0.1)^{\top})$ under the selected group structure for Experiment~1. (b) Contour plots of MCMC samples (JAGS, ground truth) and the SN-VI estimated posterior for  $p(\bm{\mu} \mid \bm{y}=(0.8,0.5,1.0)^{\top})$ under the selected group structure for Experiment~2. }
    \label{fig:struc_2}
\end{figure}

\section{Real Data Applications}
\label{sec:application}

In this Section, SN-VI is evaluated on two real-data domains: spatial transcriptomics and computer vision.

\subsection{Spatial Transcriptomics} 

Recent advances in spatial transcriptomics (ST) have enabled high-resolution mapping of cellular niches and the investigation of intercellular communication that underlies tissue function \citep{codeluppi2018spatial, asp2017spatial}. 
Original ST data typically contain expression measurements for thousands of genes at each spatial location (spot), resulting in high-dimensional structured data characterized by inherent spatial dependencies and latent factors. 
Numerous methods have been proposed to reduce the dimensionality of ST data to facilitate cell-type or spatial domain identification. 
However, most existing approaches assume independent latent variables, which may obscure correlated gene-expression programs.  

In this section, we apply the proposed SN-VI method to the sagittal-anterior mouse brain ST dataset from 10X Genomics \citep{10x_mouse_brain_sagittal_anterior_2020}. 
SN-VI learns low-dimensional latent representations while preserving dependencies among latent variables, thereby capturing coupled biological signals. The sagittal-anterior mouse brain dataset includes 2,695 tissue spots, each with spatial coordinates 
$\mathbf{s}_i = (s_{i1}, s_{i2})$, and expression profiles for 32,285 genes. 
Following the quality-control recommendations in \citet{mccarthy2017scater}, we retain spots with more than 2,000 detected genes, total counts between 5,000 and 50,000, and mitochondrial content below 25\%. Genes expressed in fewer than 10 spots are excluded.  

To demonstrate the effectiveness of SN-VI, we model the expression count $x_{ig}$ for a specific gene $g$ at spot $i$ using a Negative Binomial (NB) likelihood:
$
x_{ig} \sim \text{NB}(\mu_{ig}, \theta_g),
$
where $\mu_{ig}$ denotes the mean expression level, and $\theta_g$ is a gene-specific inverse dispersion parameter shared across spots. 
Each spot’s expression vector $\bm x_i = \{x_{ig}\}_{g=1}^G$ serves as input, and SN-VI outputs samples from the posterior distribution $p(\bm z_i \mid \bm x_i)$. 
We parameterize $\mu_{ig}$ and $\theta_g$ using a two-layer multilayer perceptron (MLP) with 300--300 hidden units and LeakyReLU activation.

Figure~\ref{fig:st_latent} displays spatial maps of latent dimensions $z_1, \dots, z_8$ inferred by various variational inference methods on the sagittal-anterior mouse brain data. 
Each row corresponds to a different inference method (Gaussian-ADVI, S-ADVI, Planar, NSF, and SN-VI), and each column shows a specific latent dimension. 
Distinct spatial patterns highlight the differing abilities of these methods to capture structured dependencies in the underlying tissue. 
SN-VI with identified structure $\{(z_1,z_2),z_3,z_4,z_5,z_6,z_7,z_8\}$ achieves the lowest negative log-likelihood (821.20 $\pm$ 0.20), outperforming S-ADVI (822.42 $\pm$ 0.10) and Gaussian-ADVI (828.17 $\pm$ 0.41). 
Flow-based approaches such as Planar (862.64 $\pm$ 2.78) and NSF (861.10 $\pm$ 1.38) yield higher values.


We interpret the learned latent variables as low-dimensional representations that summarize the posterior distribution $p(\bm z \mid \bm x)$. As shown in Figure~\ref{fig:st_latent}, SN-VI produces spatially coherent latent maps that align well with major anatomical regions, indicating that the method preserves important spatial organization in the tissue. Joint posterior plots in Figures~\ref{fig:spot_post}(c) -- (d) further illustrate localized dependence between $(z_1, z_2)$ at two representative spots, consistent with the sparse SN-VI structure $\{(z_1,z_2),z_3,z_4,z_5,z_6,z_7,z_8\}$. In addition, the marginal distribution of $z_1$ is sometimes multimodal, suggesting heterogeneous local latent states rather than a single homogeneous regime.


Furthermore, we summarize the dependence  between latent variables $(z_1, z_2)$ by looking at the empirical posterior threshold probability $\widehat{\Pr}(|\rho(z_1,z_2)|>\delta\mid x_i) = \frac{1}{B}\sum_{b=1}^B \mathbf 1(|\hat\rho_i^b|>\delta)$ where $\hat\rho_i  = \mathrm{corr}(z_{i1}^{1:S}, z_{i2}^{1:S})$ is posterior correlation computed from  $S=1000$ draws. There are $B = 200$ bootstrap draws. We choose the threshold $\delta=0.1$ a priori as a small but non-negligible effect-size benchmark, following the ROPE-style practical-significance perspective in Bayesian reporting \citep{kruschke2018rejecting}. 
As shown Figure~\ref{fig:spot_post}(e),
darker colors indicate stronger evidence that the local posterior dependence magnitude exceeds the chosen threshold.
The regions with higher posterior dependence probability between $z_1$ and $z_2$ are concentrated in areas spanning the olfactory bulb, cerebral cortex, and hypothalamus, consistent with prior studies showing strong region-specific spatial transcriptomic organization in these mouse brain areas. 

To provide external biological context for this latent-dependence map, we  performed a marker-gene enrichment analysis using the BRETIGEA mouse-brain marker sets \citep{mckenzie2018brain}. Specifically, we ranked all measured genes by Spearman correlation between normalized log-expression and the spot-level dependence score $\widehat{\Pr}(|\rho(z_1,z_2)|>0.1\mid x_i)$. We then selected the top 500 positively associated genes and tested each BRETIGEA cell-class marker set for over-representation using a one-sided Fisher exact test, with Benjamini--Hochberg correction across marker sets. As shown in Figure~\ref{fig:spot_post}(f), the dependence-associated genes are enriched for astrocyte markers (74 overlapping genes, odds ratio 4.06, FDR $q=1.57\times 10^{-19}$) and neuron markers (76 overlapping genes, odds ratio 3.47, FDR $q=6.92\times 10^{-17}$). This enrichment supports the biological plausibility of the spatially localized latent-dependence pattern captured by SN-VI. This finding is consistent with external mouse-brain transcriptomic atlases showing that neuronal and astrocytic marker programs are major axes of molecular variation in mouse nervous system \citep{mckenzie2018brain,zeisel2018molecular}.

More importantly, to assess their practical usefulness more directly, we evaluate if the latent representation from each method can recover the structure of the fixed expression-based reference partition under a common graph-based Leiden pipeline. To achieve that, the reference partition is constructed with Scanpy's \texttt{pp.pca}, \texttt{pp.neighbors}, and \texttt{tl.leiden} routines \citep{wolf2018scanpy,traag2019leiden} by running a k-nearest-neighbor graph (kNN, $k=15$) on the 50 principal components computed from the expression matrix, followed by Leiden clustering with resolution $0.2$ on the kNN graph, which yields seven clusters. 
Then, for SN-VI and all competing models, we compute the posterior-mean embeddings using $50$ posterior draws for each spot, standardize embeddings within each latent dimension, and cluster the standardized embeddings directly with Scanpy's \texttt{pp.neighbors} and \texttt{tl.leiden} routines.
For each method, Leiden resolution was selected from a pre-specified grid to match the reference cluster count as closely as possible. We then report the Adjusted Rand Index (ARI) and Normalized Mutual Information (NMI), summarized over 20 random seeds.  ARI \citep{hubert1985comparing} quantifies pairwise cluster agreement after adjusting for chance, whereas NMI \citep{vinh2010information} measures the amount of shared clustering information between the predicted and reference partitions on a normalized $[0,1]$ scale. Higher ARIs and NMIs indicate better clustering alignment. To account for the stability of the clustering results, we repeated 20 Leiden clustering runs, each with a different random seed for initialization.

{Table~\ref{tab:st_all_method_clustering} compares SN-VI with all competing methods, where SN-VI remains strongest and improves on the second best model, S-ADVI, by 0.185 in ARI and 0.095 in NMI on average.
Figure~\ref{fig:st_cluster_spatial} provides spatial clustering comparison of all methods with reference partition, where SN-VI shows best recovery of reference partition.
Taken together, these results support a measured conclusion that SN-VI can produce useful spatial embeddings for downstream clustering, and that the downstream recovery of spatial domains depends materially on the chosen latent dependence structure.

\begin{figure}[!ht]
    \centering
    \includegraphics[width=0.98\linewidth]{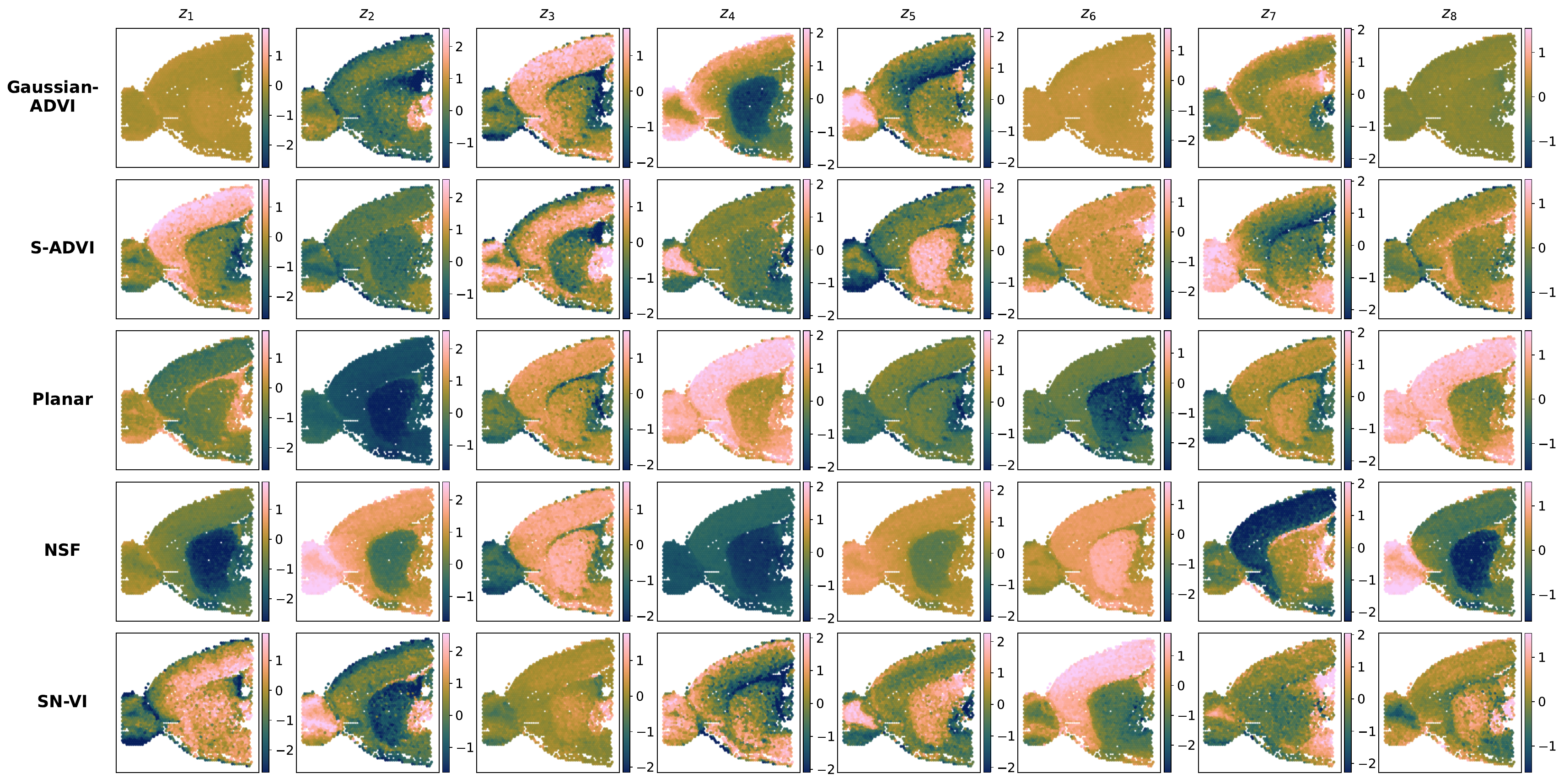}
    \caption{Spatial representation of latent variables $z_1$ to $z_8$ inferred by different variational inference methods on sagittal-anterior mouse brain data. The SN-VI is based on selected group: $\{(z_1,z_2),z_3,z_4,z_5,z_6,z_7,z_8\}$. The magnitude represents the posterior mean of latent variables on each spot using 1,000 posterior draws. Light-colored (dark-colored) represents higher (lower) value of latent variables' posterior means.}
    \label{fig:st_latent}
\end{figure}


\begin{figure}[!htbp]
\centering

\begin{subfigure}[t]{0.3\textwidth}
  \centering
\includegraphics[width=\textwidth]{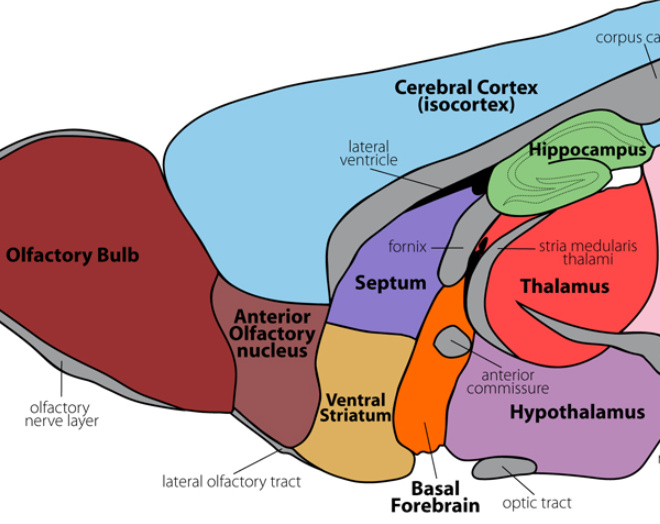}
  \caption{Anatomical Structure}
\end{subfigure} \quad\quad
\begin{subfigure}[t]{0.3\textwidth}
  \centering
 \includegraphics[width=\textwidth]{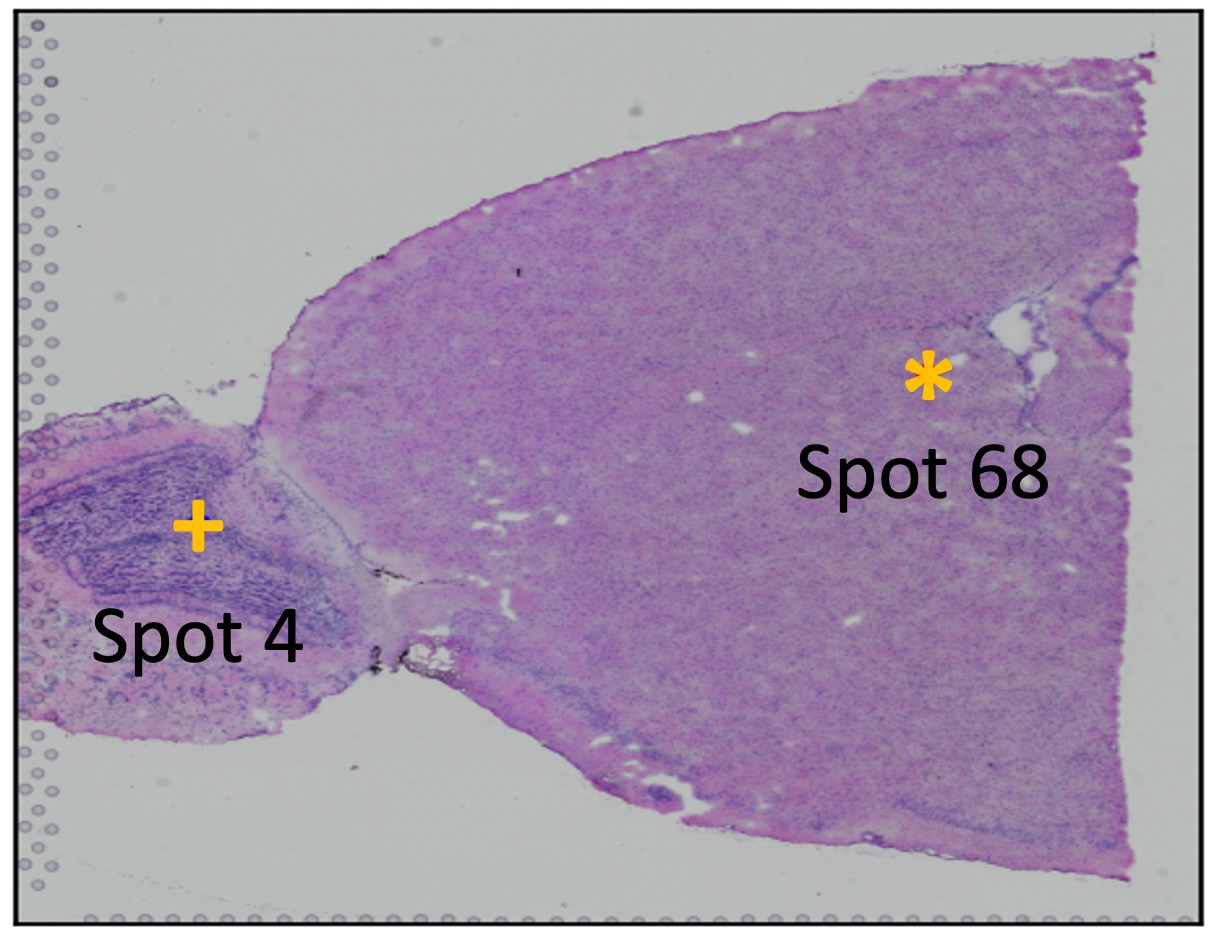}
  \caption{H\&E stained image}
\end{subfigure}
\begin{subfigure}[t]{0.32\textwidth}
  \centering
\includegraphics[width=\textwidth]{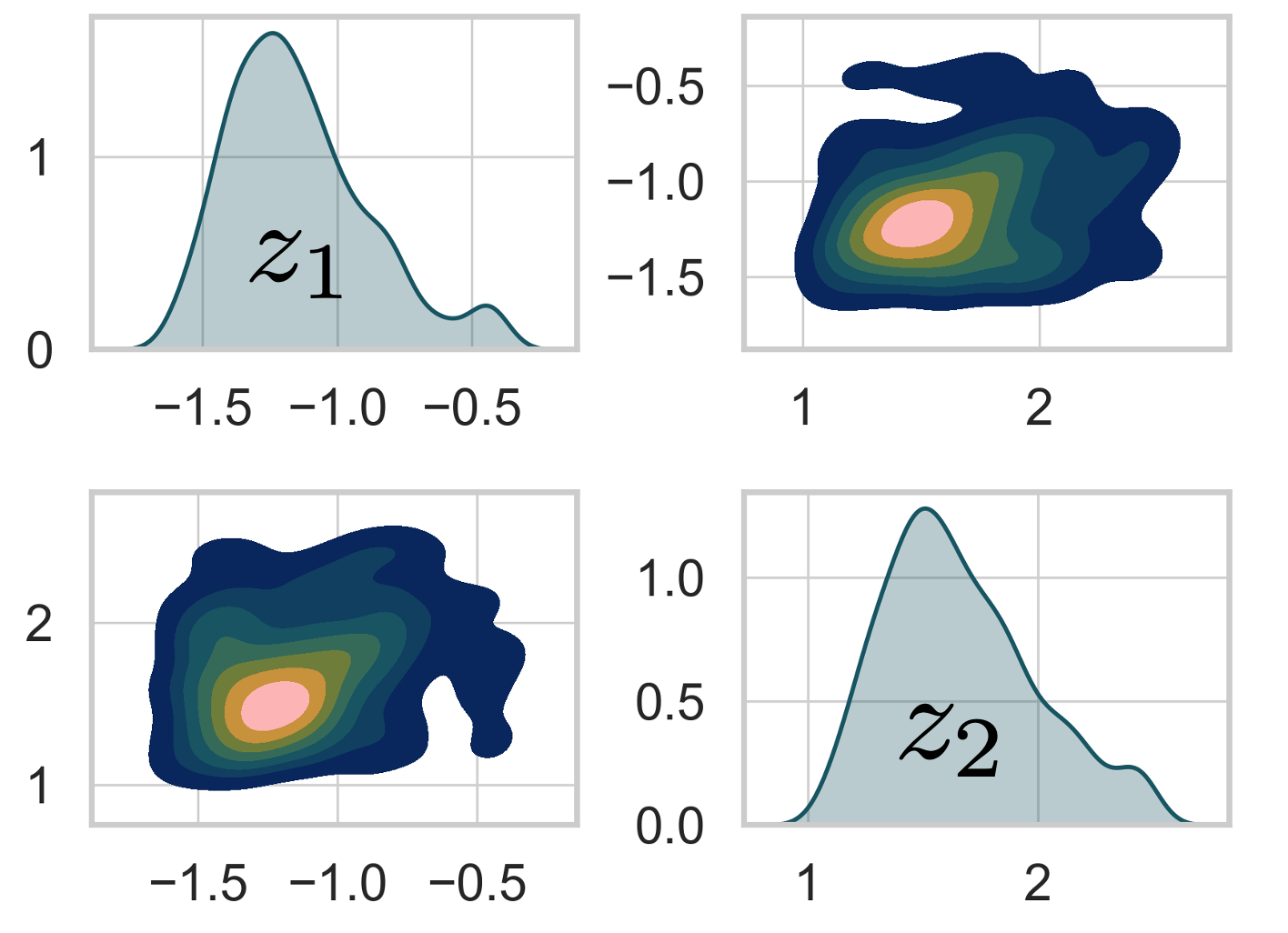}
  \caption{Posterior dist. at Spot 4}
\end{subfigure}
\begin{subfigure}[t]{0.32\textwidth}
  \centering
\includegraphics[width=\textwidth]{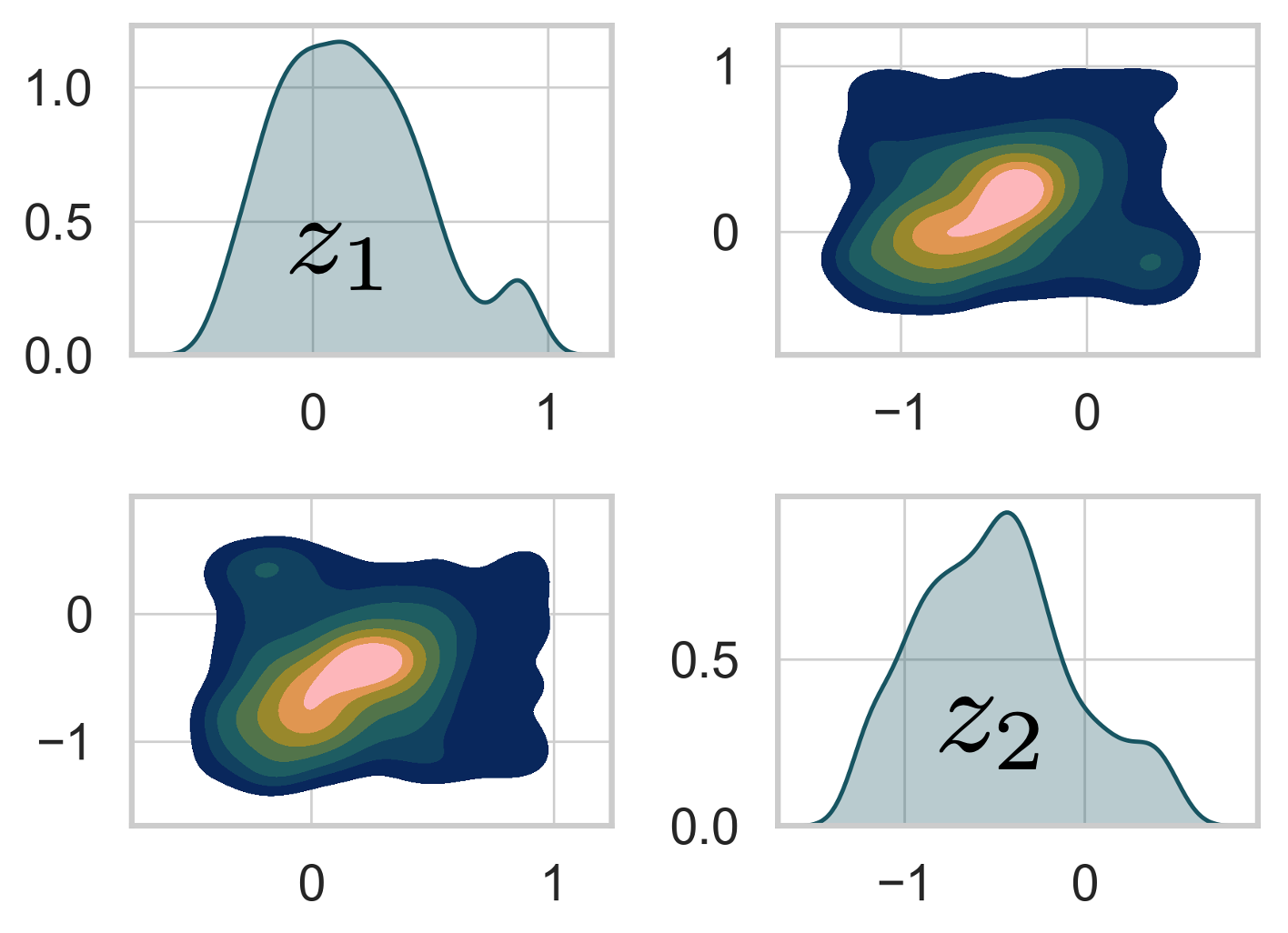}
  \caption{Posterior dist. at Spot 68}
\end{subfigure}
\begin{subfigure}[t]{0.26\textwidth}
  \centering
\includegraphics[width=\textwidth]{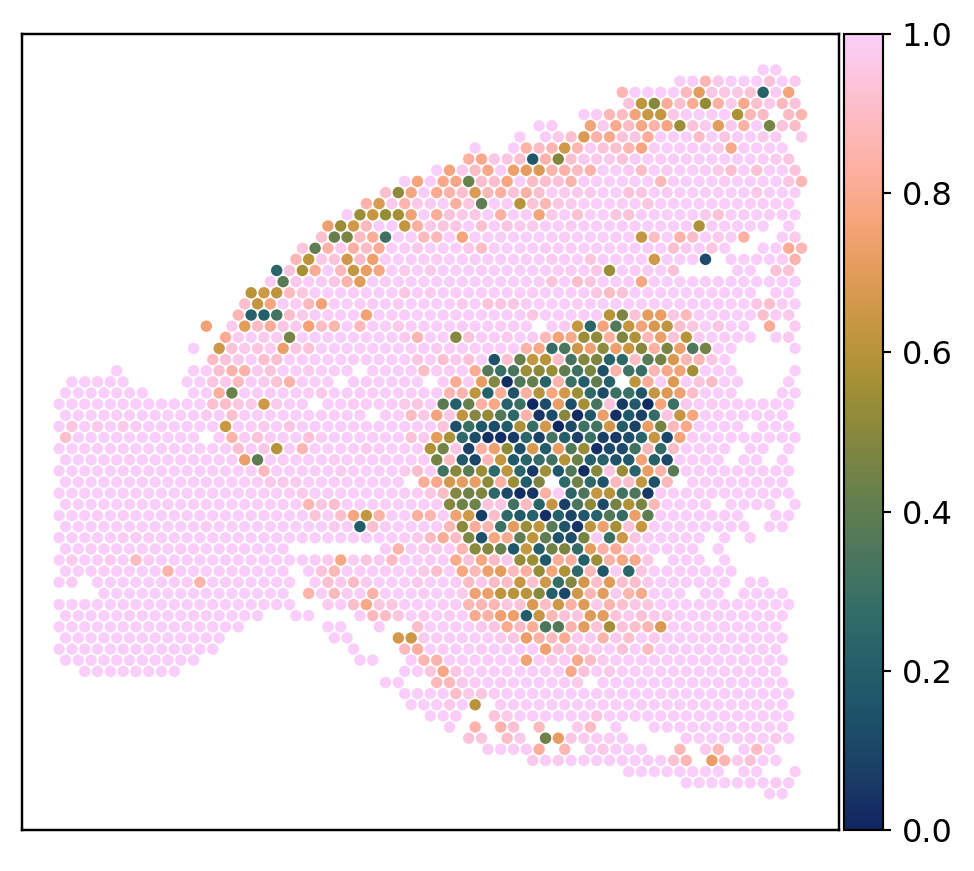}
  \caption{Posterior summary}
\end{subfigure}
\begin{subfigure}[t]{0.4\textwidth}
  \centering
    \includegraphics[width=\linewidth]{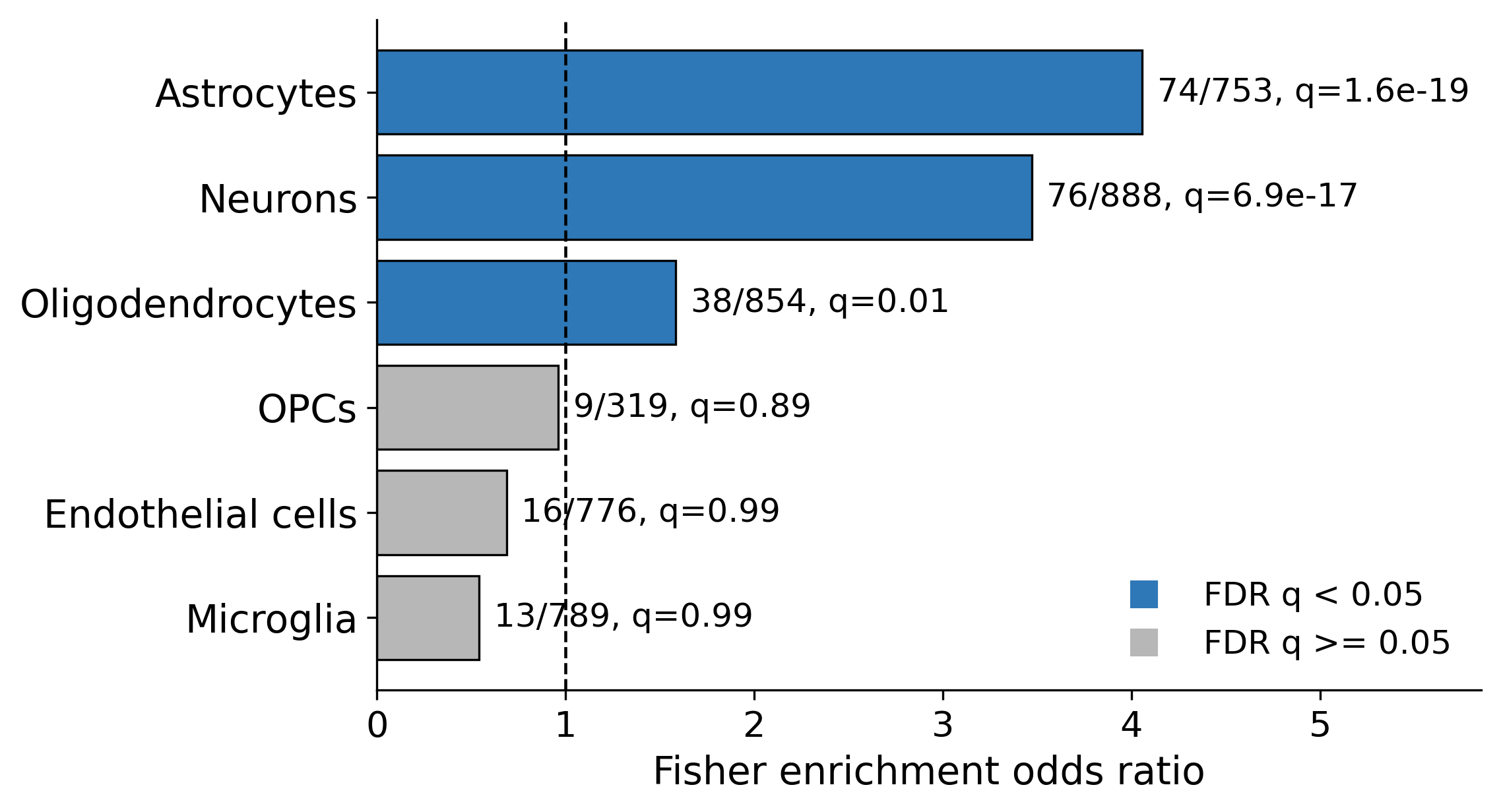}
\caption{Enrichment analysis}
\end{subfigure}
\caption{Sagittal-anterior mouse brain data application results. (a)  Annotated anatomical structure \citep{gensat_npy_image1214}; (b) Hematoxylin and eosin (H\&E) stained image. The randomly sampled spatial locations spot 4 and spot 68 are marked as ``+'' and ``*'', respectively, which are used to approximate posterior distributions shown in (c) and (d); (c) Posterior distributions of latent variables $(z_1, z_2)$ at spot 4; (d) Posterior distributions of $(z_1, z_2)$ at spot 68; (e) Spatial maps of the posterior threshold summary $\widehat{\Pr}(|\rho(z_1,z_2)|>0.1\mid x_i)$, using 1{,}000 posterior draws per spot, and 200 bootstrap resamples; (f) BRETIGEA marker-gene enrichment analysis for genes associated with the SN-VI posterior dependence score $\widehat{\Pr}(|\rho(z_1,z_2)|>0.1\mid x_i)$. Genes are ranked by Spearman correlation between normalized log-expression and the dependence score, and the top 500 positively associated genes are tested for over-representation in each BRETIGEA mouse-brain marker set using a one-sided Fisher exact test with Benjamini--Hochberg correction.}
    \label{fig:spot_post}
\end{figure}



\begin{table}[!htbp]
\centering
\caption{Cross-method downstream clustering comparison on the filtered sagittal-anterior mouse brain ST data. 
Values are mean $\pm$ standard deviation over twenty random seeds.}
\label{tab:st_all_method_clustering}
{
\begin{tabular}{lccccc}
\hline\hline
Metric & SN-VI & S-ADVI & NSF & Gaussian-ADVI & Planar \\
\hline
ARI & $0.768 \pm 0.037$ & $0.583 \pm 0.039$ & $0.572 \pm 0.023$ & $0.590 \pm 0.037$ & $0.447 \pm 0.033$ \\
NMI & $0.774 \pm 0.023$ & $0.679 \pm 0.017$ & $0.660 \pm 0.020$ & $0.641 \pm 0.012$ & $0.560 \pm 0.010$ \\
\hline
\end{tabular}
}
\end{table}

\begin{figure}[!htbp]
\centering
\includegraphics[width=0.6\linewidth]{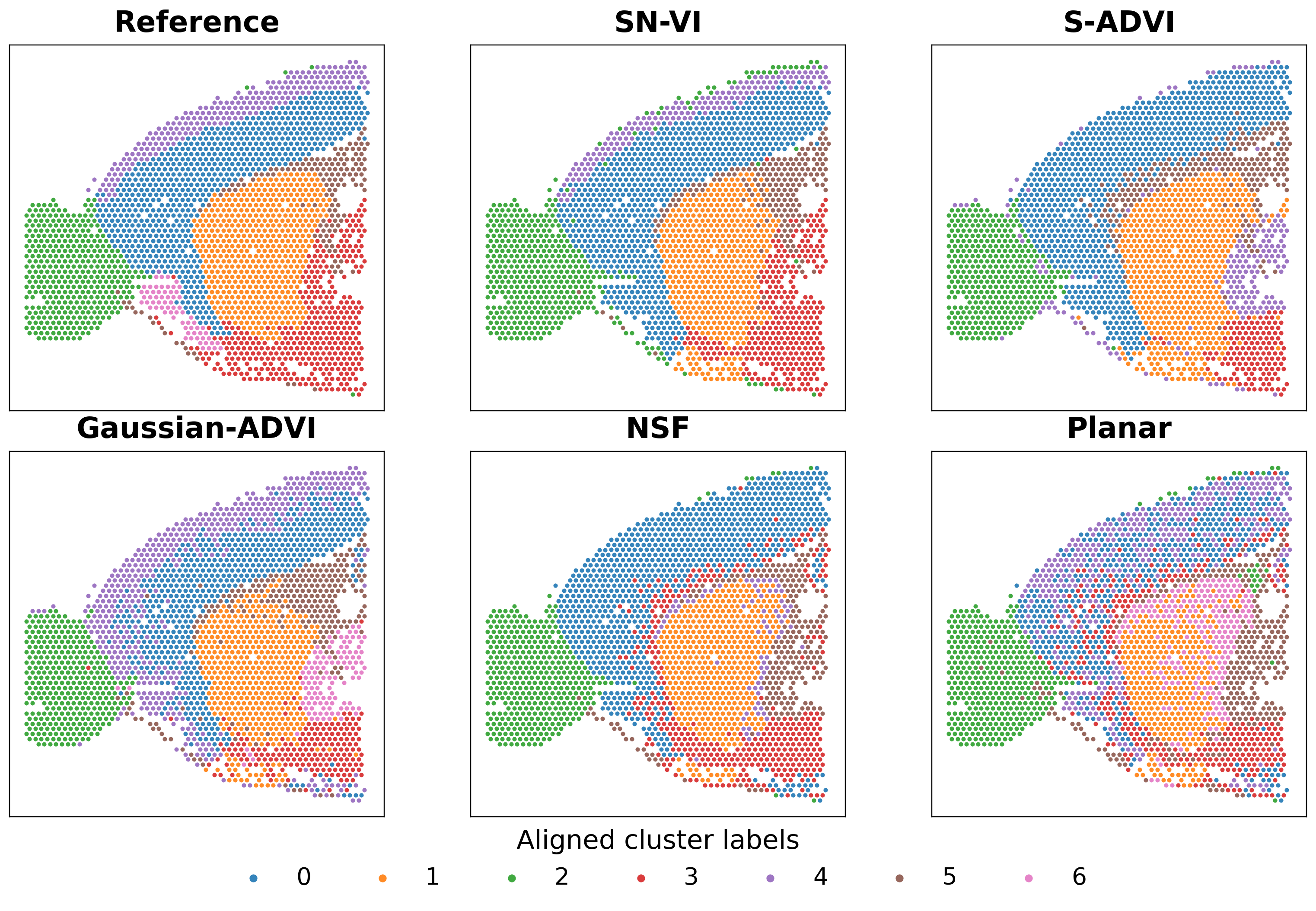}
\caption{Spatial clustering comparison on the filtered sagittal-anterior mouse brain ST data. The reference panel shows the fixed seven-cluster partition derived from the filtered expression matrix, and the remaining panels show representative Leiden clusterings from SN-VI, S-ADVI, Gaussian-ADVI, NSF, and Planar under the same posterior-mean-embedding pipeline. For each method, the displayed seed is the run whose ARI/NMI is closest to that method's 20-seed average. For display, predicted cluster labels are aligned to the reference labels so that colors are comparable across panels.}
\label{fig:st_cluster_spatial}
\end{figure}

\subsection{Single Column Classification} We conduct classification tasks with the computer vision datasets MNIST and Fashion-MNIST (FMNIST) to demonstrate the performance of SN-VI. To showcase flexible approximation and dependency preservation, the training data only includes the center column of the images as input. Figure C.5 shows a single-column sample. For each image, we use only the pixels within the red-dashed lines as training input.

We evaluate the performance of SN-VI  with $H = \{3,7\}$, $J =\{2,4,6,8\}$, and $T =10$. The optimal group structure is determined via Algorithm 1. For comparison, we combine VIB with Gaussian-ADVI, S-ADVI, and normalizing flows model (Planar with 10 flows and Radial with 40 flows) \citep{rezende2015variational}. For S-ADVI, we set $H=3$ for all the independent latent variables. 100 samples are drawn from $q_{\phi}(\bm z| \bm x)$ and the mean of logits for the model output are calculated. The experiment is repeated 10 times, and the mean and standard deviation of the classification accuracy are recorded. Figure \ref{fig:real_perf} (a) presents the performance of SN-VI and other compared methods. Overall, SN-VI outperforms other methods in the FMNIST dataset. As $J$ increases, the performance of SN-VI is better since more dependent structures are preserved. Additional experiment results and implementation details are stated in Supplement Section C.4 with different combinations of $H$ and $J$ to demonstrate the influence of hyperparameters.

\begin{figure}[!h]
\centering
\captionsetup{width=1\textwidth}
\begin{minipage}{0.8\textwidth}
\centering
\includegraphics[width=\linewidth]{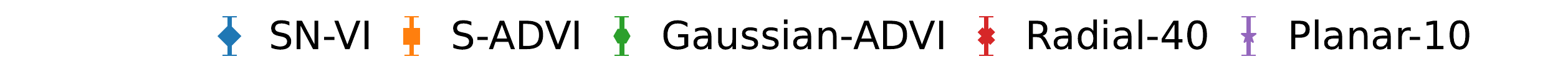}
    
\end{minipage}
\begin{subfigure}[t]{0.48\textwidth}
  \centering
  \includegraphics[width=\linewidth]{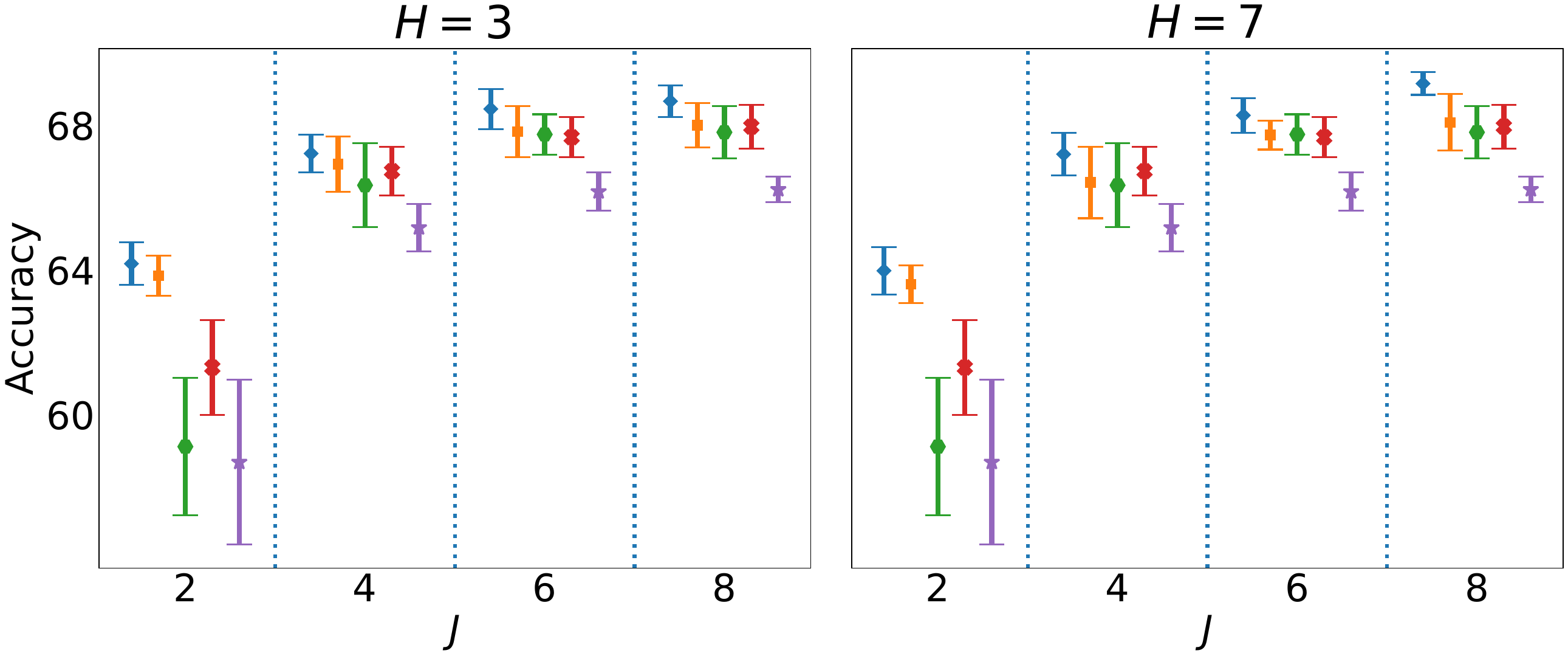}
  \caption{Classification performance (FMNIST).}
  \label{fig:clf_fmnist_sub}
\end{subfigure}
\hfill
\begin{subfigure}[t]{0.48\textwidth}
  \centering
  \includegraphics[width=\linewidth]{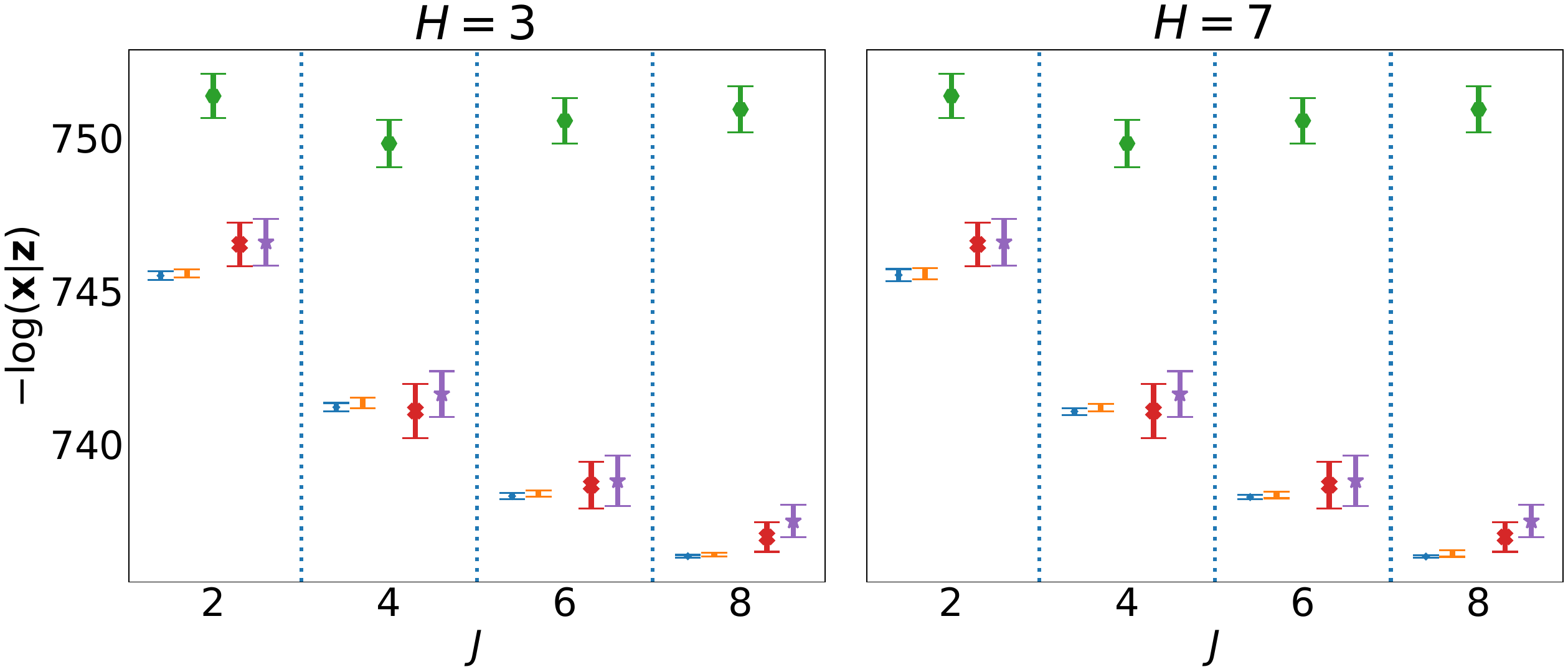}
  \caption{Reconstruction performance (KMNIST).}
  \label{fig:recon_kmnist}
\end{subfigure}

\vspace{-0.5em}

\caption{Performance comparison with varying number of latent variables $J$ and interior knots $H$ on FMNIST and KMNIST. The full numerical comparison results can be found in Table {C.4} and Table {C.5} in the supplementary materials.}
\label{fig:real_perf}
\end{figure}

\subsection{Image Reconstruction}
\label{sec:recon}
In this section, we demonstrate the capability of SN-VI in reconstructing images from the KMNIST dataset \citep{clanuwat2018deep}, comprised of a training set of 60,000 examples and a testing set of 10,000 examples of handwritten Kuzushiji (cursive Japanese) Hiragana characters. To increase the reconstruction difficulty and further evaluate the robustness of SN-VI, we corrupt the observed images with additive Gaussian noise. Specifically, the noise $\boldsymbol{\xi}$ is generated from $N(\mathbf{0}, \sigma_{\xi}^{2}\mathbf{I})$, and the corrupted observations are defined as $\bm{x}_{\text{corr}}=\bm{x}+\boldsymbol{\xi}$. The corruption level is controlled by the magnitude of $\sigma_{\xi}$.

Using the same neural network architecture as in Supplement Section C.4, we replace the decoder with a two-layer MLP with 512 and 256 hidden units. We then train SN-VI, S-ADVI, Gaussian-ADVI, Radial-40, and Planar-10 with $\sigma_{\xi}=0.25$, repeating each experiment 10 times. We report the mean and standard deviation of the negative log-likelihood $-\log p(\bm{x}\mid \bm{z})$, where we assume $p(\bm{x}\mid \bm{z})$ follows a Gaussian distribution.

Figure \ref{fig:real_perf} (b) summarizes the reconstruction performance across all methods. Gaussian-ADVI exhibits relatively limited performance due to its mean-field and Gaussian approximation assumptions. By contrast, SN-VI achieves better reconstruction accuracy than the normalizing-flow baselines, benefiting from its ability to preserve latent dependency structures and to flexibly approximate complex posterior distributions.

To further illustrate qualitative differences, Figure \ref{fig:kmnist_recon_eg} visualizes representative reconstructions from the test set. All methods can reconstruct relatively simple noisy images. However, for more complex characters, SN-VI and S-ADVI produce clearer reconstructions than the other methods. Although the reconstructions from SN-VI and S-ADVI are often similar, the ninth example, where the image has a more intricate structure, shows that SN-VI performs better and highlights the advantage of SN-VI over S-ADVI.

\begin{figure}[!htbp]
\centering\includegraphics[width=0.6\columnwidth]{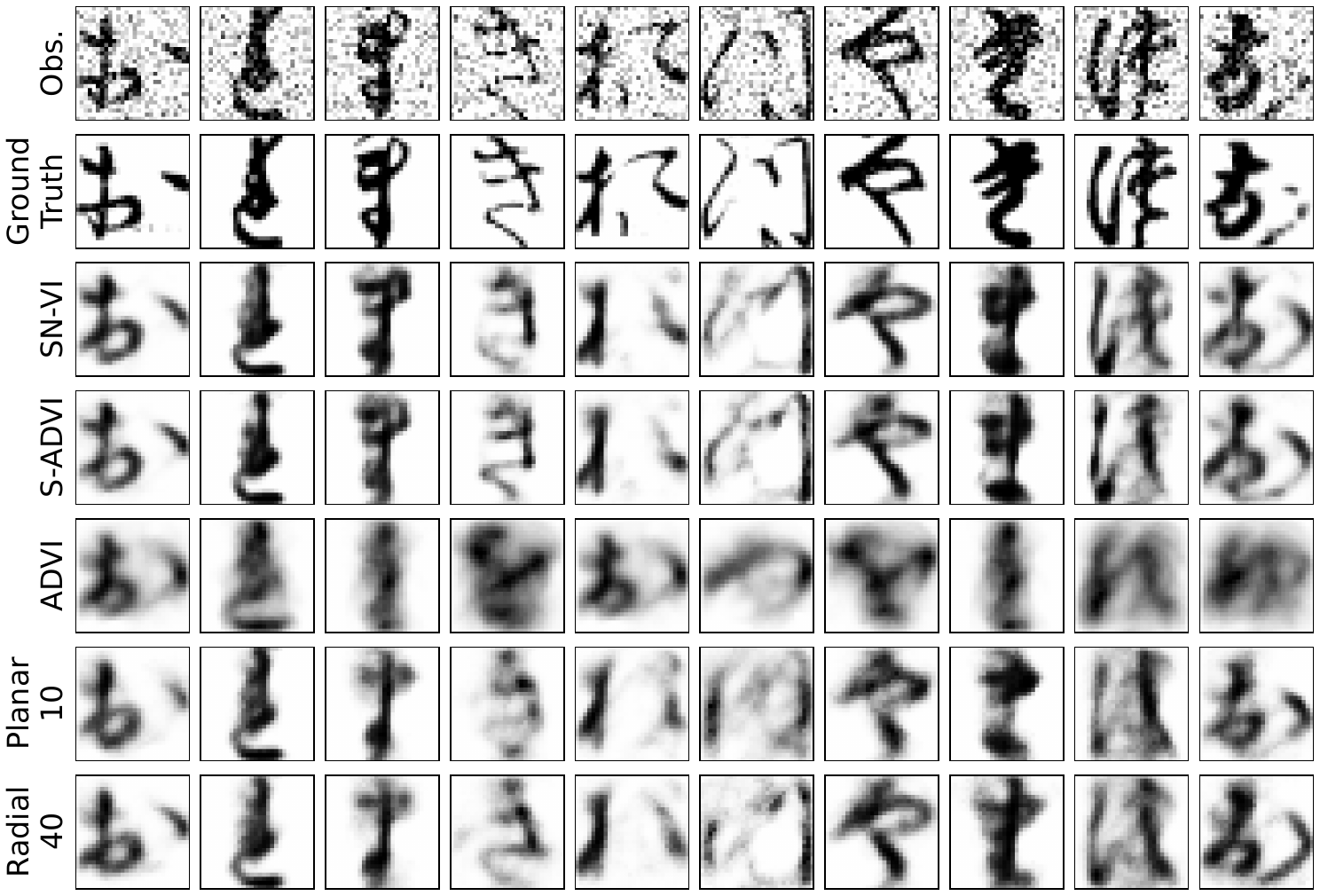}
    \caption{KMNIST image reconstruction examples from the test dataset.}
    \label{fig:kmnist_recon_eg}
\end{figure}

\section{Discussion}
We introduce a flexible nonparametric variational family by incorporating multivariate spline approximation to approximate variational distributions with complex shapes while preserving the dependency structure among latent variables. Leveraging the approximation power of multivariate splines, we establish the posterior estimation consistency of SN-VI. SN-VI not only achieves accuracy comparable to MCMC across various inference distributions but also improves the performance of generative models.


First, the proposed method is particularly suitable for posterior dependence structures that are sparse, local, or effectively low-rank. Although the tensor-product spline basis is flexible enough to approximate general multivariate dependence within each group, its computational cost grows exponentially with the group size. Therefore, for dependence structures involving a high degree of interactions among many latent variables, directly fitting a full high-dimensional tensor-product spline may become computationally prohibitive. In such cases, additional structural approximations, such as basis subsampling, sparse tensor-product bases, low-rank spline decompositions, or hierarchical/tree-structured variational families, may be needed. This observation is also consistent with the design principle of SN-VI: rather than modeling all latent variables jointly, SN-VI aims to identify relatively small dependent groups and model the joint dependence within each group. This allows the method to capture important non-mean-field posterior dependence while keeping the dimension of each spline approximation moderate.

Second, although this paper focuses on a group-based dependency structure, the proposed SN-VI framework is not restricted to this particular form of dependence. The key idea of SN-VI is to approximate low-dimensional components of the posterior distribution using flexible spline-based variational factors. Therefore, the same framework can be naturally extended to other structured dependency representations, such as, a tree-structured posterior approximation, where the latent variables or latent groups are organized according to a hierarchical tree and the posterior is factorized through parent-child conditional relationships. Such a formulation would allow dependence to propagate across groups and would relax the independence assumption between different groups in the current implementation. This type of tree structure may be particularly useful for time-series and spatial data, where latent variables often exhibit ordered, multiscale, or neighborhood-based dependence patterns.
\section{Significance Statement}

Modern scientific applications, including spatial transcriptomics, single-cell genomics, biomedical imaging, and AI-based image analysis, increasingly rely on latent-variable models to uncover hidden structure in high-dimensional data. In these problems, latent variables often exhibit nonlinear and scientifically meaningful dependencies. However, commonly used variational inference methods may either ignore such dependencies through mean-field assumptions or use highly flexible transformations that are difficult to interpret. 

We propose structured nonparametric variational inference (SN-VI), a flexible framework that uses multivariate splines to approximate complex posterior distributions while preserving structured latent dependence. SN-VI is especially well suited for applications where latent interactions carry scientific meaning, such as coordinated biological signals in spatial transcriptomics, correlated cellular states in single-cell and multi-omics studies, coupled pathological patterns in biomedical imaging, and robust image classification or reconstruction. Our results show that SN-VI can uncover richer latent structures and improve empirical performance over existing variational methods, demonstrating how nonparametric statistical ideas can enhance the reliability, interpretability, and scientific usefulness of modern AI tools.

\bibliographystyle{imsart-nameyear}
\bibliography{reference}
\newpage
\begin{center}
\Large
SUPPLEMENT TO ``STRUCTURED NONPARAMETRIC VARIATIONAL INFERENCE FOR DEPENDENT LATENT MODELING''
\end{center}
\renewcommand{\thesection}{\Alph{section}}
\renewcommand{\thefigure}{\Alph{section}.\arabic{figure}}
\renewcommand{\thetable}{\Alph{section}.\arabic{table}}

\setcounter{section}{0}
\setcounter{figure}{0}
\setcounter{table}{0}

\section{Additional Lemmas and Proofs Details}
\label{SEC:spline}

\subsection{Proof of Theorem 4.2}
\label{SEC:THE:Approximation-Proof}
Lemma 4.1 shows that the proposed SN-VI allows us to quantify the Lower Bound of IWAE.
\begin{proof} [Proof of Lemma 4.1]
We begin the proof with one group structure $|\mathcal{G}_{m}|$ with finite support $\mathcal{T}^{|\mathcal{G}_{m}|}$. According to Remark 2.2, for any posterior $p(\widetilde{\bm z}_{m}|\bm x) \in \mathcal{H}^{(\varrho)}(\mathcal{T}^{|\mathcal{G}_{m}|})$, there exists a spline function $s^{\ast}( \widetilde{\bm z}_{m}) = \sum_{\bm \kappa \in \bm \Kappa_{|\mathcal{G}_{m}|}} \gamma_{\bm \kappa} b_{\bm \kappa}(\widetilde{\bm{z}}_{m})$ such that $\sup_{\widetilde{\bm z}_{m} \in \mathcal{T}^{|\mathcal{G}_{m}|}}|p(\widetilde{\bm z}_{m}|\bm x) - s^{\ast}(\widetilde{\bm z}_{m})|\leq C_{1} H^{-(\varrho_m+1)}$. We denote the corresponding spline coefficients as $\phi^{\ast}$, and the optimal spline function as $\widehat{s}(\cdot)$  whose spline coefficients satisfy $\widehat{\phi} = \argmax   \mathcal{L}_{\text{IWAE}}(\phi)$. Therefore, we have $ \mathcal{L}_{\text{IWAE}}(\widehat{\phi})  \geq \mathcal{L}_{\text{IWAE}}(\phi^\ast).$ We denote $\Delta(\widetilde{\bm z}_{m}) = p(\widetilde{\bm z}_{m}|\bm x) - s^{\ast}(\widetilde{\bm z}_{m})$ and $\sup_{\widetilde{\bm z}_{m}\in \mathcal{T}^{|\mathcal{G}_{m}|}}|\Delta(\widetilde{\bm z}_{m})|\leq C_{1} H^{-(\varrho_m+1)}$. Then, we have
\begin{align}
\notag
    & \quad \mathcal{L}_{\text{IWAE}} (\widehat{\phi})  \geq  \mathcal{L}_{\text{IWAE}}(\phi^{\ast}) \\
\notag & = \int \left[ \log \frac{1}{T} \sum_{t=1}^T \frac{p\left(\bm x|\widetilde{\bm z}^{(t)}_{m}\right) p\left(\widetilde{\bm z}^{(t)}_{m}\right)}{p(\widetilde{\bm z}^{(t)}_{m}|\bm x) - \Delta(\widetilde{\bm z}^{(t)}_{m}|\bm x)} \right] \left\{\prod_{t=1}^T \left[p(\widetilde{\bm z}^{(t)}_{m}|\bm x) - \Delta(\widetilde{\bm z}^{(t)}_{m})\right]d\widetilde{\bm z}^{(t)}_{m} \right\} \\
    \notag
& = \int \left\{ \log \left[p(\bm x) + \frac{1}{T} \sum_{t=1}^T \frac{\Delta(\widetilde{\bm z}^{(t)}_{m}) p\left(\bm x|\widetilde{\bm z}^{(t)}_{m}\right) p\left(\widetilde{\bm z}^{(t)}_{m}\right)}{\left\{p\left(\widetilde{\bm z}^{(t)}_{m}\bm \right) - \Delta(\widetilde{\bm z}^{(t)}_{m})\right\} p(\widetilde{\bm z}^{(t)}_{m}|\bm x)} \right]\right\} \left\{\prod_{t=1}^T \left[p(\widetilde{\bm z}^{(t)}_{m}|\bm x) - \Delta(\widetilde{\bm z}^{(t)}_{m})\right]d\widetilde{\bm z}^{(t)}_{m} \right\} \\
\notag
    & \geq \int \left\{ \log \left[p(\bm x) - C_2H^{-(\varrho_m+1)} \right] \right\}\left\{\prod_{t=1}^T \left[p(\widetilde{\bm z}^{(t)}_{m}|\bm x) - C_3H^{-(\varrho_m+1)}\right]d\widetilde{\bm z}^{(t)}_{m} \right\} \\
     \label{EQU:S-ADVI-1}
    & \geq \log p(\bm x) - C_4 H^{-(\varrho_m+1)},
\end{align}
which yields the results in Lemma 4.1.

Next, we consider latent variables with infinite support. For a given $\epsilon$, consider a finite support $\mathcal{T}^{|\mathcal{G}_m|}$ such that $\int_{\mathcal{T}^{|\mathcal{G}_m|}}p\left(\widetilde{\bm z}_{m}
|\bm x\right) d\widetilde{\bm z}_{m}
 \geq 1-\epsilon$. The spline-based posterior has a finite support on $\mathcal{T}^{|\mathcal{G}_{m}|}$. According to the definition of IWAE and the fact that the IWAE has a tighter bound than ELBO, we have
\begin{align*}
    \mathcal{L}_{\text{IWAE}}&(\phi)  =  \mathbb{E}_{\left\{\widetilde{\bm z}^{(t)}_{m}  \sim q_\phi(\widetilde{\bm z}_{m} | \bm x)\right\}_{t=1}^T} \left[\log \frac{1}{T} \sum_{t=1}^T \frac{p\left(\bm x|\widetilde{\bm z}^{(t)}_{m} \right) p\left(\widetilde{\bm z}^{(t)}_{m} \right)}{q_\phi\left(\widetilde{\bm z}^{(t)}_{m} | \bm x\right)}\right] \\
    & =  \int_{\mathcal{T}^{|\mathcal{G}_{m}|}} \left[ \log \frac{1}{T} \sum_{t=1}^T \frac{p\left(\bm x|\widetilde{\bm z}^{(t)}_{m} \right) p\left(\widetilde{\bm z}^{(t)}_{m} \right)}{q_{\phi}(\widetilde{\bm z}^{(t)}_{m}| \bm x)} \right] \left[ \prod_{t=1}^T q_{\phi}(\widetilde{\bm z}^{(t)}_{m}| \bm x)\right] d\widetilde{\bm z}^{(1)}_{m} d\widetilde{\bm z}^{(2)}_{m} \ldots d\widetilde{\bm z}^{(T)}_{m} \\
    & \quad + \int_{\mathbb{R}^{|\mathcal{G}_{m}|}/ \mathcal{T}^{|\mathcal{G}_{m}|}} \left[ \log \frac{1}{T} \sum_{t=1}^T \frac{p\left(\bm x|\widetilde{\bm z}^{(t)}_{m} \right) p\left(\widetilde{\bm z}^{(t)}_{m} \right)}{q_{\phi}(\widetilde{\bm z}^{(t)}_{m}| \bm x)} \right] \left[ \prod_{t=1}^T q_{\phi}(\widetilde{\bm z}^{(t)}_{m}| \bm x)\right] d\widetilde{\bm z}^{(1)}_{m} d\widetilde{\bm z}^{(2)}_{m} \ldots d\widetilde{\bm z}^{(T)}_{m}\\
    & \geq \int_{\mathcal{T}^{|\mathcal{G}_{m}|
}} \left[ \log \frac{p\left(\bm x|\widetilde{\bm z}_{m}\right) p\left(\widetilde{\bm z}_{m}\right)}{q_{\phi}(\widetilde{\bm z}_{m}| \bm x)} \right] q_{\phi}(\widetilde{\bm z}_{m}| \bm x)  d\widetilde{\bm z}_{m} \\
& \quad + \int_{\mathbb{R}^{|\mathcal{G}_{m}|
}/\mathcal{T}^{|\mathcal{G}_{m}|
}} \left[ \log \frac{p\left(\bm x|\widetilde{\bm z}_{m}\right) p\left(\widetilde{\bm z}_{m}\right)}{q_{\phi}(\widetilde{\bm z}_{m}| \bm x)} \right]  q_{\phi}(\widetilde{\bm z}_{m}| \bm x) d\widetilde{\bm z}_{m}\\
&= \int_{\mathcal{T}^{|\mathcal{G}_{m}|
}} \left[ \log \frac{p\left(\bm x|\widetilde{\bm z}_{m}\right) p\left(\widetilde{\bm z}_{m}\right)}{q_{\phi}(\widetilde{\bm z}_{m}| \bm x)} \right] q_{\phi}(\widetilde{\bm z}_{m}| \bm x)  d\widetilde{\bm z}_{m}\\
    & = \log p(\bm x) + \int_{\mathcal{T}^{|\mathcal{G}_{m}|
}} \left[ \log \frac{p\left(\widetilde{\bm z}_{m}| \bm x\right)}{q_{\phi}(\widetilde{\bm z}_{m}| \bm x)} \right]  q_{\phi}(\widetilde{\bm z}_{m}| \bm x)d\widetilde{\bm z}_{m} \\
&\geq \log p(\bm x) + \int_{\mathcal{T}^{|\mathcal{G}_{m}|}} \left[ \log \frac{p^{\textrm{T}}\left(\widetilde{\bm z}_{m}|\bm x\right)}{q_{\phi}(\widetilde{\bm z}_{m}| \bm x)} + \log (1-\epsilon) \right]  q_{\phi}(\widetilde{\bm z}_{m}| \bm x) d\widetilde{\bm z}_{m},
\end{align*}
where $p^{\textrm{T}}\left(
\widetilde{\bm z}_{m}|\bm x\right) = (\int_{\mathcal{T}^{|\mathcal{G}_{m}|}}p\left(
\widetilde{\bm z}_{m}|\bm x\right) d\widetilde{\bm z}_{m}
)^{-1}p\left(\widetilde{\bm z}_{m}
|\bm x\right) $ is the density function for random variable $\widetilde{\bm z}_{m}
|\bm x$ truncated on the interval $\mathcal{T}^{|\mathcal{G}_{m}|
}$. Notice that when $\mathcal{T}^{|\mathcal{G}_{m}|}
$ is properly chosen and we can apply the conclusion from (\ref{EQU:S-ADVI-1}).
\[
\log p(\bm x) + \int_{\mathcal{T}^{|\mathcal{G}_{m}|}} \left[ \log \frac{p^{\textrm{T}}\left(\widetilde{\bm z}_{m}|\bm x\right)}{q_{\phi}(\widetilde{\bm z}_{m}| \bm x)} + \log (1-\epsilon) \right]  q_{\phi}(\widetilde{\bm z}_{m}| \bm x) d\widetilde{\bm z}_{m} \geq \log p(\bm x) - C_5 H^{-(\varrho_m+1)} - C_5\epsilon,
\]
which yields the results in Lemma 4.1.

It is straightforward to extend the above results to the multivariate case. If Assumption (A2) holds, then we have 
\begin{align*}
  \int_{\mathcal{T}^{|\mathcal{G}_{1}|} \cdots \mathcal{T}^{|\mathcal{G}_{M}|}} & \left[ \log \frac{p \left(\bm z|\bm x\right)}{q_{\phi}(\bm z| \bm x)}\right]  q_{\phi}(\bm z| \bm x) d\bm z \\
  & = \int_{\mathcal{T}^{|\mathcal{G}_{1}|} \cdots \mathcal{T}^{|\mathcal{G}_{M}|}} \sum_{m=1}^M \left[ \log \frac{p \left(\widetilde{\bm z}_{m}|\bm x\right)}{q_{\phi}(\widetilde{\bm z}_{m}| \bm x)}\right] \left[\prod_{m=1}^M q_{\phi}(\widetilde{\bm z}_{m}| \bm x) \right] d\widetilde{\bm z}_{1} \cdots d\widetilde{\bm z}_{M} \\
  &  = \sum_{m=1}^M  \int_{\mathcal{T}^{|\mathcal{G}_{m}|}} \left[ \log \frac{p \left(\widetilde{\bm z}_{m}|\bm x\right)}{q_{\phi}(\widetilde{\bm z}_{m}| \bm x)}\right] q_{\phi}(\widetilde{\bm z}_{m}| \bm x) d\widetilde{\bm z}_{m}.
\end{align*}
Similar to the results in (\ref{EQU:S-ADVI-1}), we can obtain the lower bound of IWAE based on the SN-VI is $\log p(\bm x) - C\sum_{m=1}^{M}H^{-(\varrho_m+1)}- M\epsilon$.
\end{proof}

According to the results in Lemma 4.1, we can further quantify the variational approximation error with respect to the class defined in (3). 
\begin{proof}[Proof of Theorem 4.2]
According to Remark 2.2 and similar to the Proof of Lemma 4.1, there exists  $q_{\phi}(\bm z| \bm x)$ such that $\sup_{\bm z}|p(\bm z|\bm x) - q_{\phi}(\bm z| \bm x)|\leq C\sum_{m=1}^{M}H^{-(\varrho_m+1)}$. When $H$ goes to infinity, $H^{-(\varrho_m+1)}$ goes to zero. One can also infer that $|p(\bm z|\bm x) - q_{\widehat{\phi}}(\bm z| \bm x)|\leq C\sum_{m=1}^{M}H^{-(\varrho_m+1)}$ almost everywhere on $\mathcal{T}^{|\mathcal{G}_{1}|} \times \cdots \times \mathcal{T}^{|\mathcal{G}_{M}|}$. Then, the following property holds that is, 
\begin{align*}
   & \mathcal{L}_{\text {IWAE }}(\widehat{\phi}) = \mathbb{E}_{\left\{q_{\widehat{\phi}}( \bm z_t| \bm x)\right\}_{t=1}^T}\left[\log \frac{1}{T} \sum_{t=1}^T \frac{p\left(\bm z_t|\bm x\right)}{q_{\widehat{\phi}}\left(\bm z_t| \bm x\right)}\right] + \log p(\bm x)  \\
   & = \mathbb{E}_{\left\{q_{\widehat{\phi}}( \bm z_t| \bm x)\right\}_{t=1}^T}\left\{\log  \left[1 + \frac{1}{T} \sum_{t=1}^T \frac{p\left(\bm z_t|\bm x\right) - q_{\widehat{\phi}}\left(\bm z_t| \bm x\right)}{q_{\widehat{\phi}}\left(\bm z_t| \bm x\right)}\right]\right\} + \log p(\bm x) \\
   & = \mathbb{E}_{\left\{q_{\widehat{\phi}}( \bm z_t| \bm x)\right\}_{t=1}^T}\left\{\frac{1}{T} \sum_{t=1}^T \frac{p\left(\bm z_t|\bm x\right) - q_{\widehat{\phi}}\left(\bm z_t| \bm x\right)}{q_{\widehat{\phi}}\left(\bm z_t| \bm x\right)} + o\left[\sum_{m=1}^{M}H^{-(\varrho_m+1)}\right]\right\} + \log p(\bm x)\\
   & = \mathbb{E}_{q_{\widehat{\phi}}( \bm z| \bm x)}\left\{\frac{p\left(\bm z|\bm x\right) - q_{\widehat{\phi}}\left(\bm z| \bm x\right)}{q_{\widehat{\phi}}\left(\bm z| \bm x\right)} + o\left[\sum_{m=1}^{M}H^{-(\varrho_m+1)}\right] \right\} + \log p(\bm x) \\
   & = \mathbb{E}_{q_{\widehat{\phi}}( \bm z| \bm x)}\left\{\log \frac{p\left(\bm z|\bm x\right)}{q_{\widehat{\phi}}\left(\bm z| \bm x\right)}+ o\left[\sum_{m=1}^{M}H^{-(\varrho_m+1)}\right]\right\} + \log p(\bm x) \\
   & = -D_{\textrm{KL}}\left[q_{\widehat{\phi}}\left( \bm z| \bm x\right)||p\left(\bm z|\bm x\right)\right] + \log p(\bm x) + o\left[\sum_{m=1}^{M}H^{-(\varrho_m+1)}\right]
\end{align*}
The result in Lemma 4.1 implies the KL divergence of the spline density approximation from the true posterior is in the order of $\sum_{m=1}^{M}H^{-(\varrho_m+1)} + M\epsilon$.  
\end{proof}

\subsection{Proof of Theorem 4.3} 
\label{SEC:THE:Approximation-Regression-Proof}

\begin{lemma}[Lemma 6.3 \citep{csiszar2006context}]
\label{LEM:Kl-L2}
    If $p$ and $q$ are probability densities both supported on a bounded interval $\mathcal{T}$, then we have the KL divergence between probability densities satisfies that $D_{\mathrm{KL}}(p||q) \leq \frac{1}{\inf _{x \in \mathcal{T}} q(x)}\|p-q\|_2^2.$
\end{lemma}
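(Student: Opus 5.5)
This is the Csiszár–Shields bound $D_{\mathrm{KL}}(p\|q)\le \|p-q\|_2^2/\inf_{\mathcal T} q$ on a bounded interval $\mathcal{T}$. The plan is to pass through the $\chi^2$-divergence: bound KL by $\chi^2$, then bound $\chi^2$ by the weighted $L_2$ distance.

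\textbf{Step 1 (log inequality).} Since $\log u\le u-1$ for all $u>0$, applying it with $u=p(x)/q(x)$ on the set $\{p>0\}$ gives
\[
D_{\mathrm{KL}}(p\|q)=\int_{\mathcal T} p\log\frac{p}{q}\,dx\le \int_{\mathcal T} p\Big(\frac{p}{q}-1\Big)dx=\int_{\mathcal T}\frac{p^2}{q}\,dx-1 .
\]
The integral $\int p\,dx$ appearing here equals $1$. Points where $p=0$ contribute zero to both sides, with the convention $0\log 0=0$.

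\textbf{Step 2 (rewrite as $\chi^2$).} Both $p$ and $q$ integrate to one on $\mathcal T$, so $\int (p-q)\,dx=0$. Expanding,
\[
\int_{\mathcal T}\frac{(p-q)^2}{q}\,dx=\int_{\mathcal T}\frac{p^2}{q}\,dx-2\int_{\mathcal T} p\,dx+\int_{\mathcal T} q\,dx=\int_{\mathcal T}\frac{p^2}{q}\,dx-1 .
\]
Therefore $D_{\mathrm{KL}}(p\|q)\le \chi^2(p\|q)=\int_{\mathcal T}(p-q)^2/q\,dx$.

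\textbf{Step 3 (pull out the infimum).} Since $q(x)\ge \inf_{\mathcal T} q$ pointwise,
\[
\int_{\mathcal T}\frac{(p-q)^2}{q}\,dx\le \frac{1}{\inf_{\mathcal T}q}\int_{\mathcal T}(p-q)^2\,dx=\frac{\|p-q\|_2^2}{\inf_{\mathcal T}q},
\]
which is the claimed bound.

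\textbf{Main obstacle.} There is essentially no analytic difficulty; the real issue is bookkeeping about supports. If $\inf_{\mathcal T} q=0$, the right-hand side is $+\infty$ and the bound holds trivially. If $\inf_{\mathcal T} q>0$, then $q>0$ everywhere on $\mathcal T$, so $p\ll q$ and every integrand above is well defined. Boundedness of $\mathcal T$ is needed so that a density can have a positive infimum at all, and so that $\|\cdot\|_2$ is finite. The integrability of $p^2/q$ follows from that of $(p-q)^2/q$, which in turn is guaranteed whenever $p,q\in L_2(\mathcal T)$ and $\inf_{\mathcal T} q>0$.
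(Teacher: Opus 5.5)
Your proof is correct and follows the paper's argument: you use $\log u\le u-1$ to get $\int_{\mathcal T} p\,(p/q-1)\,dx$, rewrite it via normalization as the $\chi^2$-divergence $\int_{\mathcal T}(p-q)^2/q\,dx$, and then bound $1/q$ pointwise by $1/\inf_{\mathcal T} q$. Your extra remarks on supports, absolute continuity, and the degenerate case $\inf_{\mathcal T} q=0$ fill in bookkeeping that the paper leaves implicit.
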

\begin{proof} Notice that
    $$
\begin{aligned}
D_{\mathrm{KL}}(p||q) & =\int_{\mathcal{T}} p(x) \log \frac{p(x)}{q(x)} \mathrm{d} x \leq \int_{\mathcal{T}}p(x)\left\{\frac{p(x)}{q(x)}-1\right\} \mathrm{d} x  =\int_{\mathcal{T}}\frac{\{p(x)-q(x)\}^2}{q(x)} \mathrm{d} x
\end{aligned}
$$
from which the claim follows.
\end{proof}

Here we provide the proof sketch of Theorem 4.3. Our proof is based on the assumption when the observed data points $\boldsymbol x$ and $\boldsymbol x'$ are close enough, the corresponding posteriors $p(\bm z|\boldsymbol x)$ and $p(\bm z|\boldsymbol x')$ are similar to each other. For any given posterior $p(\bm z|\boldsymbol x)$ under satisfying Assumption (A2), we can identify the density function based on spline approximation $\prod_{m=1}\sum_{\bm \kappa \in \bm \Kappa_{|\mathcal{G}_{m}|}} \gamma_{\bm \kappa} b_{\bm \kappa}(\widetilde{\bm{z}}_{m})$ close to $p(\bm z|\boldsymbol x)$ with differences bounded by $\sum_{m=1}^{M}H^{-(\varrho_m+1)}$. Under some mild assumptions, $\phi^{\ast}$ can be well approximated by nonparametric regression, such as the deep neural network. Combining the results in Theorem 4.2, we can further obtain the KL divergence between the proposed SN-VI estimator and the true posterior.

\begin{proof}[Proof of Theorem 4.3]

    In the following, we denote that the unknown parameters as $\phi = \{ \bm \mu_m(\bm x), \bm \sigma_m(\bm x), \bm \Gamma_{m}(\bm x) | m =1 , \dots, M\}$. We notice that $ \sum_{i=1}^n \mathcal{L}_{\text{ELBO}}\{\widehat{\phi}(\bm x_i)\} = \sum_{i=1}^n \log p(\bm x_i)-\sum_{i=1}^n D_{KL}\left\{q_{\widehat{\phi}(\bm x_i)}(\bm z)~||~p(\bm z | \bm x_i)\right\}$, we can infer to the average of the KL divergence between $q_{\widehat{\phi}(\bm x_i)}(\bm z)$ and $p(\bm z | \bm x_i)$ equaling to 
    $$n^{-1}\sum_{i=1}^n \log p(\bm x_i) - n^{-1}\sum_{i=1}^n \mathcal{L}_{\text{ELBO}}\{\widehat{\phi}(\bm x_i)\}.$$
    In addition, for a given $\xi$ and any $j\in \mathcal{G}_{m}$, consider a finite support $\mathcal{T}_j^{\ast}=[\mu_j^\ast(\bm x), \mu_j^\ast(\bm x) + \sigma_j^\ast(\bm x)]$ such that $\int_{\prod_{j \in \mathcal{G}_{m}}\mathcal{T}_j^{\ast}}p\left(\widetilde{\bm z}_{m}|\bm x\right) d\widetilde{\bm z}_{m} \geq 1-\epsilon$. For a specific posterior $p(\bm z|\bm x)$, the optimal parameters are $\phi^\ast(\bm x) = \{ \bm \mu^{\ast}_m(\bm x), \bm \sigma^{\ast}_m(\bm x), \bm \Gamma^{\ast}_{m}(\bm x) | m =1 , \dots, M\}$, where $\Gamma^{\ast}_{m}(\bm x)$'s are spline coefficients satisfying that  $\sup_{\widetilde{\bm z}_{m}
 \in \prod_{j \in \mathcal{G}_{m}}\mathcal{T}_j^{\ast}}|p(\widetilde{\bm z}_{m}
|\bm x) - s^\ast(\widetilde{\bm z}_{m}
;\bm x)|\leq \sum_{m=1}^{M}H^{-(\varrho_m+1)}$, where $s^\ast(\widetilde{\bm z}_{m};\bm x) =\sum_{\bm \kappa \in \bm \Kappa_{|\mathcal{G}_{m}|}} \gamma_{\bm \kappa} b_{\bm \kappa}(\widetilde{\bm{z}}_{m})$ and $m = 1,\dots, M$.

    Next, we denote the estimators of the optimal parameters generated from nonparametric regression, such as the deep neural network, as $\widetilde{\phi}(\bm x) = \{ \widetilde{\bm \mu}_m(\bm x), \widetilde{\bm \sigma}_m(\bm x), \widetilde{\bm \Gamma}_{m}(\bm x) | m =1, \dots, M\}$. When the observed datapoints $\boldsymbol x$ and $\boldsymbol x'$ are close enough, the corresponding posteriors $p(\bm z|\boldsymbol x)$ and $p(\bm z|\boldsymbol x')$ are close, so do the corresponding optimal parameters $\phi^\ast(\bm x)$ and $\phi^\ast(\bm x')$.  Then, we have 
    \begin{align*}
    \frac{1}{n}& \sum_{i=1}^n D_{\mathrm{KL}}\left\{q_{\widehat{\phi}(\bm x_i)}(\bm z)~||~p(\bm z | \bm x_i)\right\} \\
      & = \frac{1}{n}\sum_{i=1}^n \log p(\bm x_i) - \frac{1}{n}\sum_{i=1}^n \mathcal{L}_{\text{ELBO}}\{\widehat{\phi}(\bm x_i)\}  \leq \frac{1}{n}\sum_{i=1}^n \log p(\bm x_i) - \frac{1}{n}\sum_{i=1}^n \mathcal{L}_{\text{ELBO}}\{\widetilde{\phi}(\bm x_i)\}\\
      & = \frac{1}{n} \sum_{i=1}^n D_{\mathrm{KL}}\left\{q_{\widetilde{\phi}(\bm x_i)}(\bm z)~||~p(\bm z | \bm x_i)\right\}. 
    \end{align*}

    According to Lemma \ref{LEM:Kl-L2}, we have 
    \begin{align*}
    \frac{1}{n}&\sum_{i=1}^nD_{\mathrm{KL}}  \left\{q_{\widetilde{\phi}(\bm x_i)}(\bm z)~||~p(\bm z | \bm x_i)\right\} \\
    & = \frac{1}{n}\sum_{i=1}^n \int  q_{\widetilde{\phi}(\bm x_i)}(\bm z) \log \frac{q_{\widetilde{\phi}(\bm x_i)}(\bm z)}{p(\bm z | \bm x_i)} \mathrm{d} \bm z = \sum_{i=1}^n \sum_{m=1}^M \int  q_{\widetilde{\phi}(\bm x_i)}(\widetilde{\bm z}_{m}
) \log \frac{q_{\widetilde{\phi}(\bm x_i)}(\widetilde{\bm z}_{m}
)}{p(\widetilde{\bm z}_{m}
 | \bm x_i)} d \widetilde{\bm z}_{m}
 \\
    & \leq \frac{1}{n}\sum_{i=1}^n \sum_{m=1}^M \frac{1}{\inf_{\widetilde{\bm z}_{m}
 \in \prod_{j\in \mathcal{G}_m}\widetilde{\mathcal{T}}_j} p(\widetilde{\bm z}_{m}
|\bm x_i)} \|p(\widetilde{\bm z}_{m}
| \bm x_i) - q_{\widetilde{\phi}(\bm x_i)}(\widetilde{\bm z}_{m}
)\|^2  \\
    & \leq \frac{1}{n}\sum_{i=1}^n \sum_{m=1}^M \frac{2}{\inf_{\widetilde{\bm z}_{m}
 \in \widetilde{\mathcal{T}}_j} p(\widetilde{\bm z}_{m}
 |\bm x_i)} \left\{\|p(\widetilde{\bm z}_{m}
| \bm x_i) - q_{\phi^{\ast} (\bm x_i)}(\widetilde{\bm z}_{m}
)\|^2 + \|q_{\widetilde{\phi}(\bm x_i)}(\widetilde{\bm z}_{m}
) - q_{\phi^{\ast}(\bm x_i)}(\widetilde{\bm z}_{m}
)\|^2\right\}\\
    & \leq C_1 \left\{\sum_{m=1}^{M}H^{-2(\varrho_m+1)} + M\epsilon^2\right\} + C_1\sum_{m=1}^{M}H^{2|\mathcal{G}_{m}|}  \Delta^2,
    \end{align*}
    where $\prod_{j\in \mathcal{G}_m}\widetilde{\mathcal{T}}_j$ is the support of density function $q_{\widetilde{\phi}(\bm x_i)}(\widetilde{\bm z}_{m})$ and $\widetilde{\mathcal{T}}_j = [\widetilde{\mu}_j(\bm x), \widetilde{\mu}_j(\bm x) + \widetilde{\sigma}_j(\bm x)]$. 
    
    According to the properties of tensor product spline basis functions, we have \begin{align*}
        & n^{-1}\sum_{i=1}^n\|q_{\widetilde{\phi}(\bm x_i)}(\widetilde{\bm z}_{m}) - q_{\phi^{\ast}(\bm x_i)}(\widetilde{\bm z}_{m})\|^2 \\
        & \quad \leq n^{-1}\sum_{i=1}^n \left(C H^{|\mathcal{G}_{m}|} \right)\sum_{\bm \kappa \in \bm \Kappa_{|\mathcal{G}_{m}|}} \left[\gamma^{\ast}_{\bm \kappa}(\bm x_i)-\widetilde{\gamma}_{\bm \kappa}(\bm x_i) \right ]^2 = O(H^{2|\mathcal{G}_{m}|}  \Delta^2).
    \end{align*}

\end{proof}

\newpage
\section{Implementation Details}

\subsection{Penalized Spline} Penalized splines in nonparametric smoothing capture complex patterns while using regularization to prevent overfitting and control complexity \citep{Wood:03}. 
Here, we consider a roughness penalty for a spline function $s(\widetilde{\bm z}_{m})$, defined as 
$\mathcal{E}(s) = \int \sum_{j \in \mathcal{G}_m} \{\nabla^{2}_{z_j}s(\widetilde{\bm z}_m)\}^2d\widetilde{\bm z}_m.
$ See Supplement Section \ref{sec:penalty} for the implementation details of $\mathcal{E}(s)$. With the penalty term, the objective function becomes 
$\mathcal{L}^P_{\text{IWAE}}\{\phi(\bm x)\} = \mathcal{L}_{\text{IWAE}}\{\phi(\bm x)\}- \lambda \mathcal{E}(s),$ where $\lambda$ is the roughness hyperparameter. As $\lambda$ increases, the penalty on highly fluctuating approximations increases, leading to a smoother approximation by reducing variance but at the cost of potentially increasing bias. Conversely, if 
$\lambda$ is too small, the model will be overfitting. Simulation studies in Supplement Section \ref{sec:rough} demonstrate that increasing $\lambda$
results in overly smooth posterior approximations, while decreasing 
$\lambda$ reduces the smoothness constraint and allows more flexible, detailed approximations.

\subsection{Penalized Tensor Product Spline}
\label{sec:penalty}
Here, we introduce the details of the penalty term $\mathcal{E}(s)$ mentioned in Section 3.1. Denote $c_0 =0$ and $c_{m} = \sum_{m' \leq m} |\mathcal{G}_m|$, where $|\mathcal{G}_m|$ is the cardinality of the set $\mathcal{G}_m$. According to properties of spline polynomials, $\mathcal{E}(s) = \bm \Gamma_{m}^{\top} (\sum_{j=c_{m-1}+1 }^{c_m}\mathbf{P}_j)\bm \Gamma_{m}$, where $\mathbf{P}_j = (\bigotimes^{j-1}_{l=c_{m-1}+1}\mathbf{M}_{l})\otimes\mathbf{S}_{j}\otimes (\bigotimes_{l=j+1}^{c_m}\mathbf{M}_{l})$, $\bigotimes_{l=a}^{b} \mathbf{M}_{l} =\mathbf{M}_{a}\otimes \cdots \otimes \mathbf{M}_{b}$, and ``$\otimes$" denotes the Kronecker product. $\mathbf{S}_{j}$ and $\mathbf{M}_{j}$ are $K_j\times K_j$ matrices with $(\mathbf{S}_{j})_{k_j,k_j^{'}} = \int_{\mathcal{T}} \nabla_{z_j}^{2} b_{k_j}(z_j) \nabla_{z_j}^{2} b_{k_j'}(z_j)dz_j$ and $(\mathbf{M}_{j})_{k_j,k_j'} = \int_{\mathcal{T}} b_{k_j}(z_j) b_{k_j'}(z_j)dz_j$. For the specific case where $\widetilde{\bm z}_1 = (z_1, z_2)$ and $\mathcal{G}_1 = \{1, 2\}$, $\mathcal{E}(s) = \bm \gamma_{\mathcal{G}_1}^{\top} (\sum_{j\in{\mathcal{G}_1}}\mathbf{P}_j\bm )\gamma_{\mathcal{G}_1}  = \bm \gamma_{\mathcal{G}_1}[(\mathbf{S}_{1}\otimes \mathbf{M}_{2}) +(\mathbf{M}_{1}\otimes \mathbf{S}_{2})]  \gamma_{\mathcal{G}_1}.$

%
\subsection{Derivation with Reparameterization Trick}
\label{sec:re_trick}
\begin{lemma}
    \label{Prop:density}
    If $\bm z = g(\bm \epsilon)$, $\bm g$ is a monotone function, then the density function of $\bm z$ is 
    $\bm f_{\bm z}(\bm z) = \bm f_{\bm \epsilon} \{\bm g^{-1}(\bm z)\} \left|\frac{\partial g^{-1}(\bm z)}{\partial \bm z} \right|,$
    where $\left|\frac{\partial g^{-1}(\bm z)}{\partial \bm z} \right|$ is the determinate of the Jacob matrix $\frac{\partial g^{-1}(\bm z)}{\partial \bm z} $. 
\end{lemma}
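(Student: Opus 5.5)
This is the standard change-of-variables formula, so the plan is to derive it from the distribution function and then differentiate. I read ``monotone'' in the multivariate case as saying that $\bm g$ is a $C^1$ bijection from the support of $\bm \epsilon$ onto its image, with a $C^1$ inverse whose Jacobian determinant does not vanish. In SN-VI the relevant map is the location--scale map $\bm g(\bm \epsilon)=\bm \mu(\bm x)+\bm \sigma(\bm x)\cdot\bm \epsilon$, which satisfies this because every $\sigma_j(\bm x)>0$.

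\emph{Scalar case.} I would write $F_z(z)=\Pr(g(\epsilon)\le z)$.
\begin{itemize}
\item If $g$ is increasing, then $F_z(z)=F_\epsilon\{g^{-1}(z)\}$. Differentiating with the chain rule gives $f_z(z)=f_\epsilon\{g^{-1}(z)\}\,(g^{-1})'(z)$.
\item If $g$ is decreasing, then $F_z(z)=1-F_\epsilon\{g^{-1}(z)\}$. Differentiating gives $f_z(z)=-f_\epsilon\{g^{-1}(z)\}(g^{-1})'(z)$.
\end{itemize}
In both cases the factor is exactly $|(g^{-1})'(z)|$.

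\emph{Multivariate case.} I would argue through probabilities of Borel sets. For any measurable $A$,
\[
\Pr(\bm z\in A)=\Pr\{\bm \epsilon\in \bm g^{-1}(A)\}=\int_{\bm g^{-1}(A)} f_{\bm \epsilon}(\bm e)\,d\bm e .
\]
Next, substitute $\bm e=\bm g^{-1}(\bm z)$ using the multivariate change-of-variables theorem for diffeomorphisms. This gives
\[
\Pr(\bm z\in A)=\int_A f_{\bm \epsilon}\{\bm g^{-1}(\bm z)\}\left|\det\frac{\partial \bm g^{-1}(\bm z)}{\partial \bm z}\right| d\bm z .
\]
Since this identity holds for every $A$, the integrand is a version of the density of $\bm z$, which is the claimed formula.

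\emph{Application to SN-VI.} For the location--scale map the inverse is $g_j^{-1}(\bm z)=(z_j-\mu_j(\bm x))/\sigma_j(\bm x)$. Its Jacobian is diagonal with determinant $\prod_j\sigma_j(\bm x)^{-1}$. Substituting this into the formula recovers (\ref{EQU:S-ADVI}), and taking logs produces the additive $\sum_j\log\sigma_j(\bm x)$ term in (\ref{EQU:backpropagation}).

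\emph{Main obstacle.} The only delicate step is making precise what ``monotone'' means in several dimensions: the change-of-variables theorem needs injectivity, differentiability, and a nonvanishing Jacobian. I would state these as standing conditions and check them for the affine map above. There they hold trivially, since the map is an invertible diagonal affine transformation.
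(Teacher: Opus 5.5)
Your proof is correct and is the standard derivation. The paper gives no proof of this lemma; it treats it as the textbook change-of-variables formula and invokes it only for the diagonal location--scale map $\widetilde{\bm z}_m=\bm\mu_m(\bm x)+\bm\sigma_m(\bm x)\bm\epsilon_m$, to obtain the factor $\prod_{j\in\mathcal{G}_m}\sigma_j(\bm x)^{-1}$ in (\ref{EQU:density}). Your explicit conditions (injective, $C^1$, nonvanishing Jacobian) just make precise what ``monotone'' is meant to convey.

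One small addition to your application step: $\sum_j\log\sigma_j(\bm x)$ splits off from the IWAE log-average because $\prod_j\sigma_j(\bm x)$ is the same for every importance sample $t$. It therefore factors out of $\frac{1}{T}\sum_{t=1}^T$ before the logarithm is taken.
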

Let's recall IWAE:
\begin{align}
   & \mathcal{L}_{\text{IWAE}}\{\phi(\bm x)\} =\mathbb{E}_{\left\{\bm z^{(t)} \sim q_\phi(\bm z| \bm x)\right\}_{t=1}^T}\left[\log \frac{1}{T} \sum_{t=1}^T \frac{p\left(\bm x|\bm z^{(t)}\right) p\left(\bm z^{(t)}\right)}{q_\phi\left(\bm z^{(t)}| \bm x\right)}\right].
\end{align}

Note that $\widetilde{\bm{z}}_{m} = \bm \mu_m(\bm x) + \bm \sigma_m (\bm x) \bm \epsilon_{m}$ for each group $\mathcal{G}_m$. According to Lemma \ref{Prop:density}, the function $q_{\phi}(\widetilde{\bm z}_{m}|\bm x)$ is 
\begin{equation}
\label{EQU:density}
\left\{\prod_{j \in \mathcal{G}_m} \frac{1}{\sigma_j(\bm x)} \right\} \cdot \sum_{\bm \kappa \in \bm \Kappa_{|\mathcal{G}_{m}|}} \gamma_{\bm \kappa}(\bm x) b_{\bm \kappa}\left(\frac{\widetilde{\bm z}_{m} - \bm \mu_{m}(\bm x)}{\bm \sigma_{m}(\bm x)}\right)
\end{equation} 
Plugging in (\ref{EQU:density}) to $\mathcal{L}_{\text{IWAE}}(\phi)$, we have
\begin{align*}
   & \mathcal{L}_{\text{IWAE}}(\phi) = \mathbb{E}_{\left\{\bm z^{(t)} \sim q_\phi(\bm z| \bm x)\right\}_{t=1}^T}\left[\log \frac{1}{T} \sum_{t=1}^T \frac{p\left(\bm x|\bm z^{(t)}\right) p\left(\bm z^{(t)}\right)}{q_\phi\left(\bm z^{(t)}| \bm x\right)}\right]\\
    &~= \int \left[\log \frac{1}{T} \sum_{t=1}^T \frac{p\left(\bm x, \bm z^{(t)}\right)}{q_{\phi}(\bm z^{(t)}| \bm x)} \right] \left[ \prod_{t=1}^T q_{\phi}(\bm z^{(t)}| \bm x) \right] d\bm z^{(t)}  \\
    &~= \int \left[\log \frac{1}{T} \sum_{t=1}^T \frac{p\left(\bm x, \bm z^{(t)}\right)}{q_{\phi}(\bm z^{(t)}| \bm x)} \right]  \left\{\prod_{t=1}^T  \prod_{m=1}^M \prod_{j \in \mathcal{G}_m} \frac{1}{\sigma_j(\bm x)} \cdot \sum_{\bm \kappa \in \bm \Kappa_{|\mathcal{G}_{m}|}} \gamma_{\bm \kappa}(\bm x) b_{\bm \kappa}\left(\frac{\widetilde{\bm z}_{m} - \bm \mu_{m}(\bm x)}{\bm \sigma_{m}(\bm x)}\right) \right\} d\bm z^{(t)} \\
    &~= \int \left[\log \frac{1}{T} \sum_{t=1}^T \frac{p_{\theta}\left(\bm x, \bm \mu(\bm x) \!+\! \bm \sigma (\bm x) \cdot \bm \epsilon^{(t)} \right)}{\prod_{m=1}^M \left\{\prod_{j \in \mathcal{G}_m} \frac{1}{\sigma_j(\bm x)} \right\} \cdot \sum_{\bm \kappa \in \bm \Kappa_{|\mathcal{G}_{m}|}} \gamma_{\bm \kappa}(\bm x) b_{\bm \kappa}\left(\bm \epsilon^{(t)}_{m}\right)} \right] \\
    & \quad \times \left\{\prod_{t=1}^T  \prod_{m=1}^M \sum_{\bm \kappa \in \bm \Kappa_{|\mathcal{G}_{m}|}} \gamma_{\bm \kappa}(\bm x) b_{\bm \kappa}\left(\bm \epsilon^{(t)}_m\right) \right\} d\bm \epsilon^{(t)} \\
    &~= \int \left[\log \frac{1}{T} \sum_{t=1}^T \frac{p_{\theta}\left(\bm x, \bm \mu(\bm x) + \bm \sigma (\bm x) \cdot \bm \epsilon^{(t)} \right)}{\prod_{m=1}^M \sum_{\bm \kappa \in \bm \Kappa_{|\mathcal{G}_{m}|}} \gamma_{\bm \kappa}(\bm x) b_{\bm \kappa}\left(\bm \epsilon^{(t)}_{m}\right)} \right] \\
    & \quad \times \left\{\prod_{t=1}^T  \prod_{m=1}^M \sum_{\bm \kappa \in \bm \Kappa_{|\mathcal{G}_{m}|}} \gamma_{\bm \kappa}(\bm x) b_{\bm \kappa}\left(\bm \epsilon^{(t)}_m\right) \right\}d\bm \epsilon^{(t)}+ \sum_{j=1}^J \log \sigma_j(\bm x) \\
    &~= \mathbb{E}_{\left\{\bm \epsilon^{(t)}\right\}_{t=1}^T}\left[\log \frac{1}{T} \sum_{t=1}^T \frac{p\left\{\bm x,\bm \mu(\bm x) + \bm \sigma (\bm x) \cdot \bm \epsilon^{(t)} \right\}}{\prod_{m=1}^M\sum_{\bm \kappa \in \bm \Kappa_{|\mathcal{G}_{m}|}} \gamma_{\bm \kappa}(\bm x) b_{\bm \kappa}\left(\bm \epsilon_{m}^{(t)}\right)}\right]+ \sum_{j=1}^J\log \sigma_j(\bm x).
\end{align*}

\subsection{Concrete Relaxation, Annealing, and Computational Efficiency}
\label{sec:supp_concrete_annealing}

This section provides additional details on the use of the concrete distribution and the annealing strategy in the proposed SN-VI algorithm. These techniques are introduced to approximate a discrete hierarchical sampling step by a differentiable relaxation, so that the resulting objective can be optimized efficiently using automatic differentiation.

\subsubsection{Concrete distribution}
In the proposed method, the hierarchical representation involves selecting one component from a finite set of candidate spline basis components. Directly sampling from a categorical distribution is not differentiable with respect to the selection probabilities, which makes it difficult to incorporate this sampling step into gradient-based variational optimization. To address this issue, we use the concrete distribution \citep{maddison2017concrete}, also known as the Gumbel--Softmax relaxation, as a continuous approximation to the categorical distribution.

Let $L$ denote the number of candidate spline basis components. Let
\[
\bm{\alpha}=(\alpha_1,\ldots,\alpha_L)\in(\mathbb{R}^{+})^L
\]
be the positive parameters controlling the relative selection probabilities of these components. A larger value of $\alpha_l$ corresponds to a larger probability of selecting the $l$th component. Let $\tau>0$ denote the temperature parameter. To generate a relaxed categorical vector, we first sample independent Gumbel random variables $G_1,\ldots,G_L$, and then define
\[
u_l
=
\frac{
\exp\{(\log \alpha_l+G_l)/\tau\}
}{
\sum_{r=1}^L \exp\{(\log \alpha_r+G_r)/\tau\}
},
\qquad l=1,\ldots,L.
\]
The resulting random vector
\[
\bm{u}=(u_1,\ldots,u_L)
\]
lies on the probability simplex, satisfying $u_l\geq 0$ and $\sum_{l=1}^L u_l=1$. Therefore, $\bm{u}$ can be interpreted as a differentiable approximation to a one-hot categorical indicator vector. Here, $u_l$ denotes the relaxed weight assigned to the $l$th spline basis component.

As $\tau\to 0$, the vector $\bm{u}$ becomes increasingly concentrated on a single component and converges to a one-hot categorical random vector. In this limiting case, the selected component follows a categorical distribution with probabilities proportional to $\bm{\alpha}$. In contrast, when $\tau$ is large, the components of $\bm{u}$ are smoother and more evenly distributed, corresponding to a soft weighted combination of multiple spline basis components.

\subsubsection{
Role of \texorpdfstring{$\bm{\alpha}$}{alpha}
and
\texorpdfstring{$\bm{u}$}{u}
}The parameter vector $\bm{\alpha}$ controls the relative importance of the candidate spline basis components. During optimization, $\bm{\alpha}$ is updated by stochastic gradients and therefore learns which basis components contribute more to the variational approximation. The random vector $\bm{u}$ is the relaxed selection vector generated from the concrete distribution. Instead of selecting exactly one component, the algorithm uses $\bm{u}$ to form a differentiable weighted combination of candidate components. For example, if $\{B_l(\cdot):l=1,\ldots,L\}$ denotes the candidate spline basis components, then the relaxed selected basis representation can be written as
\[
\sum_{l=1}^L u_l B_l(\cdot).
\]
When the temperature is high, this expression combines several basis components. When the temperature is low, most of the weight is assigned to one component, and the expression approximates a discrete basis selection.

\subsubsection{Annealing strategy}
We further use an annealing strategy \citep{abid2019concrete} for the temperature parameter. Let $\tau(c)$ denote the temperature at epoch $c$. We initialize the algorithm with a relatively large temperature $\tau_0$ and then gradually reduce the temperature according to a deterministic schedule, for example
\[
\tau(c)
=
\max\{\tau_{\min},\tau_0\exp(-\eta c)\},
\]
where $\eta>0$ is the decay rate and $\tau_{\min}>0$ is a lower bound used for numerical stability. This schedule produces smooth relaxed weights in the early stage of training and sharper, nearly discrete weights in the later stage.

The use of annealing has two main purposes. First, a relatively large initial temperature stabilizes optimization by avoiding premature selection of a single basis component. This allows the algorithm to explore different basis components and obtain stable gradient updates. Second, gradually lowering the temperature encourages the relaxed weights to become more concentrated, so that the final approximation is closer to the intended discrete hierarchical sampling procedure. In this sense, the annealing strategy provides a smooth transition from a continuous weighted approximation to a nearly discrete component-selection mechanism.

\subsubsection{Connection to ADVI and convergence interpretation}
The proposed algorithm can be viewed as a modification of ADVI in which the non-differentiable discrete sampling step is replaced by a differentiable concrete relaxation. For any fixed positive temperature $\tau$, the algorithm optimizes a well-defined relaxed variational objective. Since the sampling operation admits a reparameterized form through the Gumbel random variables, gradients of the relaxed objective can be estimated using automatic differentiation.

Under standard regularity conditions for stochastic-gradient variational inference, such as smoothness of the relaxed objective, bounded or controlled gradient variance, and an appropriate learning-rate schedule, the stochastic-gradient iterates converge to a stationary point of the relaxed objective. This is the same type of convergence guarantee commonly associated with ADVI and other nonconvex stochastic-gradient variational inference algorithms. Because the variational objective is generally nonconvex, neither standard ADVI nor the proposed relaxed SN-VI algorithm guarantees convergence to a global optimum.

As the temperature decreases, the relaxed objective approaches the objective associated with the discrete hierarchical sampling procedure. Therefore, the concrete relaxation preserves the usual stationary-point convergence interpretation of ADVI for the relaxed objective, while providing a differentiable approximation to the discrete selection mechanism. We emphasize that this modification is not intended to provide a stronger global convergence guarantee than ADVI. Rather, it enables efficient gradient-based optimization of a model component that would otherwise require non-differentiable discrete sampling or exhaustive search.

\subsubsection{Computational efficiency}
The concrete relaxation and annealing strategy improve computational efficiency in two ways. First, they avoid exhaustive enumeration over all possible discrete spline-basis configurations. Without the relaxation, one would need to either search over many possible basis selections or use non-differentiable discrete sampling, both of which are computationally expensive and unstable in gradient-based optimization. The concrete distribution replaces this step with a differentiable weighted combination of basis functions, allowing all parameters to be updated jointly by stochastic gradients.

Second, the annealing strategy reduces the effective number of active basis components during training. At the beginning of optimization, several components may receive non-negligible weights, which helps stabilize learning. As the temperature decreases, the weights become increasingly concentrated on a smaller number of components. Consequently, the later stages of optimization focus on a reduced set of important basis components rather than treating all components equally. This reduces the effective computational burden and memory usage compared with a full-basis ADVI implementation that updates all spline coefficients simultaneously throughout training.

Overall, the concrete distribution serves as a differentiable approximation to the categorical selection of spline components, while annealing gradually sharpens this approximation. Together, these techniques allow the proposed SN-VI algorithm to retain the flexibility of spline-based posterior approximation while improving the stability and efficiency of model estimation.

\subsection{Details of Hierarchical Structure Identification Process}

Algorithm \ref{alg:improved_str} presents the details of hierarchical structure identification process.

\begin{algorithm}[!htb]
\caption{Hierarchical Structure Identification Process}
\label{alg:improved_str}
\begin{algorithmic}[1]
\REQUIRE Train dataset $\bm x_{\text{train}}$ and validation dataset $\bm x_{\text{valid}}$
\STATE \textbf{Initialize:} Set iteration counter $s = 0$. 
Each latent variable $z_j, \; j = 1, \dots, J$ forms a single group:
$\mathbb{G}^{(0)} = \{\mathcal{G}_1, \dots, \mathcal{G}_J\}
$
\STATE Compute the predictive log-likelihood: $L(\mathbb{G}^{(0)})$ and set $L(\mathbb{G}^{(-1)}) = -\infty$ for initialization.

\WHILE{$L(\mathbb{G}^{(s)}) > L(\mathbb{G}^{(s-1)})$}
    \STATE Generate candidate group structures from $\mathbb{G}^{(s)}$:  
    \[
    \mathbb{G}^{(s+1)}_{mm'} = \left( \mathbb{G}^{(s)} \setminus \{\mathcal{G}_m, \mathcal{G}_{m'}\} \right) \cup \{\mathcal{G}_{mm'}\},
    \]
    where $\forall \; m, m' = 1, \ldots, J - s$ and $\mathcal{G}_{mm'} = \mathcal{G}_m \cup \mathcal{G}_{m'}$
    \FOR{each candidate structure $\mathbb{G}^{(s+1)}_{mm'}$}
        \STATE Estimate the model parameters:
        $ \widehat{\phi}(\bm x) = \argmax_{\phi(\bm x)} \mathcal{L}^P_{\text{IWAE}} \{\phi(\bm x)\} $
        \STATE Compute the predictive log-likelihood: $ L(\mathbb{G}^{(s+1)}_{mm'})$
    \ENDFOR
    
    \STATE Update the group structure: $ \mathbb{G}^{(s+1)} = \argmax_{\mathbb{G}^{(s+1)}_{mm'}} L(\mathbb{G}^{(s+1)}_{mm'}) $
    \STATE Record the corresponding predictive log-likelihood: $ L(\mathbb{G}^{(s+1)}) = L(\mathbb{G}^{(s+1)}) $
    \STATE $s \gets s + 1$
\ENDWHILE

\STATE \textbf{Output:} Optimal group structure $\mathbb{G}^{(s)}$ with the highest predictive log-likelihood.
\end{algorithmic}
\end{algorithm}

\section{Additional Experimental Results}
\label{sec:add_exp}

\subsection{Computation Complexity Analysis}
\label{sec:ComputationComplexity}
In this experiment, we evaluate the computational scalability of the proposed SN-VI method with respect to the group size. We consider a single group of latent variables $\bm{z} = (z_1, \ldots, z_d)$, where $d$ denotes the total number of latent variables in the group. We generate training data from the following model. The prior distribution is specified as a standard multivariate Gaussian,
\[
\bm{z} \sim \mathcal{N}(\mathbf{0}, \mathbf{I}_d).
\]
To induce dependence in the posterior, we define a scalar observation model
\[
x \mid \bm{z} \sim \mathcal{N}\left(\sum_{j=1}^d z_j,\ \sigma_x^2\right),
\]
where $\sigma_x = 0.5$. This construction ensures that all latent variables jointly contribute to the likelihood, resulting in a strongly dependent posterior distribution.

Each latent group is modeled using a tensor-product spline basis with $K$ basis functions per dimension, leading to a total of $K^d$ spline coefficients. In this experiment, we consider $K=9$. We use a batch size of 64 and a total sample size of 2048. The number of importance samples is set to $T=10$, with regularization parameter $\beta=0.7$, spline penalty coefficient $5\times10^{-8}$, and no entropy penalty. The temperature parameter follows an exponential annealing schedule with decay rate $\eta=4$. All experiments are conducted on an NVIDIA A100 80GB GPU. For each configuration, we record the wall-clock training time and peak GPU memory usage. Table \ref{tab:scaling_results} summarizes the single-epoch wall-clock time, and peak GPU memory usage under different group sizes.
\begin{table}[]
    \centering
    \caption{Scalability of SN-VI with respect to the group size $d$: the number of spline coefficients, $K^d$, the single-epoch wall-clock time, and the peak GPU memory usage.}
    \begin{tabular}{cccc}
\hline
Group Size ($d$) & \# Spline Coefficients ($K^d$) & Time (sec) & Peak Memory (GB) \\
\hline
2 & 81     & 4.15    & 0.02 \\
3 & 729     & 24.95   & 0.10 \\
4 & 6561   & 299.10  & 1.28 \\
5 & 59049  & 3373.70 & 47.47 \\
\hline
\end{tabular}
    \label{tab:scaling_results}
\end{table}

\subsubsection{Subsampling} 
\label{sec:subsampling}
We further note that the above computational burden can be substantially reduced by using a subsampling strategy over the spline basis functions. Instead of optimizing over the full tensor-product spline basis, one may randomly sample a fraction of the basis functions and perform the SN-VI optimization using only this reduced set. This provides a simple and effective approximation to the full spline representation, especially when the posterior dependence structure is relatively low-dimensional or can be well captured by a subset of basis functions.

To evaluate this strategy, we conducted additional experiments for group sizes \(d=3\) and \(d=4\). For each setting, we randomly sampled different fractions of the full basis, including \(10\%,20\%,30\%,40\%,60\%,80\%\), and \(100\%\), and recorded both the predictive log-likelihood and the training time. The results are shown in Figure~\ref{fig:basis_subsampling}. For \(d=3\), the predictive log-likelihood improves substantially when increasing the fraction from \(10\%\) to \(20\%\), and then remains relatively stable as more basis functions are included. In particular, using only \(20\%\)--\(40\%\) of the basis functions already achieves performance close to the full-basis implementation, while reducing the training time from approximately 16 minutes to about 3--5 minutes. For \(d=4\), the training time increases more rapidly with the number of basis functions. Nevertheless, the subsampling approach still leads to a clear computational gain: using \(40\%\) of the basis functions reduces the training time from nearly 200 minutes to about 75 minutes, while maintaining comparable predictive log-likelihood to the full-basis implementation. These results suggest that basis subsampling can be a practical way to mitigate the computational burden of SN-VI when the full tensor-product basis becomes expensive.

\begin{figure}
    \centering
    \includegraphics[width=0.95\textwidth]{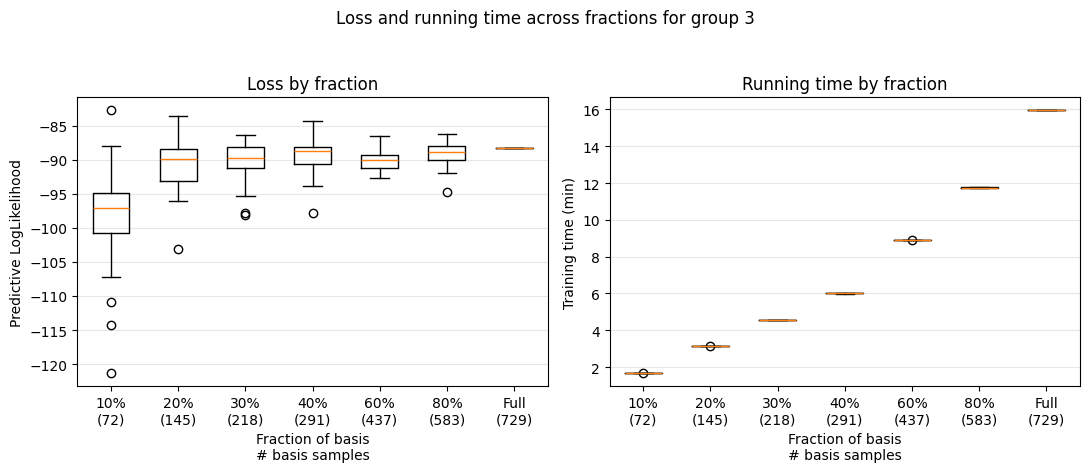}
\includegraphics[width=0.95\textwidth]{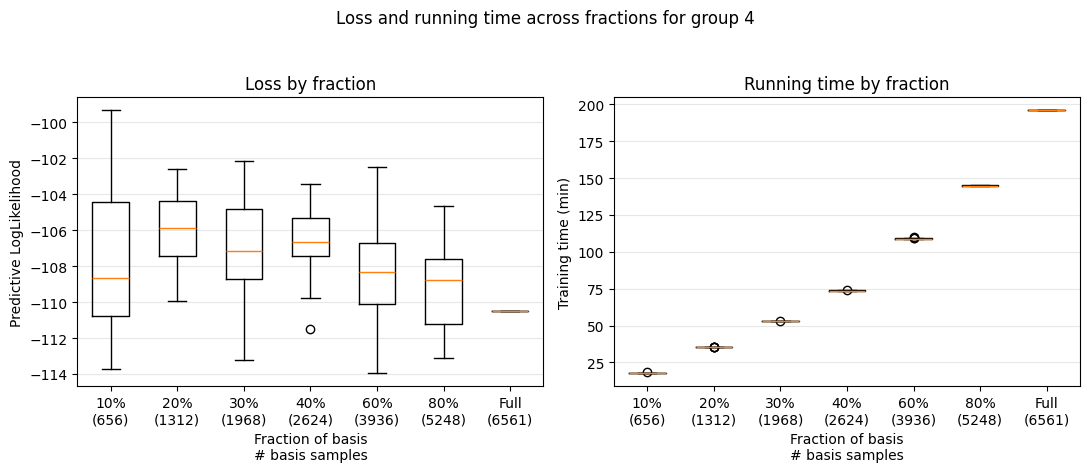}
    \caption{Effect of basis-function subsampling on SN-VI performance and computational time for group sizes \(d=3\) and \(d=4\).}
    \label{fig:basis_subsampling}
\end{figure}

\subsection{Baseline Implementation Details} We first introduce the baseline implementation details. To ensure a fair comparison, all competing variational inference methods were implemented using the same encoder--decoder backbone whenever applicable. For the spatial transcriptomics analysis, the encoder network has hidden-layer dimensions $(2000,300,300,16)$ and maps the input expression profile to an 8-dimensional latent representation. The decoder network has hidden-layer dimensions $(8,300,300,2000)$ and maps the latent variables back to the parameters of the observation model. The same training data, validation split, optimizer, learning-rate schedule, mini-batch size, and stopping rule were used across SN-VI and all baseline methods.

For Gaussian-ADVI, the variational posterior was specified as a diagonal Gaussian distribution. For S-ADVI, each latent variable was modeled independently using the univariate spline-based posterior approximation. For the normalizing-flow baselines, we considered planar flow and neural spline flow (NSF). Both flow-based methods used 10 flow transformations and were trained with the same encoder--decoder architecture and optimization protocol as the other neural-network-based variational methods. Hyperparameters, including the learning rate and regularization parameters, were selected using validation predictive log-likelihood rather than test-set performance. All methods were trained for the same maximum number of epochs under comparable computational budgets, and no baseline method was stopped prematurely.

For the computer-vision experiments, we used the same encoder and classifier/decoder architecture for all methods. The encoder was a two-layer MLP with 512 and 256 hidden units and ELU activation. For the classification task, the decoder was a classifier following the variational information bottleneck formulation. For the image reconstruction task, the decoder was a two-layer MLP with 512 and 256 hidden units. In addition to Gaussian-ADVI and S-ADVI, we included planar flow with 10 transformations and radial flow with 40 transformations. Each experiment was repeated over multiple random seeds, and the reported results are summarized as mean $\pm$ standard deviation.

\subsection{Additional results of Posterior Approximation}
\label{sec:rough}
This section presents additional experimental results to illustrate the impact of the hyperparameter on the posterior approximation. Table \ref{tab:example} presents the details of simulation settings.

\begin{table}[h!]
 \caption{Description of inference models for Cases 1 -- 4. }
    \centering
    \begin{tabular}{cll} 
        \hline \hline
       & Prior distribution & Likelihood function \\ 
        \hline
       Case 1 &$\bm{z} \sim \mathcal{N}\left((0.2,0.2)^{\top},\left(\begin{smallmatrix}
0.5 & 0\\
0 & 0.5
\end{smallmatrix}\right)\right)$\rule{0pt}{1.2em} &$\bm{x|z}\sim \mathcal{N}\left(\bm{z},\left(\begin{smallmatrix}
1 & 0.9\\
0.9 & 1
\end{smallmatrix}\right)\right)$ \\ 
        \hline
       Case 2 & 
$ (\mu,\tau)\sim \text{Normal-Gamma}(0,5,1,2,2)$\rule{0pt}{1.2em} & $x|\mu,\tau\sim \mathcal{N} (\mu, \tau^{-1})$ \\ 
        \hline
    Case 3& $\bm{z} \sim \mathcal{N}\left((0,0)^{\top},\left(\begin{smallmatrix}
1 & 0\\
0 & 1
\end{smallmatrix}\right)\right)$\rule{0pt}{1.2em}& $x|\bm z \sim \mathcal{N}(z_1+z^{2}_{2},1)$ \\
\hline
Case 4 & \makecell{$\bm z  \sim\sum_{k=1}^2 0.5 \mathcal{N}(\bm{\mu}_k,0.5\bm{I})$ \\ $\bm{\mu}_1 = (0.9,0.9)^{\top}$ and $\bm{\mu}_2 = (-0.9,-0.9)^{\top}$}&$\bm{x|z}\sim \mathcal{N}\left(\bm{z},\left(\begin{smallmatrix}
1 & 0.5\\
0.5 & 1
\end{smallmatrix}\right)\right)$
\\
    \hline 
    \end{tabular}
    \label{tab:example}
\end{table}

In Figure \ref{fig:rough}, we demonstrate how the roughness penalty parameter $\lambda$ will affect the posterior approximation of Case 4 in Section 6.1. When $\lambda$ is relatively large, the penalty term carries greater weight, encouraging the approximated surface to be smoother with less fluctuation. Conversely, as 
$\lambda$ decreases, the approximated surface becomes more flexible, allowing it to capture the multimodal nature of the true posterior distribution. Thus, the roughness parameter $\lambda$ plays a crucial role in balancing bias and variance in the approximation.
\begin{figure}
\centering\includegraphics[width=0.9\columnwidth]{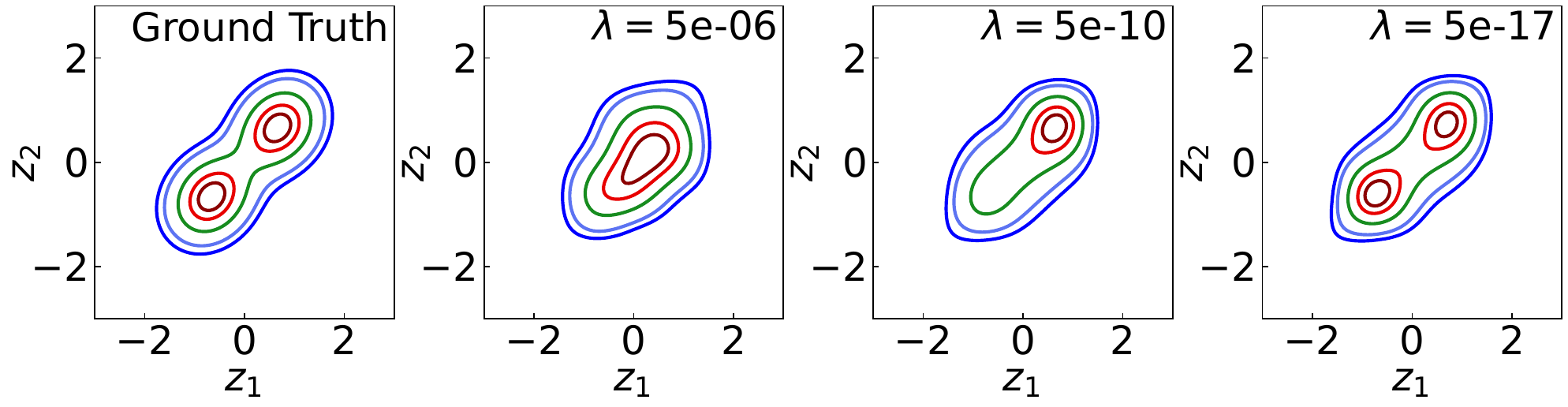}
    \caption{Effect of Hyperparameters in Posterior 
Approximations in Case 4.}
    \label{fig:rough}
\end{figure}

\begin{figure}
\centering\includegraphics[width=0.9\columnwidth]{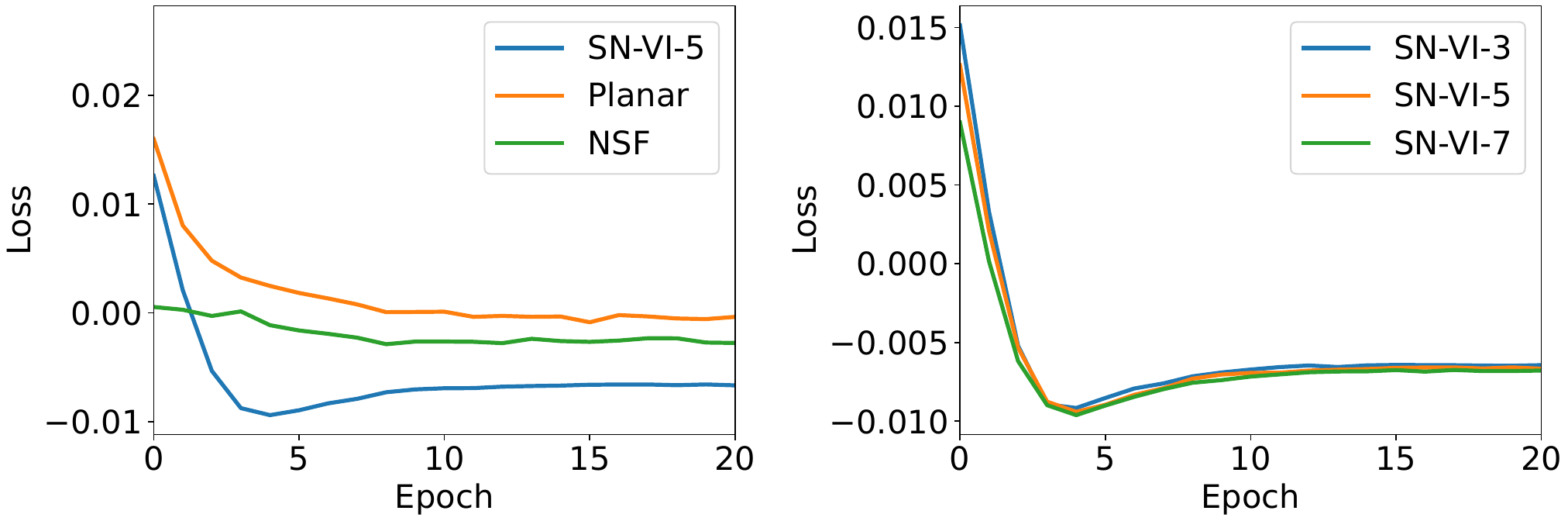}
    \caption{Computation cost comparisons of Case 2. The first 20 epochs are shown for visualization. The number after SN-VI represents number of knots.}
    \label{fig:cost}
\end{figure}

\subsection{Additional Results of Single Column Classification}
\label{sec:sup_clf}

\begin{figure}[!htbp]
\centering\includegraphics[width=1.0\columnwidth]{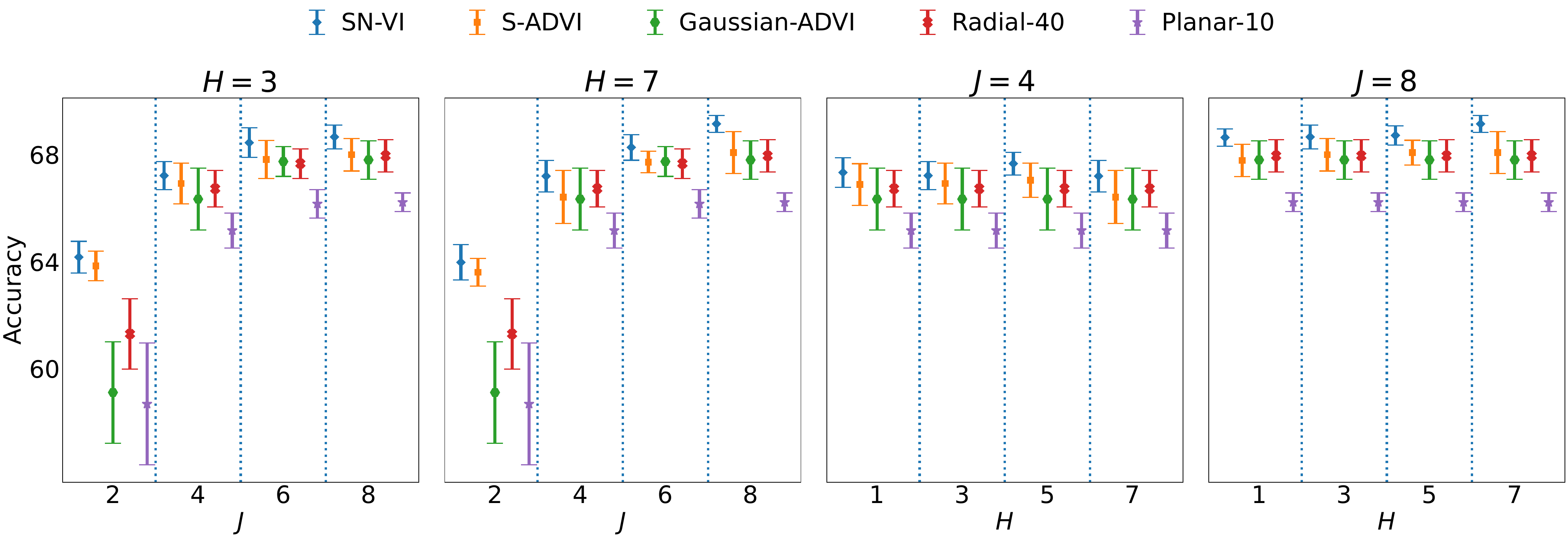}
    \caption{Single column classification performance comparison with different combinations of latent variables $J$ and interior knots $H$ varying number latent variables for FMNIST.}
    \label{fig:fmnist_clf}
\end{figure}
In this section, we include the additional results of single column classification. Figure \ref{fig:clf} presents and example of a single-column input sample enclosed by red dotted lines. We utilize an two-layer MLP with 512
and 256 hidden nodes and ELU activation function 
\citep{clevert2015fast} as the encoder. For the classification task, we apply the Variational Information Bottleneck (VIB) \citep{alemi2017deep} sharing a structure similar to VAE with the definition of the decoder as a classifier. The output layer estimates the parameters $\widehat{\phi}$ for the approximated posterior distributions. For both MNIST and FMNIST, we use a batch size of 64 and train for 50 epochs. The Adam optimizer is applied with an initial learning rate of 0.001, which decreases by 5\% at each epoch. 

\begin{figure}[!htbp]
    \centering
    \includegraphics[width=0.25\columnwidth]{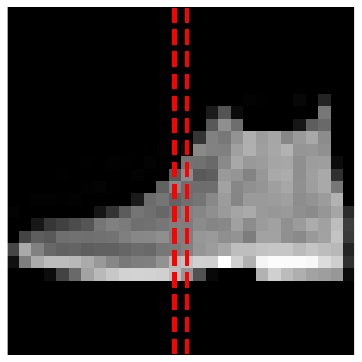}
    \caption{Example of a single-column input sample enclosed by red dotted lines.}
    \label{fig:clf}
\end{figure}

For SN-VI, we train the models with the combination of $H=\{1,3,5,7\}$ and $J=\{2,4,6,8\}$ to demonstrate the effect of hyperparameters. We also include two additional normalizing flow models, Planar-10 and Radial-40 (10 and 40 flows), in the comparison. All models are trained 10 times repeatedly, and the means and standard deviations are recorded. Figure \ref{fig:fmnist_clf} and Figure \ref{fig:mnist_clf} show the full results of the single column classification performance comparison for FMNIST and MNIST, where the error bar indicates the standard deviations. 

Both figures demonstrate that the accuracy of SN-VI improves as the number of latent variables 
$J$ and interior knots 
$H$ increase. When the number of latent variables is set to 4, SN-VI benefits significantly from adding more interior knots. However, as the number of latent variables becomes large, the effect of additional interior knots becomes negligible. Also, SN-VI outperforms normalizing flow methods and Gaussian-ADVI, especially when $J$ is small. S-ADVI exhibits similar performance to SN-VI when $J$ is small, as the dependency remains weak with fewer latent variables. However, as $J$ increases, SN-VI preserves a stronger dependent structure resulting in much better performance than S-ADVI based on mean-field assumption.

\begin{figure}[!htbp]
\centering\includegraphics[width=1.0\columnwidth]{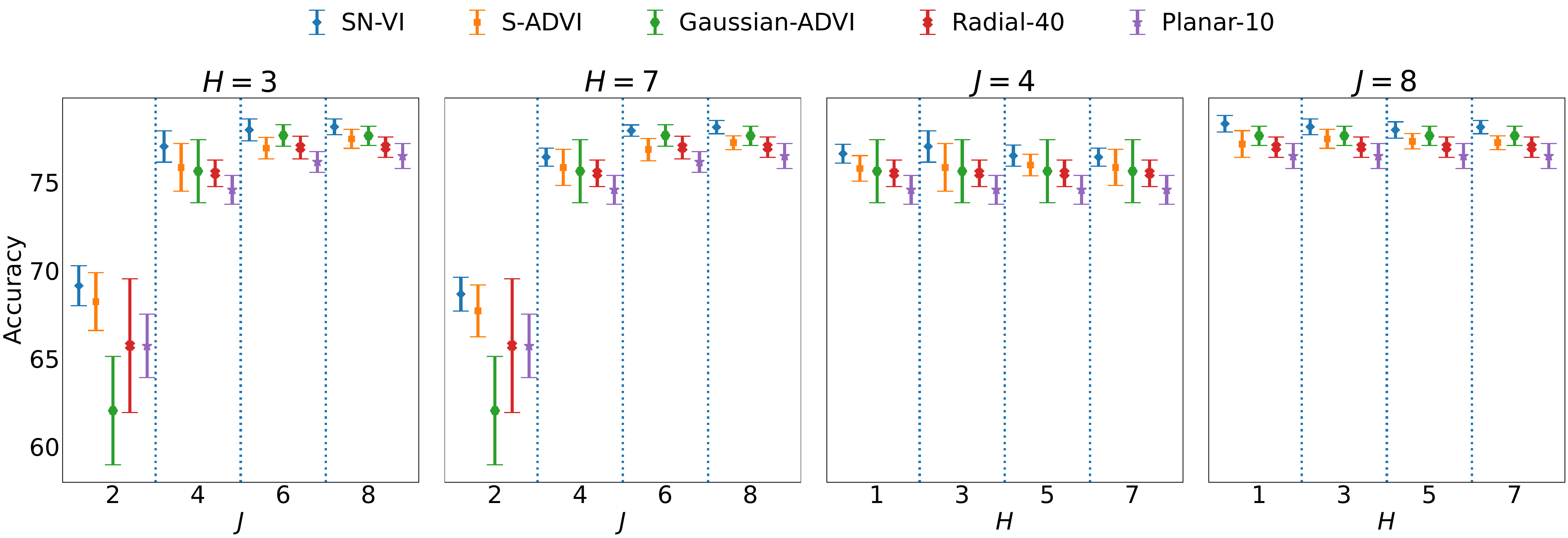}
    \caption{Single column classification performance comparison with different combinations of latent variables $J$ and interior knots $H$ varying number latent variables for MNIST.}
    \label{fig:mnist_clf}
\end{figure}

\subsection{Additional Results of Image Reconstruction}
\label{sec:recon}
In this section, we aim to demonstrate the capability of SN-VI in reconstructing image datasets by KMNIST \citep{clanuwat2018deep}, which are more complicated data than MNIST and FMNIST. To increase the difficulty of reconstruction and evaluate the capability of SN-VI, we introduce Gaussian noises into the dataset. Gaussian noises $\bm \xi$ are generated from $N(0,\sigma_{\bm \xi})$ and corrupted observed data $\bm x_{\text{corrp}} = \bm x+\bm \xi$. The corrupted level is determined by the magnitude of $\sigma_{\bm \xi}$. By applying the same neural network structure in Supplement Section \ref{sec:sup_clf} and replacing the decoder with a two-layer MLP with 512 and 256 hidden units, we repeat 10 times training for SN-VI, S-ADIV, Gaussian-ADVI, Radial-40, and Planar-10. The means and standard deviations of negative log-likelihood, $-\log(\bm x| \bm z)$, is reported, where we assume $\bm x|\bm z$ follows Gaussian distribution and set $\sigma_{\bm \xi} = 0.25$. We repeat 10 times training for SN-VI, S-ADIV, Gaussian-ADVI, Radial-40, and Planar-10.   
Figure \ref{fig:kmnist_recon_neg} demonstrates the performance of all methods in image reconstruction of KMNIST. Gaussian-ADVI has relatively limited performance due to the mean-field and Gaussian distribution assumptions. As $J$ increases, SN-VI performs better than normalizing flows by preserving the dependent structure and flexible posterior approximation.

\begin{figure}[!htbp]
\centering\includegraphics[width=1.0\columnwidth]{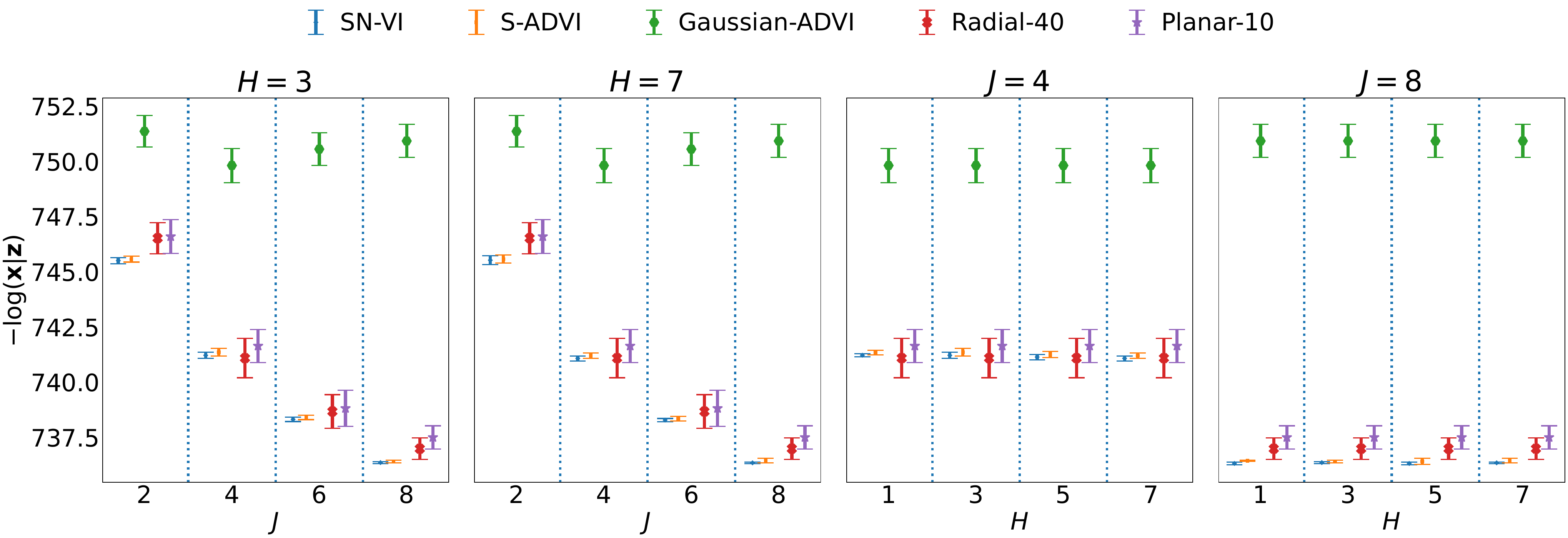}
    \caption{Reconstruction of corrupted KMNIST data by varying latent variables $J$ and interior knots $H$.}
    \label{fig:kmnist_recon_neg}
\end{figure}
\begin{figure}[!htbp]
\centering\includegraphics[width=0.6\columnwidth]{Figure/kmnist_recon__2_.pdf}
    \caption{KMNIST image reconstruction examples from the test dataset.}
    \label{fig:kmnist_recon_eg}
\end{figure}

\subsection{Sensitivity Analysis of the Annealing Schedule}
Annealing is important in the proposed method to ensure most of the combinations of coefficients are explored. In this section, we evaluate the influences of annealing functions on model performance. We consider two annealing functions in this experiment:
\begin{description}
  \item[Exponential] $\tau(c)=\tau_1+ (\tau_0-\tau_1)e^{-c/\eta}$,
  \item[Linear] 
  \begin{equation*}
    \tau(c)=
    \begin{cases}
      \tau_0 - \frac{\tau_0 - \tau_1}{\eta'}c & \text{if $c \le \eta'$}\\
      \tau_1 & \text{if $c > \eta'$}    \end{cases}.
  \end{equation*}
\end{description}
The parameter $\eta$ in the exponential annealing schedule controls the rate at which the temperature decreases, whereas $\eta'$ in the linear schedule governs both the rate of decrease and the point at which the temperature stabilizes. Figure \ref{fig:temperature} illustrates the two annealing schedules across training epochs with parameters $\tau_0 = 1$, $\tau_1 = 0.05$, $\eta = 4$, and $\eta' = 10$, which are used in both our simulation studies and real data applications.

\begin{figure}
    \centering
    \includegraphics[width=0.5\linewidth]{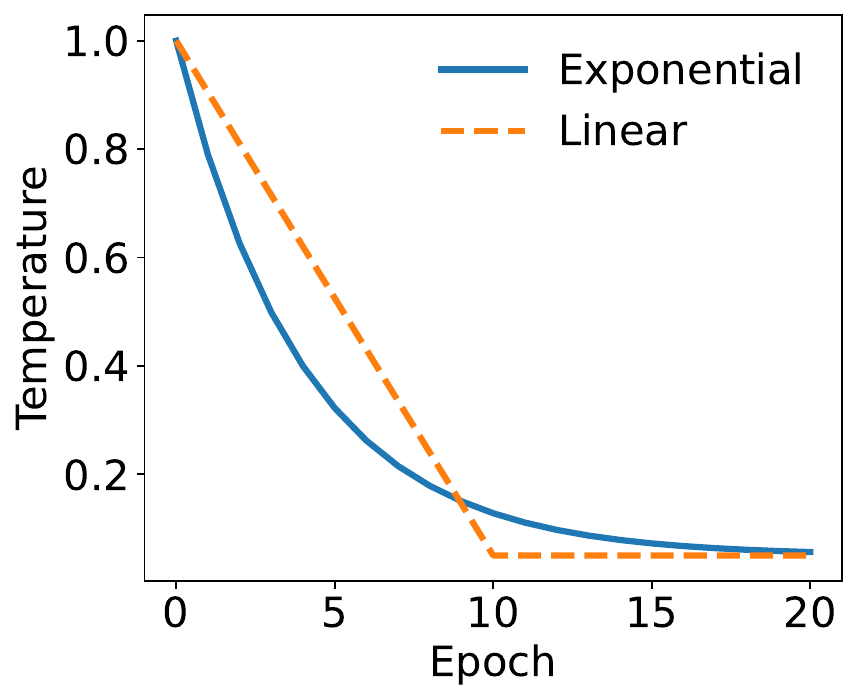}
    \caption{Example of two annealing functions.}
    \label{fig:temperature}
\end{figure}

In this experiment, we fix $\eta = 4$ and $\eta' = 10$, and consider two temperature settings: $(\tau_0, \tau_1) = (1, 0.05)$ and $(1.2, 0.07)$. We evaluate these annealing strategies on Case 2 and Case 3 in Section~6.1 and compare their performance using the RISE metric. To reduce variability due to random initialization, each configuration is trained over 20 independent runs. The results are summarized in Table \ref{tab:annealing}.
\begin{table}[]
    \centering
    \caption{Comparison of annealing strategies under different temperature settings \((\tau_0, \tau_1)\) across Case 2 and Case 3 using the RISE metric.}
    \begin{tabular}{lcccc}
\hline
 & \multicolumn{2}{c}{\(\tau_0 = 1, \tau_1 = 0.05\)} & \multicolumn{2}{c}{\(\tau_0 = 1.2, \tau_1 = 0.07\)} \\
\cline{2-5}
 & Exponential & Linear & Exponential & Linear \\
\hline
Case 2 & $0.090 \pm 0.016$ & $0.095 \pm 0.018$ & $0.098 \pm 0.015$ & $0.096 \pm 0.013$ \\
Case 3 & $0.085 \pm 0.020$ & $0.087 \pm 0.024$ & $0.086 \pm 0.013$ & $0.091 \pm 0.013$ \\
\hline
\end{tabular}
    \label{tab:annealing}
\end{table}

\subsection{Detailed Results for Figure 7}
The numerical results in Figure 7 are in Table \ref{tab:fmnist_full} and Table \ref{tab:KMNIST_FULL}.

\begin{table}
    \centering
    \caption{Classification accuracy (mean $\pm$ std) across different $J$ and $H$.}
    
    \begin{tabular}{llcccc}
\hline
Method & $H$ & $J=2$ & $J=4$ & $J=6$ & $J=8$ \\
\hline

\multirow{4}{*}{SN-VI}
& 1 & 63.50 $\pm$ 0.52 & 67.34 $\pm$ 0.55 & 68.01 $\pm$ 0.51 & 68.65 $\pm$ 0.32 \\
& 3 & 64.18 $\pm$ 0.59 & 67.23 $\pm$ 0.53 & 68.46 $\pm$ 0.55 & 68.67 $\pm$ 0.44 \\
& 5 & 64.22 $\pm$ 0.43 & 67.68 $\pm$ 0.42 & 68.01 $\pm$ 0.41 & 68.73 $\pm$ 0.36 \\
& 7 & 63.98 $\pm$ 0.66 & 67.21 $\pm$ 0.59 & 68.28 $\pm$ 0.48 & 69.16 $\pm$ 0.31 \\
\hline

\multirow{4}{*}{S-ADVI}
& 1 & 62.34 $\pm$ 1.10 & 66.89 $\pm$ 0.78 & 67.46 $\pm$ 0.63 & 67.79 $\pm$ 0.61 \\
& 3 & 63.85 $\pm$ 0.56 & 66.93 $\pm$ 0.76 & 67.83 $\pm$ 0.71 & 68.01 $\pm$ 0.61 \\
& 5 & 63.71 $\pm$ 0.93 & 67.05 $\pm$ 0.64 & 67.38 $\pm$ 0.58 & 68.09 $\pm$ 0.46 \\
& 7 & 63.61 $\pm$ 0.52 & 66.43 $\pm$ 0.99 & 67.73 $\pm$ 0.40 & 68.09 $\pm$ 0.78 \\
\hline

Gaussian-ADVI
& -- & 59.12 $\pm$ 1.90 & 66.35 $\pm$ 1.16 & 67.75 $\pm$ 0.56 & 67.81 $\pm$ 0.72 \\
Radial-40
& -- & 61.31 $\pm$ 1.31 & 66.74 $\pm$ 0.68 & 67.68 $\pm$ 0.55 & 67.97 $\pm$ 0.60 \\
Planar-10
& -- & 58.69 $\pm$ 2.28 & 65.18 $\pm$ 0.65 & 66.17 $\pm$ 0.53 & 66.23 $\pm$ 0.35 \\
\hline
\end{tabular}
\label{tab:fmnist_full}
\end{table}

\begin{table}[!ht]
    \centering
    \caption{Negative conditional log-likelihood, $\log p(x|z)$, (mean $\pm$ std).}
\begin{tabular}{llcccc}
\hline
Method & $H$ & $J=2$ & $J=4$ & $J=6$ & $J=8$ \\
\hline

\multirow{4}{*}{SN-VI}
& 1 & 745.46 $\pm$ 0.21 & 741.23 $\pm$ 0.07 & 738.30 $\pm$ 0.11 & 736.33 $\pm$ 0.06 \\
& 3 & 745.51 $\pm$ 0.14 & 741.22 $\pm$ 0.14 & 738.33 $\pm$ 0.10 & 736.37 $\pm$ 0.04 \\
& 5 & 745.46 $\pm$ 0.14 & 741.15 $\pm$ 0.12 & 738.25 $\pm$ 0.12 & 736.33 $\pm$ 0.06 \\
& 7 & 745.53 $\pm$ 0.20 & 741.09 $\pm$ 0.11 & 738.29 $\pm$ 0.07 & 736.36 $\pm$ 0.04 \\
\hline

\multirow{4}{*}{S-ADVI}
& 1 & 745.48 $\pm$ 0.22 & 741.35 $\pm$ 0.11 & 738.47 $\pm$ 0.14 & 736.45 $\pm$ 0.03 \\
& 3 & 745.58 $\pm$ 0.14 & 741.37 $\pm$ 0.17 & 738.41 $\pm$ 0.10 & 736.42 $\pm$ 0.06 \\
& 5 & 745.62 $\pm$ 0.17 & 741.26 $\pm$ 0.14 & 738.45 $\pm$ 0.10 & 736.42 $\pm$ 0.14 \\
& 7 & 745.58 $\pm$ 0.18 & 741.21 $\pm$ 0.13 & 738.36 $\pm$ 0.10 & 736.46 $\pm$ 0.11 \\
\hline

Gaussian-ADVI
& -- & 751.37 $\pm$ 0.71 & 749.82 $\pm$ 0.77 & 750.56 $\pm$ 0.74 & 750.93 $\pm$ 0.75 \\
Radial-40
& -- & 746.53 $\pm$ 0.70 & 741.10 $\pm$ 0.89 & 738.68 $\pm$ 0.76 & 737.00 $\pm$ 0.49 \\
Planar-10
& -- & 746.61 $\pm$ 0.76 & 741.65 $\pm$ 0.75 & 738.83 $\pm$ 0.82 & 737.51 $\pm$ 0.52 \\
\hline
\end{tabular}
\label{tab:KMNIST_FULL}
\end{table}

\subsection{Boundary Concentration Diagnostics}
we provide diagnostic checks to assess whether the compact-support assumption induces undesirable concentration of posterior mass near the boundaries. Specifically, under the setting of Case 1 in Section 6.1, we evaluate the probabilities $P(\epsilon_j < 0.02)$ and $P(\epsilon_j > 0.98)$ as measures of boundary behavior. To obtain an unbiased assessment across the input domain, we randomly generate 1,000 covariate vectors $\bs{x}$ from $[0,1]\times[0,1]$ and compute the corresponding probabilities under the conditional variational posterior $p(\epsilon_j \mid \bs{x})$. The resulting distributions of these probabilities are reported in Figure  \ref{fig:bound}. Figure \ref{fig:bound} shows that the estimated boundary probabilities are concentrated near zero, suggesting that the compact-support assumption does not induce substantial boundary concentration.
\begin{figure}
    \centering
    \includegraphics[width=1.0\linewidth]{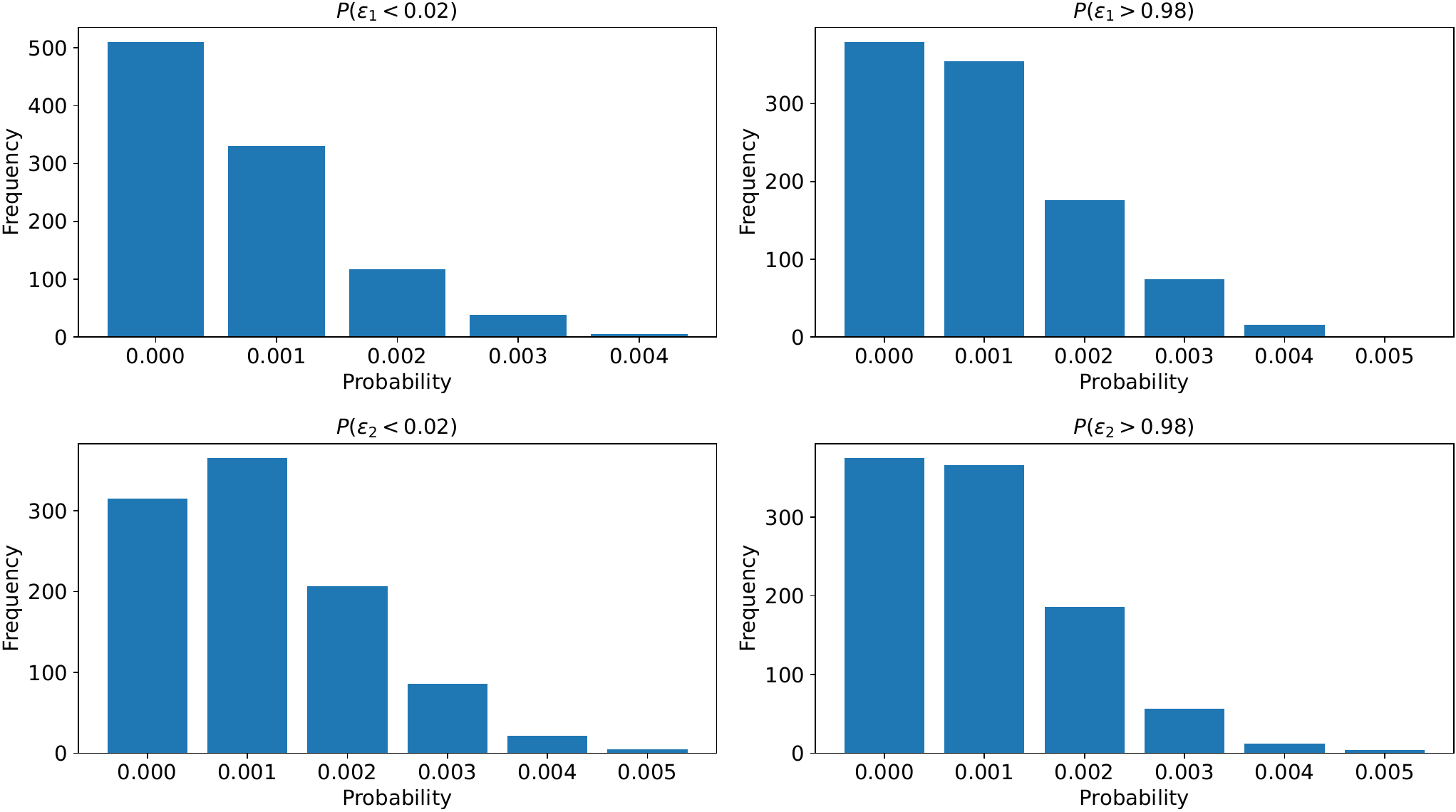}
    \caption{Empirical distributions of boundary probabilities under the SN-VI
variational posterior of Case 1 in Section 6.1.}
    \label{fig:bound}
\end{figure}
\subsection{Scalability Comparison}
To empirically assess scalability, we conducted additional experiments by varying the sample size while keeping the latent dimension, network architecture, and optimization settings fixed. Figure \ref{fig:scalability} shows that the runtime of all methods increases approximately linearly as the sample size grows, indicating favorable scalability with respect to the number of observations. Although SN-VI incurs additional computational cost due to the evaluation of grouped spline densities, its runtime remains approximately linear across all sample sizes considered. Combined with the simulation studies and real-data applications presented in Section 6, where SN-VI consistently demonstrates strong predictive and inferential performance, these results suggest that the proposed method scales well to larger datasets without sacrificing the accuracy and flexibility of the variational approximation. The empirical findings therefore support the practical applicability of SN-VI in both moderate- and large-scale settings.
\begin{figure}
    \centering
    \includegraphics[width=0.6\linewidth]{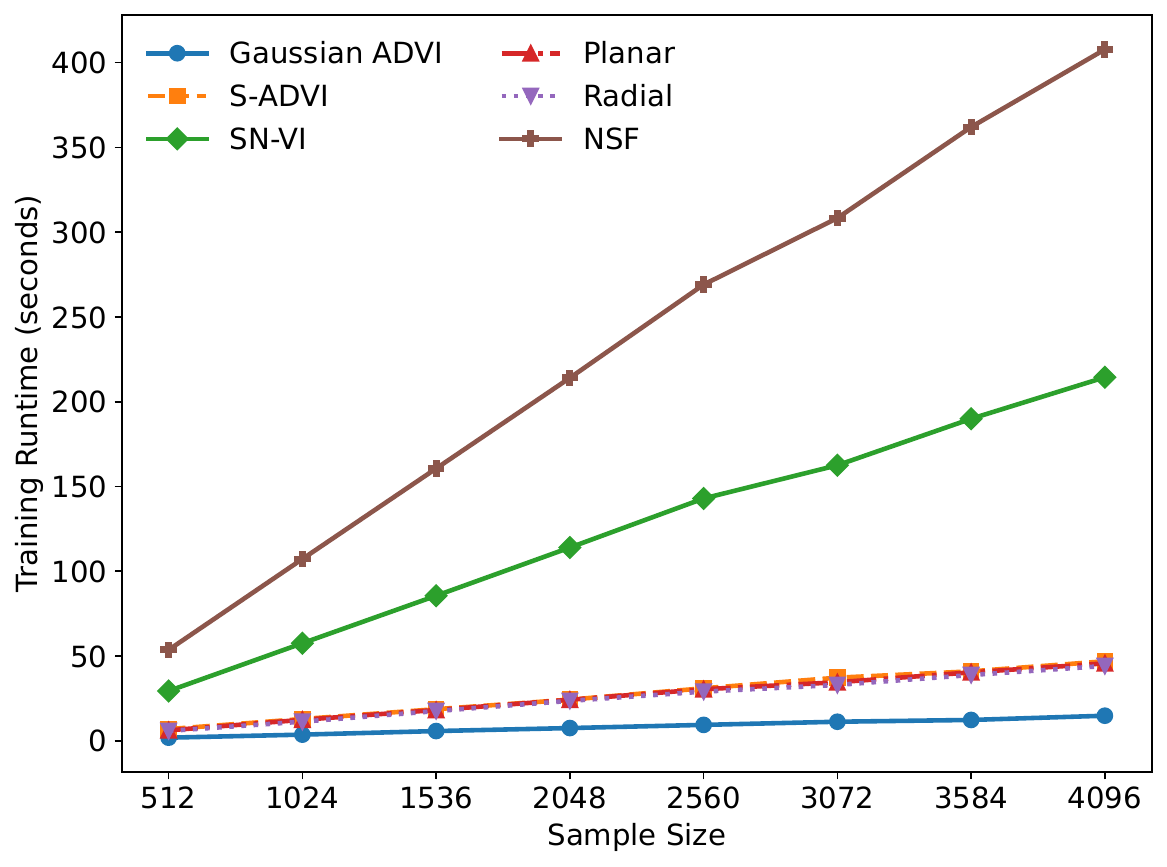}
    \caption{Training runtime versus sample size under the settings of Section~6.1.}
    \label{fig:scalability}
\end{figure}

\newpage

\end{document}